\documentclass[letterpaper, 10pt]{article} % arxiv用

%%%%% NEW MATH DEFINITIONS %%%%%

\usepackage{amsmath,amsfonts,bm}
\usepackage{mathtools}

% custom
\NewDocumentCommand{\Cot}{oooo}{%
    \mathsf{CoT}\IfNoValueF{#1}{
    	[#1%
  			\IfNoValueF{#2}{ ,#2}%
    		\IfNoValueF{#3}{ ,#3}
    		\IfNoValueF{#4}{ ,#4}
%    			\IfInteger{#3}{,#3}{,#3}}
    	]}%
}
\newcommand{\id}{\mathsf{id}}

\newcommand{\Val}{\rmV}
\newcommand{\Key}{\rmK}
\newcommand{\Query}{\rmQ}

\newcommand{\key}{\mathbf{k}}
\newcommand{\query}{\mathbf{q}}

\newcommand{\AND}{\mathsf{AND}}
\newcommand{\OR}{\mathsf{OR}}
\newcommand{\NOT}{\mathsf{NOT}}

\newcommand{\FF}{\mathbf{FF}}

 % mathtfだとかぶってしまう

\newcommand{\prob}[1]{\textsc{#1}}
% Norms and stuff.
%
%

\newcommand{\interleave}[2]{{#1}^\frown{#2}}
\newcommand{\bin}{\mathsf{bin}}
\newcommand{\sbin}{\mathsf{sbin}}
\newcommand{\inner}[2]{\left\langle #1,#2 \right\rangle}
\newcommand{\rds}[1]{\left[#1\right]_s}

\newcommand{\LOOP}{\mathsf{LOOP}}%{\mathsf{nLOOP}}

\newcommand{\CoT}{\mathsf{CoT}}%{\mathsf{nCoT}}

\newcommand{\NC}{\mathsf{NC}}
\newcommand{\TC}{\mathsf{TC}}

\newcommand{\poly}{\mathsf{poly}}

\newcommand{\SIZE}{\mathsf{SIZE}}

\newcommand{\MAJORITY}{\mathsf{MAJORITY}}

% Mark sections of captions for referring to divisions of figures

% Highlight a newly defined term

% Figure reference, lower-case.

% Figure reference, capital. For start of sentence

% Section reference, lower-case.

% Section reference, capital.

% Reference to two sections.

% Reference to three sections.

% Reference to an equation, lower-case.
\def\eqref#1{equation~\ref{#1}}
% Reference to an equation, upper case

% A raw reference to an equation---avoid using if possible

% Reference to a chapter, lower-case.

% Reference to an equation, upper case.

% Reference to a range of chapters

% Reference to an algorithm, lower-case.

% Reference to an algorithm, upper case.

% Reference to a part, lower case

% Reference to a part, upper case

\def\1{\bm{1}}

% Random variables

% rm is already a command, just don't name any random variables m

% Random vectors

% Elements of random vectors

% Random matrices

\def\rmK{{\mathbf{K}}}

\def\rmQ{{\mathbf{Q}}}

\def\rmV{{\mathbf{V}}}

% Elements of random matrices

% Vectors

\def\ve{{\bm{e}}}

\def\vh{{\bm{h}}}

\def\vz{{\bm{z}}}

% Elements of vectors

% Matrix

\def\mW{{\bm{W}}}

% Tensor
\DeclareMathAlphabet{\mathsfit}{\encodingdefault}{\sfdefault}{m}{sl}
\SetMathAlphabet{\mathsfit}{bold}{\encodingdefault}{\sfdefault}{bx}{n}

% Graph

\def\gC{{\mathcal{C}}}

\def\gL{{\mathcal{L}}}

\def\gV{{\mathcal{V}}}

% Sets

% Don't use a set called E, because this would be the same as our symbol
% for expectation.

% Entries of a matrix

% entries of a tensor
% Same font as tensor, without \bm wrapper

% The true underlying data generating distribution

% The empirical distribution defined by the training set

% The model distribution

% Stochastic autoencoder distributions

 % Laplace distribution

\newcommand{\softmax}{\sigma_s}%{\mathrm{softmax}}

% Wolfram Mathworld says $L^2$ is for function spaces and $\ell^2$ is for vectors
% But then they seem to use $L^2$ for vectors throughout the site, and so does
% wikipedia.

 % See usage in notation.tex. Chosen to match Daphne's book.

% if you need to pass options to natbib, use, e.g.:
%     \PassOptionsToPackage{numbers, compress}{natbib}
% before loading neurips_2025

% ready for submission
%\usepackage{neurips_2025}

% arxiv用
\usepackage{natbib}
\usepackage[margin=1in]{geometry}

% to compile a preprint version, e.g., for submission to arXiv, add add the
% [preprint] option:
%\usepackage[preprint]{neurips_2025}

% to compile a camera-ready version, add the [final] option, e.g.:
%     \usepackage[final]{neurips_2025}

% to avoid loading the natbib package, add option nonatbib:
%    \usepackage[nonatbib]{neurips_2025}

\usepackage[utf8]{inputenc} % allow utf-8 input
\usepackage[T1]{fontenc}    % use 8-bit T1 fonts
\usepackage{hyperref}       % hyperlinks
\usepackage{url}            % simple URL typesetting
\usepackage{booktabs}       % professional-quality tables
\usepackage{amsfonts}       % blackboard math symbols
\usepackage{nicefrac}       % compact symbols for 1/2, etc.
\usepackage{microtype}      % microtypography
\usepackage{xcolor}         % colors
\usepackage{microtype}
\usepackage{graphicx}
\usepackage{subcaption}
\usepackage{hyperref}
\usepackage{here}
\usepackage{cancel}
\usepackage{amsmath}
\usepackage{amssymb}
\usepackage{mathtools}
\usepackage{amsthm}
\usepackage{multirow}
\usepackage{bbding}
\usepackage{tabularx}
\usepackage{hhline}
\usepackage{enumitem}
\usepackage{cleveref}
\usepackage{bbm}

\usepackage{algorithm}
\usepackage{algpseudocode}

%%%%%%%%%%%%%%%%%%%%%%%%%%%%%%%%
% THEOREMS
%%%%%%%%%%%%%%%%%%%%%%%%%%%%%%%%
% For theorems and such
\usepackage{amsmath}
\usepackage{amssymb}
\usepackage{mathtools}
\usepackage{amsthm}
\usepackage{thmtools}
\usepackage{thm-restate}

\theoremstyle{plain}
\newtheorem{theorem}{Theorem}[section]
\newtheorem{proposition}[theorem]{Proposition}
\newtheorem{lemma}[theorem]{Lemma}

\theoremstyle{definition}
\newtheorem{definition}[theorem]{Definition}
\newtheorem{assumption}[theorem]{Assumption}
\theoremstyle{remark}

\newtheorem*{theorem*}{Theorem}

\title{To CoT or To Loop? A Formal Comparison Between Chain-of-Thought and Looped Transformers}

% The \author macro works with any number of authors. There are two commands
% used to separate the names and addresses of multiple authors: \And and \AND.
%
% Using \And between authors leaves it to LaTeX to determine where to break the
% lines. Using \AND forces a line break at that point. So, if LaTeX puts 3 of 4
% authors names on the first line, and the last on the second line, try using
% \AND instead of \And before the third author name.

% arxiv用！
\author{%
Kevin Xu\thanks{kevinxu@g.ecc.u-tokyo.ac.jp} \quad Issei Sato\thanks{sato@g.ecc.u-tokyo.ac.jp} \\
The University of Tokyo
}

\if0
\author{%
  Kevin Xu \\
  Department of Computer Science\\
  The University of Tokyo\\
  \texttt{kevinxu@g.ecc.u-tokyo.ac.jp} \\
  \And
  Issei Sato \\
  Department of Computer Science\\
  The University of Tokyo\\
  \texttt{sato@g.ecc.u-tokyo.ac.jp} \\
  % \AND
  % Coauthor \\
  % Affiliation \\
  % Address \\
  % \texttt{email} \\
  % \And
  % Coauthor \\
  % Affiliation \\
  % Address \\
  % \texttt{email} \\
  % \And
  % Coauthor \\
  % Affiliation \\
  % Address \\
  % \texttt{email} \\
}
\fi

\begin{document}

\maketitle

\begin{center}
    \color{red}
    \Large This paper is outdated.\\
    For the latest version, please visit 
\href{https://arxiv.org/abs/2509.25239}{arXiv:2509.25239}.
\end{center}

\begin{abstract}
Chain-of-Thought (CoT) and Looped Transformers have been shown to empirically improve performance on reasoning tasks and to theoretically enhance expressivity by recursively increasing the number of computational steps. However, their comparative capabilities are still not well understood.
In this paper, we provide a formal analysis of their respective strengths and limitations. We show that Looped Transformers can efficiently simulate parallel computations for deterministic tasks, which we formalize as evaluation over directed acyclic graphs. In contrast, CoT with stochastic decoding excels at approximate inference for compositional structures, namely self-reducible problems.
These separations suggest the tasks for which depth-driven recursion is more suitable, thereby offering practical cues for choosing between reasoning paradigms.
%and those where probabilistic self-correction may be advantageous,
%Code is available at~\url{https://github.com/kevin671/cot-vs-loop}.
\end{abstract}
% self-correction: LLMs refine their responses based on feedback to their previous outputs

\section{Introduction}
Transformer-based large language models (LLMs)\citep{vaswani2017attention} have achieved impressive performance across a wide range of tasks and have recently been extended to complex reasoning tasks.
Rather than directly predicting final answers, prompting LLMs to generate intermediate reasoning steps, known as \emph{Chain-of-Thought (CoT)} prompting\citep{wei2022chain}, has been shown to significantly enhance reasoning capabilities.
This has given rise to a broader paradigm in which inference-time compute is increasingly leveraged to support complex reasoning~\citep{welleck2024from}.

This naturally raises the question: \emph{why is CoT prompting effective for complex reasoning tasks?}
Recent work has addressed this question by framing reasoning as a computational problem and analyzing its computational complexity.
\cite{feng2023towards} showed that certain classes of problems that cannot be solved with standard Transformers become solvable with CoT.
Subsequent studies analyzed the complexity class of CoT with respect to the number of steps, compared to models such as Turing machines~\citep{merrill2024the}, Boolean circuits~\citep{li2024chain}, and probabilistic Turing machines (PTMs)~\citep{nowak2024}. In summary, these findings underscore that the strength of CoT lies in its ability to increase the effective number of computational steps.

Recently, \emph{Looped Transformer (Looped TF)} was proposed as an architecture that recursively applies fixed-size Transformer layers by feeding their outputs back into their inputs~\citep{dehghani2018universal}.
Similar to CoT, Looped TF increases the number of computational steps.
However, they achieve this increase implicitly via architectural recursion rather than through explicit token-level decoding.
The recursive structure has been shown to enhance expressivity: theoretical results establish Turing completeness\citep{giannou2023looped} and universal approximation~\citep{xu2025on}, while empirical work demonstrates improved performance on reasoning tasks~\citep{saunshi2025reasoning}.

The existence of two distinct approaches raises a fundamental question:
\begin{center}
\emph{What is the separation between Chain-of-Thought and Looped Transformer?}
\end{center}
This question has motivated comparisons of their expressivity \emph{under the same number of steps or loops}. 
\citet{saunshi2025reasoning} showed an expressivity inclusion of CoT within TF, and \citet{merrill2025little} showed a separation favoring Looped TF in the logarithmic regime.
While these results suggest a superiority of looped models over CoT, several important questions remain unanswered:
\begin{itemize}[label={}, leftmargin=*]
    \item \emph{Does separation exist beyond the logarithmic regime?} \textrightarrow\ Affirmative; see \Cref{sec:det}.
    \item \emph{Are Looped Transformers universally more expressive than CoT?} \textrightarrow\ Negative; see \Cref{sec:prob}.
\end{itemize}

In this paper, we aim to clarify the distinct characteristics of Looped TF and CoT by analyzing two problem settings, thereby addressing these questions.
\begin{itemize}[leftmargin=*]% , nosep]
    \item \textbf{Looped TF for Parallel Computation:} For deterministic settings, we analyze computational structures as directed acyclic graphs and reveal a fundamental difference: CoT is inherently sequential, whereas Looped TF enables parallel solutions. Furthermore, we establish a separation under the same polylogarithmic bound on the number of steps or loops.
    \item \textbf{Approximate Inference with CoT:} We study the problem of approximately sampling from the uniform distribution over structured solutions, formalized via \emph{self-reducibility}~\citep{Schnorr1976}. The complexity of this task aligns with that of approximate counting, highlighting the advantage of probabilistic CoT over deterministic Looped TF. Moreover, we show that CoT is not only expressive, but that even a \emph{weak} CoT model, when combined with \emph{self-consistency}, can successfully solve the task.
\end{itemize}

%%%%%%%%%%%%%%%%%%%%%%%%%%%%%%%%%%%%%%%%%%%%%%%%%%%%%%%%%%%%
\section{Preliminaries and Notation}\label{sec:prel}

%\paragraph{Transformer-based Architectures}  
We consider models based on Transformer blocks that are applied recursively. Specifically, we analyze two architectures that differ in how intermediate computations are structured and reused across steps. %Before introducing these architectures, we briefly outline our assumptions about the Transformer components.
We assume \textbf{saturated attention}, where attention weights are uniformly assigned to the tokens with the highest scores. For the feedforward computation, we consider either a standard fully connected layer or a mixture-of-experts (MoE) layer~\citep{shazeer2017, csordas2024moeut}.  
Formal definitions of the Transformer block are deferred to \Cref{app:tf}.

\subsection{Chain-of-Thought}
A \emph{decoder-only Transformer} with causal masking is used for \emph{next-token prediction}. In CoT, the model is allowed to generate intermediate steps before producing the final answer. At each step, it decodes a single token conditioned on the entire current sequence, appends it to the input, and proceeds to the next step. The intermediate computation steps are explicitly represented as tokens.
\begin{definition}[Informal: CoT]
Let \( \gV \) be a vocabulary, and let \( f : \gV^* \to \gV \) denote an \emph{autoregressive} decoder-only Transformer, where decoding is performed either deterministically or stochastically as a next-token prediction.
Given an input sequence \( x = (x_1, \dots, x_n) \in \gV^n \), define the initial state as \(f^0(x) \coloneq x.\)
Then, for each step \( k \), the CoT is defined recursively as
\begin{equation}
f^{k+1}(x) \coloneq f^k(x) \cdot f(f^k(x)),
\end{equation}
where \( \cdot \) denotes sequence concatenation.  
The final output after \( T(n) \) steps is defined as the last \( m \) tokens of \( f^{T(n)}(x) \), corresponding to the answer.
\end{definition}

\subsection{Looped Transformer}
%In contrast to CoT, 
Looped TF applies a non-causal Transformer block repeatedly to a fixed-length sequence of internal representations.
The intermediate computations are maintained in the embedding space, rather than as explicit tokens, at each position and are iteratively refined across loop iterations.
\begin{definition}[Informal: Looped TF]
Let \( f: \mathbb{N} \times \mathbb{Q}^{d \times n} \to \mathbb{Q}^{d \times n} \) denote a standard Transformer layer without causal masking, where the first argument \( k \in \mathbb{N} \) allows injecting loop-dependent positional encodings.
Here, \( \mathbb{Q} \) denotes the data type, and \( d \) is the embedding dimension.  
Given an input sequence \( x = (x_1, \dots, x_n) \in \mathcal{V}^n \), define the initial hidden state as
\(f^0(x) \coloneq (e(x_1), \dots, e(x_n)) \in \mathbb{Q}^{d \times n},\) where \( e : \mathcal{V} \to \mathbb{Q}^d \) is a token-wise embedding function.
Then, for each loop \( k \), the Looped TF is defined recursively as
\begin{equation}
f^{k+1}(x) \coloneq f(k,f^k(x)).
\end{equation}
After \( T(n) \) loop iterations, the final output is obtained by applying a linear projection and deterministic decoding (e.g., \(\arg\max\)) to each position in \( f^{T(n)}(x) \), extracting the last \( m \le n \) tokens.
\end{definition}

\subsection{Notation}
%\paragraph{Notation} 
For any \( n \in \mathbb{N}^+ \), we write \( [n] \coloneq \{1, 2, \ldots, n\} \). For functions \( f, g : \mathbb{N} \to \mathbb{N} \), we write \( f(n) \in O(g(n)) \) iff there exist constants \( c > 0 \) and \( n_0 \in \mathbb{N} \) such that \( f(n) \le c \cdot g(n) \) for all \( n \ge n_0 \). We denote by \( \poly(n) \) the set of functions that grow at most polynomially, i.e., \(\poly(n) \coloneq \left\{ f : \mathbb{N} \to \mathbb{N} \;\middle|\; \exists k \in \mathbb{N},\; \exists c > 0,\; \forall n \in \mathbb{N},\; f(n) \le c \cdot n^k \right\}.\) Given two complexity classes \(\gC_1\) and \(\gC_2\), we write \(\gC_1 \setminus \gC_2\) to denote the set of decision problems (referred to as \emph{languages} in computational complexity theory) that belong to \(\gC_1\) but not to \(\gC_2\).

\section{General Problem Setting}
This section begins by reviewing the problem settings considered in prior work, as summarized in \Cref{table:related}, and identifying the remaining open questions. We then formalize the two problem settings addressed in this study, specifying their underlying assumptions and objectives.

\subsection{Related Work}% and Remaining Problems}
\begin{table}
  \caption{Summary of theoretical analyses on expressivity of CoT and Looped TF.}
  \label{table:related}
  \centering
    \begin{tabular}{l|l|cl|c}
    \toprule
    % 計算クラスやseparationがあるかどうかを軸として追加する（チェックかどうか）
    % CoTとLoopedとDet.とPro → CとLにするとか文字数工夫
    % problem settingが長い
    Paper & Model & Type & Problem Setting & Class \\ %  or Class \\
    \midrule
    \cite{prystawski2023why} & CoT & Pro. & Bayesian network & - \\
    \cite{feng2023towards} & CoT & Det. & Mathematics \& Decision-making & \checkmark \\ % \checkmark
    \cite{merrill2024the} & CoT & Det. & Automata \& Turing machine & \checkmark \\
    %& Automata, Turing machine \\ 
    \cite{li2024chain} & CoT & Det. & Boolean circuit & \checkmark \\ 
    \cite{nowak2024} & CoT & Pro. & Language modeling (PTM) & \checkmark \\
    \cite{saunshi2025reasoning} & Looped & Det. & Non-looped, CoT, and Automata & - \\
    \cite{merrill2025little} & Looped & Det. & Automata \& Graph connectivity & \checkmark \\
    %\cite{kim2025metastable} & CoT & Pro. & Search (metastable dynamics) & - \\
    \midrule
    \multirow{2}{*}{\textbf{Ours}} & Looped & Det. & Directed acyclic graph & \checkmark \\
    & \&CoT & Pro. & FPRAS \& FPAUS (self-reducibility) & \checkmark \\
    % / consistency Self-correction / consistency
    \bottomrule
  \end{tabular}
\end{table}
\paragraph{Expressivity of CoT}
Prior analyses of CoT can be broadly categorized into \emph{deterministic} and \emph{probabilistic} settings, each corresponding to a distinct problem formulation and focusing on different aspects of expressivity.
In the deterministic setting, \citet{feng2023towards} showed that CoT enables solving certain function tasks, such as mathematical and dynamic programming problems.
Subsequent works clarified the computational classes corresponding to different numbers of reasoning steps by drawing comparisons to formal models of computation: \citet{merrill2024the} analyzed CoT through its correspondence with automata and Turing machines, while \citet{li2024chain} established a connection to Boolean circuits, illustrating how CoT’s expressivity scales with the number of reasoning steps.
In contrast, the probabilistic setting focuses on the model’s ability to represent and reason over distributions.
\citet{prystawski2023why} studied CoT with sampling-based decoding for probabilistic inference, and \citet{nowak2024} further investigated its capacity to model distributions over strings in language modeling, establishing a connection to PTMs.

\paragraph{Expressivity of Looped TF}
Looped TF inherently operates deterministically and has therefore primarily been studied on deterministic tasks, with their expressivity analyzed in comparison to deterministic CoT under an equal number of steps or loops.
\citet{saunshi2025reasoning} showed that a Looped TF with \( T \) iterations can simulate \( T \) steps of a non-looped or CoT, thereby demonstrating an inclusion of CoT within the expressivity of Looped TF. Furthermore, \citet{merrill2025little} established that Looped TF with only \( \log n \) iterations can solve problems such as graph connectivity, which is \(\mathsf{NL}\)-complete, whereas nonuniform CoT with \( \log n \) steps remain within \(\mathsf{TC}^0\), thereby proving a strict separation in complexity class at the logarithmic scale.

\subsection{Open Questions and Motivations}
We now outline open questions in the existing literature, separately for deterministic and probabilistic settings, each of which motivates the problem formulations studied in this work.

\begin{itemize}[leftmargin=*]

\item \textbf{Deterministic Setting:}
Although Looped TF has been studied on specific tasks, its overall computational characteristics remain unclear. In contrast to CoT, which has been characterized as sequential nature~\citep{feng2023towards, merrill2024the, li2024chain}, the computational properties of Looped TF remain unexplored, in particular, their relationship to parallel computation~\citep{sanford2024transformers}.
Furthermore, while the expressiveness of CoT has been analyzed across various step regimes, that of Looped TF beyond the logarithmic regime, as well as any formal separation from CoT, has yet to be established.

\item \textbf{Probabilistic Setting:}  
While probabilistic CoT has been investigated primarily for distribution modeling, its applicability to solving deterministic problems remains insufficiently understood, limiting direct comparisons with Looped TF.
Moreover, there remains a gap between the theoretical capabilities of CoT as PTMs~\citep{nowak2024} and the empirical behavior of practical LLMs: such models are typically trained to minimize loss up to a bounded error, and inference-time techniques are necessary to exploit their probabilistic behavior for improved performance.
%While probabilistic CoT havs been studied for distribution modeling, their ability for complex reasoning remains underexplored. 
%In particular, inference-time techniques that leverage the probabilistic behavior of CoT, such as majority voting~\citep{wang2023selfconsistency} or iterative self-correction~\citep{madaan2023selfrefine}, have shown empirical gains in LLMs.  
%However, the theoretical understanding of such enhancements remains limited. Only recently have studies begun to examine the role of inference-time methods~\citep{wu2025inference, kim2025metastable}. In particular, the mechanisms by which probabilistic reasoning improves accuracy are still not well understood.
\end{itemize}

\subsection{Our Problem Settings}
We consider two fundamental classes of computational problems: the \emph{deterministic problem} and the \emph{probabilistic problem}, elaborated in \Cref{sec:det} and \Cref{sec:prob}, respectively. This section provides high-level definitions that specify the input–output structure in each setting; precise formulations are deferred to the corresponding sections. These problem classes differ in several fundamental aspects, such as output uniqueness, the presence of a target distribution, and the inclusion of accuracy parameters.
\begin{definition}[Deterministic Problem]\label{def:det}
Let \( f : \mathcal{X} \to \mathcal{Y} \) be a \textbf{target function}.  
A model \( M \) solves the deterministic problem for \( f \) if, for every input \( x \in \mathcal{X} \), it holds that \( M(x) = f(x) \).
\end{definition}
\begin{definition}[Probabilistic Problem]\label{def:sample}
Let \( R \subseteq \mathcal{X} \times \mathcal{Y} \) be a relation such that each input \( x \in \mathcal{X} \) may have multiple valid outputs defined by \(R(x) \coloneq \{ y \in \mathcal{Y} \mid (x, y) \in R \}.\)  
A \textbf{target distribution} \( p(\cdot \mid x) \) is specified over the set \( R(x) \).  %
The \emph{probabilistic problem} is defined with respect to this distribution and is evaluated against explicitly specified accuracy or failure parameters provided as part of the input. %(e.g., approximate sampling within a specified total variation distance).
\end{definition}
%
%\textbf{Remark.} These problem classes differ in several fundamental aspects, such as output uniqueness, the presence of a target distribution, and the inclusion of accuracy parameters.
%The last aspect reflects the behavior of LLMs at inference time, which allocates compute during inference to meet a desired level of reasoning accuracy.
%These considerations motivate our focus on the expressivity of such algorithms in the probabilistic reasoning problem.

%%%%%%%%%%%%%%%%%%%%%%%%%%%%%%%%%%%%%%%%%%%%%%%%%%%%%%%%%%%%
\section{Looped Transformer Enables Efficient Parallel Solutions}\label{sec:det}
This section analyzes the fundamental computational differences between CoT and Looped TF in the deterministic setting.
We formalize deterministic computations using \emph{computation graphs}, as introduced in \Cref{sec:d:task}.
In \Cref{sec:d:low}, we show how CoT and Looped TF differ in processing the same computation graphs, highlighting their respective strengths. Finally, \Cref{sec:d:sep} presents a separation in their computational complexity in the polylogarithmic regime.

\subsection{Problem Setting}\label{sec:d:task}
\begin{figure}[t]
  \centering
  \includegraphics[width=\linewidth]{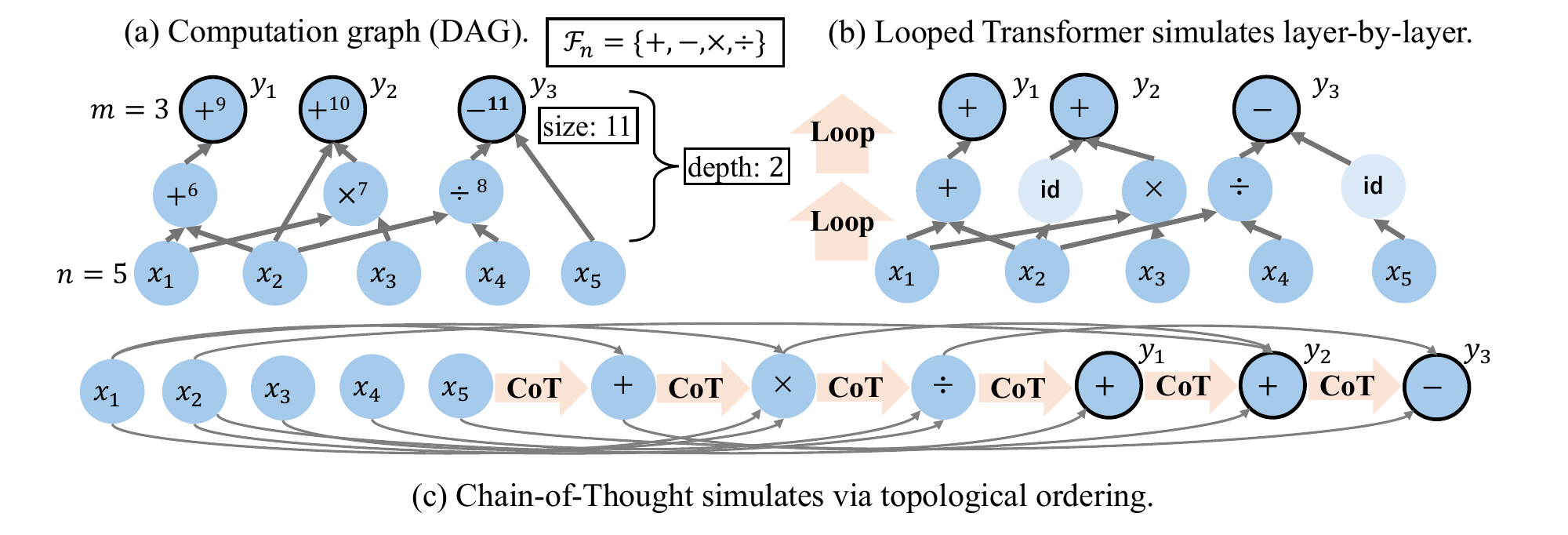}
  \caption{Comparison of simulating strategies for computing a DAG.
    (a) A computation graph. (b) A Looped TF simulates the graph in a layer-wise and parallel fashion, requiring a number of loops proportional to the depth of the graph. (c) A CoT simulates the computation sequentially in topological order, requiring a number of steps proportional to the size of the graph.}
  \label{fig:graph}
\end{figure}
To analyze the behavior of CoT and looped models, we formalize deterministic computations as directed acyclic graphs (DAGs), where each node corresponds to a local function and each edge indicates a data dependency.
This abstraction is natural and general, as straight-line programs can be expressed as DAGs~\citep{aho1972optimization}. An illustrative example is shown in Fig.\,\ref{fig:graph}(a).
\begin{definition}[Computation Graph]
Let \( \Sigma \) be a finite alphabet. A \emph{computation graph} is a directed acyclic graph \( G_n = (V_n, E_n, \mathcal{F}_n) \) that defines a function \( G_n: \Sigma^n \to \Sigma^m \), where \( \mathcal{F}_n = \{ f_n : \Sigma^{\ell_n} \to \Sigma \mid \ell_n \in \mathbb{N} \} \) is a finite set of local functions associated with each node. The graph \(G_n\) consists of (i) input nodes with in-degree $0$,  (ii) function nodes labeled by \( f \in \mathcal{F}_n \) whose in-degree matches the arity of the corresponding function, and (iii) \( m \) output nodes \( O \subseteq V_n \) with out-degree $0$.
The overall function is obtained by evaluating the graph in topological order.
The \emph{size} of the graph is \( |V_n| \), and its \emph{depth} is the length of the longest path from an input node to an output node.
We assume that the size of the computation graph \( |V_n| \) is bounded by a polynomial in the input length \( n \).
\end{definition} % Let \( \mathcal{F}_n \) be a finite set of functions \( \{ f_n \mid f_n : \Sigma^{k(n)} \to \Sigma,\ \text{for some } k(n) \in \mathbb{N} \} \). 
\textbf{Remark.} Prior work has analyzed CoT from DAGs, but in specific instances: \citet{feng2023towards} showed that dynamic programming tasks can be simulated via DAGs, and \citet{li2024chain} related CoT to Boolean circuits, a subclass of DAGs.
We extend this view by modeling general computations as DAGs, yielding a unified framework for analyzing both CoT and Looped TF and clarifying how they simulate such computations in fundamentally different ways.

\subsection{Simulating Computation DAGs: CoT Needs Size-scaled Steps and Looped Transformers Need Depth-scaled Loops}\label{sec:d:low}
To characterize the expressiveness of CoT and Looped TF, we analyze how each simulates computation graphs. We begin by formalizing the assumptions under which computations can be implemented by Transformer-based models (see \Cref{app:def} for details).
\begin{assumption}[Polynomially-Efficient Approximation (cf.~\citealp{feng2023towards})]\label{ass:0}
Each function in the computation graph can be approximated by a feedforward neural network of polynomial size.
\end{assumption}
\begin{assumption}[Bounded Fan-in]\label{ass:1}%[Attention with Bounded Fan-in]
The graph has a bounded fan-in. %, so that a constant number of attention heads can access all required inputs.
\end{assumption}
We now formalize the expressivity of CoT in simulating DAGs under these assumptions:
\begin{theorem}[CoT for DAGs]\label{thm:cot_lower}
Let \( \{G_n\}_{n \in \mathbb{N}} \) be a family of computation graphs, each computing a function \( G_n : \Sigma^n \to \Sigma^{m(n)} \), and satisfying Assumptions~\ref{ass:0} and~\ref{ass:1}.  
Then, for each \( n \in \mathbb{N} \), there exists a log-precision CoT with parameter size bounded by \( \poly(n) \), such that for every input \( x \in \Sigma^n \), the model outputs \( G_n(x) \) in a number of steps proportional to the ``size'' of the graph \( G_n \).
\end{theorem}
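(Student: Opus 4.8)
The plan is to show that one CoT step suffices to evaluate one node of the computation graph, and then to iterate over the nodes in a topological order: after $|V_n|$ steps every node value has been written as a token, and in particular the $m$ output values appear as the last $m$ tokens. Since $|V_n| = \poly(n)$, this gives a number of steps proportional to the size of $G_n$. Because the family is nonuniform (one model per $n$) and $G_n$ depends only on $n$ — not on the input $x$ — the structure of $G_n$ (its node list, edge relation, and the function label of each node) can be hard-wired into the $\poly(n)$-size parameters. Fix a topological order $u_1,\dots,u_{|V_n|}$ in which the $n$ input nodes come first and the $m$ output nodes come last (possible since output nodes have out-degree $0$). The CoT sequence is arranged so that position $j$ holds the value of $u_j$; positions $1,\dots,n$ are the input $x$, and step $t$ produces the value at position $n+t$, i.e.\ node $u_{n+t}$.

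\textbf{Simulating one step.} At step $t$ the current last token sits at position $p=n+t-1$, read off from its $O(\log n)$-bit positional encoding. The model must (i) determine the next node $u_{n+t}$, its function label, and the positions of its predecessors; (ii) gather the values stored at those predecessor positions; and (iii) apply the corresponding local function and emit the result. For (i), the map from $p$ to (label, predecessor positions) is a fixed table of size $\poly(n)$, hence computed by a $\poly(n)$-size feedforward network. For (ii), Assumption~\ref{ass:1} gives each node $O(1)$ predecessors, so $O(1)$ saturated-attention heads suffice: the $i$-th head forms a query (from the predecessor position $p_i$ computed in step (i)) that exactly matches the key at position $p_i$ and nowhere else, returning the value stored there. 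For (iii), the local functions number at most $|V_n|=\poly(n)$, and by Assumption~\ref{ass:0} each is approximated by a $\poly(n)$-size feedforward net; a single $\poly(n)$-size network — or an MoE layer routed by the function label — selects the appropriate sub-network and computes $f$ applied to the gathered values. The output, a single symbol of the finite alphabet $\Sigma$, is decoded deterministically ($\arg\max$) and appended at position $n+t$.

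\textbf{Precision, correctness, and step count.} All stored values lie in the finite set $\Sigma$ (constant precision); positions range over $[1,\poly(n)]$ ($O(\log n)$ bits); the attention scores used for exact-position matching are bounded-bit — so the construction is log-precision. Running the step for $t=1,\dots,|V_n|-n$ fills in every function node in topological order; correctness follows by induction on $t$, since when $u_{n+t}$ is processed all of its predecessors precede it and their correct values are already on the tape (the per-node approximation error stays controlled, as in \citet{feng2023towards}, because $\Sigma$ is finite and rounding recovers the exact symbol). The output nodes, placed last, occupy the final $m$ positions, matching the CoT output convention, and the total number of steps is $|V_n|-n = O(|V_n|)$.

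\textbf{Main obstacle.} The delicate part is realizing the "dispatch $+$ gather" inside fixed-size attention and feedforward layers: one must design positional encodings and query/key maps so that, uniformly over all $t$, the head at the current last position retrieves exactly the value at a hard-coded predecessor position, and the feedforward (or MoE) layer routes to the correct local function — all while keeping every quantity within $O(\log n)$ bits. This is essentially the associative-lookup / pointer-following gadget from prior Transformer-simulation arguments, here adapted to an arbitrary bounded-fan-in DAG; the individual ingredients (exact-match saturated attention, label-routed function evaluation, and propagation of the function-approximation error) are standard, but assembling them so that a single fixed architecture handles every node of a $\poly(n)$-size graph is where the care is needed.
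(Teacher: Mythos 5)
Your proposal matches the paper's proof closely: both decode one node per CoT step in a fixed topological order (inputs first, outputs last), hard-wire the DAG structure into the nonuniform, \(\poly(n)\)-size parameters, use \(C\) saturated-attention heads (bounded fan-in, Assumption~\ref{ass:1}) with the exact-match query/key gadget to retrieve predecessor values, dispatch to the correct local function via an MoE feedforward layer (Assumption~\ref{ass:0}), and conclude by induction on the step index with log-precision positional encodings. The only cosmetic difference is that the paper folds the ``which node / label / predecessor positions'' lookup directly into the positional embedding at each position rather than into a separate feedforward table, so the whole construction fits in a single decoder layer.
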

We then analyze Looped TF, which requires an additional assumption. Unlike CoT, Looped TF does not extend the input sequence; consequently, the amount of information that can be retained at each loop iteration is limited. %(See \Cref{app:def} for details.)
\begin{assumption}[Linear Parallel Space Complexity]\label{ass:3}
The amount of memory that must be maintained concurrently to evaluate the computation graph in parallel is \( O(n) \).
\end{assumption}
\begin{assumption}[Output Bound]\label{ass:4}
The computation graph contains at most \( n \) output nodes.
\end{assumption}
Under these constraints, we formalize the expressivity of looped models in simulating DAGs.
\begin{theorem}[Looped TF for DAGs]\label{thm:loop_lower}
Let \( \{G_n\}_{n \in \mathbb{N}} \) be a family of computation graphs, each computing a function \( G_n : \Sigma^n \to \Sigma^{m(n)} \), and satisfying Assumptions~\ref{ass:0},~\ref{ass:1},~\ref{ass:3}, and~\ref{ass:4}.  
Then, for each \( n \in \mathbb{N} \), there exists a log-precision Looped TF with parameter size bounded by \( \poly(n) \),  such that for every input \( x \in \Sigma^n \), it computes \( G_n(x) \) using a number of loops proportional to the ``depth'' of the graph \( G_n \).
\end{theorem}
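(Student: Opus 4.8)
The plan is to have the Looped TF simulate a \emph{parallel}, level-by-level evaluation of $G_n$, advancing the computation by one layer of the DAG per loop iteration, so that $T = O(\mathrm{depth}(G_n))$ loops suffice --- in contrast to the topological, size-scaled simulation behind \Cref{thm:cot_lower}. First I would fix the level-wise schedule of $G_n$: round $t$ (for $0 \le t \le \mathrm{depth}(G_n)$) evaluates exactly the nodes at depth $t$. By \Cref{ass:3}, at every round the set of \emph{live} values --- those already computed and still needed by a later node, together with the inputs consumed that round --- has size $O(n)$, and by \Cref{ass:4} the $m(n)$ output nodes fit within $n$ positions and are kept live to the end. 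I then fix, for each round, an injective placement of the live values into $O(n)$ \emph{slots}, realized within the $n$ positions by packing a constant number of (identifier, value) pairs into each position, and hardcode into the loop-indexed positional encoding $\mathrm{PE}(t,\cdot)$, for each position $i$: the identifiers of the nodes it currently holds and, whenever one of its slots is to be overwritten in round $t{+}1$ by a node $u$ at depth $t{+}1$, the label of $u$'s local function together with the identifiers and current locations (position and offset) of $u$'s predecessors, of which there are $O(1)$ by \Cref{ass:1}. This table has $\poly(n)$ total size and $O(\log n)$-bit entries, within the parameter and log-precision budgets; the $n$ input nodes occupy the first slots at initialization.

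Second, I would realize one loop iteration by one standard Transformer block, i.e.\ two sublayers. (i) A saturated multi-head attention sublayer with $O(1)$ heads: the head indexed by ``predecessor $p$, offset $o$'' forms at position $i$ the query equal to the identifier of $u$'s $p$-th predecessor (read off $\mathrm{PE}(t,i)$) and matches it against the key ``identifier stored at offset $o$'' at every position; since identifiers are unique the argmax is a singleton, so saturated attention returns exactly that predecessor's stored value, and $\mathrm{PE}(t,i)$ tells position $i$ which head carries the real answer. Positions with no scheduled update route to themselves (or rely on the residual stream) to copy their contents forward. (ii) A feedforward --- or MoE --- sublayer that, given the function label from $\mathrm{PE}(t,i)$ and the fetched predecessor values, outputs $f_u(\cdot)$ at updated positions and the identity at copied-forward ones; this is realizable by a single fixed network because $\mathcal{F}_n$ is finite, each member is poly-size by \Cref{ass:0}, and a label-conditioned selection over finitely many such nets (one expert each, in the MoE case) is again poly-size and is shared across all loops. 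An induction on $t$ with invariant ``after iteration $t$ every slot holds the correct value of the node placed there'' then shows that after $T = O(\mathrm{depth})$ iterations all output nodes occupy their designated slots, so the final linear projection and $\arg\max$ decoding output $G_n(x)$.

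Finally, I would check the bookkeeping: values lie in the finite alphabet $\Sigma$ ($O(1)$ bits), whereas identifiers, offsets, round indices, and function labels are $O(\log n)$ bits, so a constant number of (identifier, value) pairs fit within log precision; attention logits are comparisons of $O(\log n)$-bit identifiers and stay log-precision; the feedforward width is $\poly(n)$. The main obstacle I anticipate is the first step --- making \Cref{ass:3} concrete enough to guarantee that the level-wise schedule's live set (including the $O(1)$ predecessors of every node evaluated that round) stays $O(n)$ throughout, and pinning down a consistent slot placement across consecutive rounds so that ``copy forward'' versus ``overwrite with a new node,'' and the source location of each predecessor, are all determined from local positional-encoding information alone, making the routing \emph{exact} under saturated attention. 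Once the schedule and placement are fixed, the attention and feedforward gadgets are routine adaptations of those used for \Cref{thm:cot_lower}.
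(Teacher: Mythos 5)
Your construction matches the paper's proof at a high level: one loop iteration per DAG level, $O(1)$ heads of saturated attention fetching predecessor values by exact index match (via the orthogonal query/key construction of Li et al.), an MoE feed-forward keyed on a one-hot function label evaluating each local function (poly-size by \Cref{ass:0}), and a loop-indexed positional encoding hard-coding the schedule; the paper likewise packs $W=O(1)$ node-slots per position so that \Cref{ass:3} keeps the working set within $n$ positions. The one genuine difference is how the ``main obstacle'' you flag --- handling long-range dependencies and keeping the live-set slot placement consistent across rounds --- gets discharged. You propose carrying live values forward via self-routing or the residual stream, which would still require you to exhibit a consistent slot assignment and a rule telling each node where its predecessor currently lives. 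The paper avoids this bookkeeping by first applying a \emph{strict layered expansion} (\Cref{def:strict-expansion}): every edge $(u\to v)$ spanning more than one level is replaced by a chain of explicit identity-copy nodes, so in the transformed graph every predecessor sits in the immediately preceding layer, the live set at round $t$ is literally layer $t$ padded to width $Wn$, the parallel space complexity of \Cref{def:psc} is preserved by design (it already counts deferred dependencies), and the PE table only ever needs to name previous-layer indices. Your copy-forward mechanism is morally the same move, but the expansion makes it a definitional fact rather than a scheduling argument. Two smaller points worth fixing in your sketch: the paper uses a depth-2 Transformer block per loop because the feed-forward only approximates $f$ within the threshold gap of \Cref{def:eff}, and a second sublayer applies $\mathrm{ReLU}(\,\cdot - (1-\delta))$ to restore an exact one-hot before the next iteration --- without that, soft outputs would be re-fed as predecessor values and errors could accumulate; and the claim that ``a constant number of (identifier, value) pairs fit within log precision'' conflates precision with dimension --- each identifier already consumes one $O(\log n)$-bit scalar, and what you need is a constant number of such scalars occupying $O(\log n)$ embedding coordinates, which is what the paper allocates.
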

\textbf{Remark.} Proofs are given in \Cref{app:det}, with illustrations in Fig.\,\ref{fig:graph}. Informally, \Cref{thm:cot_lower} shows that CoT simulates computations by decoding nodes in topological order, whereas \Cref{thm:loop_lower} shows that Looped TF simulates computations in a layer-wise and parallel fashion.
This contrast reflects complementary strengths: CoT employs intermediate steps as scratchpad memory, supporting general computation without structural constraints. In contrast, Looped TF exploits structural parallelism to achieve greater efficiency, but requires additional assumptions.

\subsection{Separation between CoT and Looped Transformers in Polylogarithmic Time}\label{sec:d:sep}
To compare the computational complexity of CoT and Looped TF, we analyze them with respect to Boolean circuits, a model of computation represented by DAGs.
In particular, we restrict our attention to languages.
We begin by defining the complexity classes of CoT, as defined in~\cite{li2024chain}, and Looped TF.
\begin{definition}[\(\CoT\) and \(\LOOP\)]
Let \(\CoT[T(n), d(n), p(n)]\) (resp.\ \(\LOOP[T(n), d(n), p(n)]\)) denote the set of languages \( \gL: \{0,1\}^* \to \{0,1\} \) for which there exists a CoT (resp.\ Looped TF) \( M_n \) for each input size \(n\), with embedding size \( O(d(n)) \) and \( O(p(n)) \) bits of precision, such that for all \( x \in \{0,1\}^n \), the final output token, after \( O(T(n)) \) decode steps (resp.\ loops), is \(\gL(x)\), i.e., \(
M_n(x) = \gL(x), \text{ for all } x \in \{0,1\}^n.
\)
\end{definition}
To characterize the computational power of these models, we refer to standard nonuniform circuit complexity classes. In particular, we focus on the classes \( \NC^k \) and \( \TC^k \), sing Boolean and threshold circuits, respectively, which capture efficient parallel computation~\citep{arora2009computational}.
%using Boolean and threshold circuits, respectively~\citep{vollmer1999introduction}. %using Boolean and threshold circuits, respectively~\citep{vollmer1999introduction, arora2009computational}.
%Of particular interest are classes such as (nonuniform) \( \mathsf{NC}^k \) and \( \mathsf{TC}^k \), which consist of circuits with polynomial size and polylogarithmic depth, representing \emph{efficient parallel computation}~\citep{arora2009computational}. hoge, boolen, \cite{vollmer1999introduction}, parallel computation
\begin{definition}[\(\TC^k\) and \(\NC^k\)]
For each integer \( k \ge 0 \), the nonuniform circuit classes are defined as:
\begin{itemize}[leftmargin=*, nosep]
    \item \(\NC^k\) is the class of languages decidable by uniform Boolean circuits of polynomial size and depth \( O(\log^k n) \), with bounded fan-in AND, OR, and NOT gates.
    \item \(\TC^k\) is the class of languages decidable by uniform threshold circuits of polynomial size and depth \( O(\log^k n) \), with unbounded fan-in majority (threshold) gates.
\end{itemize}
\end{definition}
Since Boolean circuits allow polynomial-size, we assume a polynomial embedding size.
%To distinguish the computational power of CoT and Looped Transformers, we analyze their corresponding circuits.
Prior work has shown that CoT is equivalent to Boolean circuits composed of \(\TC^0\) gates, with circuit ``size'' proportional to the number of CoT steps. This class \(\SIZE^{\TC^0}\) is defined in \Cref{def:tcsize}.
\begin{theorem}[\citealp{li2024chain}]
$\forall T(n)\in\poly(n), \CoT[T(n),\poly(n),\log{n}] = \SIZE^{\TC^0}[T(n)+1]$
\end{theorem}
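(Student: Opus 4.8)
The plan is to prove the equality by establishing both inclusions, relying on the tight correspondence ``one forward pass of the Transformer $\Leftrightarrow$ one $\TC^0$ computation.'' I would first record two auxiliary facts. \textbf{(i)} A single forward pass of a log-precision, polynomial-width, saturated-attention decoder layer computes a function in nonuniform $\TC^0$: affine maps, ReLU feedforward blocks, and layer-norm-type normalization reduce to iterated addition and comparison, while saturated attention reduces to a hard-max over scores (iterated comparison), counting the argmax set, and averaging (iterated addition) — all in $\TC^0$ once intermediate values are confined to $O(\log n)$ bits. \textbf{(ii)} Conversely, any constant-depth polynomial-size threshold circuit can be evaluated inside one such forward pass, since a single saturated-attention head with suitable key/query/value maps computes a weighted count followed by a threshold, i.e.\ a majority gate, so a constant number of attention-plus-feedforward sublayers realizes any constant-depth threshold subcircuit with all numbers staying in a polynomial range.

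For $\CoT[T(n),\poly(n),\log n] \subseteq \SIZE^{\TC^0}[T(n)+1]$: given the CoT model $M_n$, the generation of the $(k{+}1)$-th token from the current length-$(n+k)$ sequence is one forward pass, hence by Fact~(i) a single $\TC^0$ oracle gate mapping the tape to its one-token extension. Composing the $T(n)$ reasoning steps and then the single pass that emits the answer token yields a circuit of $T(n)+1$ such gates, i.e.\ a member of $\SIZE^{\TC^0}[T(n)+1]$ deciding the same language.

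For $\SIZE^{\TC^0}[T(n)+1] \subseteq \CoT[T(n),\poly(n),\log n]$: given a circuit with oracle gates $g_1,\dots,g_{T(n)+1}$ in topological order, I would hard-wire the (nonuniform) gate connectivity into the Transformer weights and have the model evaluate one gate per decode step. At step $i$ the tape holds the input together with $g_1,\dots,g_{i-1}$; using positional encodings the model locates the tape cells feeding $g_i$, simulates the constant-depth threshold subcircuit $g_i$ in one forward pass via Fact~(ii), and decodes its value as the next token. After $T(n)$ steps all internal gate values are on the tape and the final pass outputs $g_{T(n)+1}$, matching the step budget; embedding size and precision remain $\poly(n)$ and $O(\log n)$ respectively since all counts lie in a polynomial range.

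The main obstacle is the precision and width bookkeeping that makes Facts~(i) and~(ii) hold simultaneously with $O(\log n)$ bits and polynomial width — in particular checking that the tie-breaking averaging in saturated attention, and the iterated sums inside the threshold subcircuits, never force more than logarithmically many bits — together with the off-by-one accounting that converts ``$T(n)$ reasoning steps'' into ``$T(n)+1$ gates'' and back, which requires matching the precise definitions of $\CoT[\cdot]$ and $\SIZE^{\TC^0}[\cdot]$ (e.g.\ whether the answer-producing pass is counted among the steps).
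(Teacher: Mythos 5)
The paper does not prove this theorem; it imports it verbatim from \citet{li2024chain}, so there is no in-paper proof against which to grade you. Assessed on its own, your two-inclusion plan has the right shape, and ``one forward pass $\Leftrightarrow$ one $\TC^0$ gate'' is indeed the load-bearing correspondence behind the cited result.

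A few places where the sketch is loose. Your Fact~(ii) puts the majority/threshold inside a single saturated-attention head; in the construction the paper actually borrows from Li et al.\ (\Cref{lemma:ff_poly}), unbounded-fan-in $\AND$, $\OR$, and $\MAJORITY$ are realized entirely in a two-layer feedforward ReLU block (with $O(\log n)$-bit precision, and the $n$ inputs interleaved with constant $1$s in the embedding), while attention is used only to collect the polynomially many input bits onto a single position. The conclusion is the same, but your attribution of which sublayer does the thresholding does not match the construction the paper relies on elsewhere (e.g.\ in the proof of \Cref{thm:loop}). For the $\subseteq$ direction, you describe each decode step as ``a single $\TC^0$ oracle gate mapping the tape to its one-token extension,'' but an oracle gate in the sense of \Cref{def:tcsize} outputs a single bit and reads a fixed-length $p(n)$-bit input, whereas a CoT token may carry $\Theta(\log|\mathcal{V}|)$ bits and the tape it is read from grows with the step index; you need to encode each intermediate token across a constant number of such gates and pad the oracle inputs to a canonical polynomial length, which is routine but should be said explicitly to land the $T(n)+1$ accounting. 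Finally, you rightly identify the precision/width bookkeeping (that saturated attention, tie-breaking, and the iterated sums all fit in $O(\log n)$ bits) as the crux; that is where Li et al.\ do the real technical work, and your sketch correctly defers it rather than resolving it.
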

In contrast, we show that Looped TF corresponds to Boolean circuits whose ``depth'' is proportional to the number of loop iterations.
\begin{theorem}[Looped and Circuits]\label{thm:loop}
Let \( T(n) \in \mathrm{poly}(n) \).  
Then, a decision problem \( \gL \) belongs to the class \( \LOOP[T(n), \mathrm{poly}(n), \log n] \) if and only if \( \gL \) is decidable by nonuniform families of threshold circuits of polynomial size and depth \( T(n) \).  
Similarly, \( \gL \in \LOOP[T(n), \mathrm{poly}(n), 1] \) if \( \gL \) is decidable by nonuniform families of Boolean circuits of polynomial size and depth \( T(n) \).
\end{theorem}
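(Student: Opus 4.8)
The plan is to prove the two inclusions behind the equivalence — and, in the Boolean case, the single ``if'' inclusion that is claimed. Throughout, ``depth $T(n)$'' is read up to the constant factors that are already implicit in the $O(\cdot)$ appearing in the definition of $\LOOP$ (and in the layered normal form below). \emph{From loops to circuits.} The starting point is the layer-granular form of the standard fact that log-precision Transformers are computable in $\TC^0$ (as in \citealp{li2024chain}): a single Transformer layer — saturated attention over the $n$ positions, embedding width $\poly(n)$, $O(\log n)$-bit arithmetic, and a $\poly(n)$-size feedforward or MoE block — computes a function realizable by a constant-depth, $\poly(n)$-size threshold circuit, and the loop-index-dependent positional encoding is simply hardwired into each copy of the layer and adds nothing. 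Since a Looped TF running for $O(T(n))$ iterations is precisely this layer composed with itself $O(T(n))$ times, unrolling yields a threshold circuit of depth $O(T(n))$ and size $O(T(n))\cdot\poly(n)=\poly(n)$ (using $T(n)\in\poly(n)$), plus constant depth for the final linear read-out and $\arg\max$. Hence $\LOOP[T(n),\poly(n),\log n]$ lies in the class of $\poly(n)$-size threshold circuits of depth $O(T(n))$.

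\emph{From circuits to loops.} Conversely, take a nonuniform family $\{C_n\}$ of $\poly(n)$-size, depth-$T(n)$ threshold circuits and put each $C_n$ in \emph{layered} normal form by inserting copy gates, so that every wire runs between consecutive levels; this preserves the depth and multiplies the size by at most $T(n)$, so it remains $\poly(n)$. Build a Looped TF whose hidden state — using embedding width $\poly(n)$ over the $n$ positions, padding the token sequence to polynomial length if convenient — stores one value per gate, with positional encodings recording which gates sit at which positions and the wiring of each level, and with the loop index $k$ supplied as an extra positional signal identifying the level currently being evaluated. At iteration $k$, for each level-$k$ gate $g$ with inputs $g_1,\dots,g_a$ at level $k-1$ (already present in the hidden state), a constant number of saturated-attention heads whose keys and queries are determined by the level-$k$ wiring make $g$'s query attend to exactly the positions holding $g_1,\dots,g_a$ — one extra head handles the negated fan-in — returning the count $\tfrac1a\sum_i x_{g_i}$ as an $O(\log n)$-bit number; a constant-depth ReLU block then outputs $\big[\sum_i x_{g_i}\ge\theta_g\big]$ by comparing against the encoded, rescaled threshold. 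By induction on $k$, after $T(n)$ loops the hidden state holds every gate's value, in particular the output gate's, which the final projection and $\arg\max$ extract. For the Boolean, $1$-bit-precision case the averaging heads are replaced by value-biased hard-attention gadgets realizing $\AND$ and $\OR$ directly — bias the scores so an $\AND$ gate's query attends to some $0$-valued input whenever one exists and to all inputs otherwise, symmetrically for $\OR$, with $\NOT$ a fixed $1$-bit map — using only $O(1)$ bits; this yields the claimed ``if'' direction for $\LOOP[T(n),\poly(n),1]$.

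\emph{Main obstacle.} The crux is the layout-and-routing step: fitting $\poly(n)$ gates of unbounded fan-in into a hidden state with only $n$ positions (and $\poly(n)$ width) and realizing ``each gate's query attends to exactly its fan-in'' with a number of heads that is \emph{constant} — independent of the fan-in and of how many gates share a position — all driven by positional encodings that must simultaneously let one weight-tied layer impersonate the $T(n)$ structurally different levels when conditioned on the loop index $k$ (so the $\poly(n)$ weights have to encode, and index into, the wiring of all levels). The threshold-versus-count gadget, the $\AND/\OR$ hard-attention gadget, and the layered normal form are routine by comparison. This extends the $\log n$-loop construction of \citet{merrill2025little} to arbitrary $T(n)\in\poly(n)$, and is the depth-analogue of the size-scaled CoT characterization of \citet{li2024chain}.
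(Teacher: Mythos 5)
Your first direction (loops $\Rightarrow$ circuits) is correct and coincides with the paper's argument: each layer is $\TC^0$ in log precision (citing \citealp{merrill2023parallelism}), so unrolling $O(T(n))$ iterations gives a $\poly(n)$-size threshold circuit of depth $O(T(n))$.

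The converse direction is where your route diverges from the paper's, and the ``main obstacle'' you flag is in fact a genuine gap, not a routine step. You propose to store gate values across the $n$ positions, pad if needed, and use saturated attention at iteration $k$ to route each level-$k$ gate's predecessors into its position. But the model in this paper has exactly $n$ positions (the initial hidden state is $(\embed(x_1),\dots,\embed(x_n)) \in \mathbb{Q}^{d\times n}$, with no extra scratch tokens), while the circuit has $\poly(n)$ gates. That forces $\poly(n)/n$ gates per position, and then a constant number of attention heads cannot realize the per-gate routing you need: each head yields one weighted sum per position, yet distinct gates co-located at the same position have distinct fan-in sets, so in general you would need $\Theta(\poly(n)/n)$ heads. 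You also rely on the loop index $k$ to make a single weight-tied block impersonate $T(n)$ structurally different circuit levels, which is an additional unresolved encoding burden.

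The paper avoids both problems by not using attention for gate routing at all. It uses attention only once, in the first one or two (non-looped) layers, to copy all $n$ input bits into a $2|V|$-dimensional slab of the embedding at a single position. Thereafter the entire circuit's wiring is hard-wired into a single $\poly(n)$-width feedforward (ReLU or MoE) layer, using \Cref{lemma:ff_poly} to realize unbounded-fan-in $\AND/\OR/\MAJORITY$ gates; this layer re-evaluates \emph{every} gate in parallel at \emph{every} loop iteration. No loop-dependent positional signal is needed: a simple induction shows that after $k$ iterations every gate of depth $\le k$ holds its correct value (level-$k$ gates become correct once level-$(k-1)$ gates are correct), so after $T(n)$ loops the output gate is right. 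Packing all gate values into the embedding dimension at one position is exactly the move that dissolves your routing obstacle, since the feedforward layer reads the whole $\poly(n)$-dimensional vector at once and its weights index the wiring directly. If you want to salvage your attention-based layout, you would have to either argue that $\poly(n)$ scratch positions are admissible under the model definition (they are not, as stated here), or show how $O(1)$ heads suffice despite multiple gates per position — neither of which your sketch does.
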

These results formally separate their expressive power under standard complexity assumptions.
\begin{theorem}
\(\TC^{k-1}\subsetneq\NC^
k\Rightarrow\CoT[\log^k{n},\poly(n),\log{n}] \subsetneq \LOOP[\log^k{n},\poly(n),1].\)
\end{theorem}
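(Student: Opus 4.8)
The plan is to \emph{sandwich} both classes between $\TC^{k-1}$ and $\NC^k$ and let the hypothesis do the rest. Concretely, I will establish the two containments $\CoT[\log^k{n},\poly(n),\log{n}] \subseteq \TC^{k-1}$ and $\NC^k \subseteq \LOOP[\log^k{n},\poly(n),1]$, so that from $\TC^{k-1}\subsetneq\NC^k$ we get both $\CoT[\log^k{n},\poly(n),\log{n}] \subseteq \LOOP[\log^k{n},\poly(n),1]$ (by chaining) and its properness: any language $\gL\in\NC^k\setminus\TC^{k-1}$ lies in $\LOOP[\log^k{n},\poly(n),1]$ yet not in $\CoT[\log^k{n},\poly(n),\log{n}]$. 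Throughout I treat all circuit classes as nonuniform, matching the definitions of $\CoT$ and $\LOOP$ and the statement of \Cref{thm:loop}; and since $\log^k n\in\poly(n)$, the characterization of \citet{li2024chain} applies with $T(n)=\log^k n$. The second containment is immediate from \Cref{thm:loop}: a language in $\NC^k$ is decided by a family of polynomial-size, bounded-fan-in Boolean circuits of depth $O(\log^k n)$, and \Cref{thm:loop} says precisely that such circuits are captured by $\LOOP[\log^k n,\poly(n),1]$ (the $O(\cdot)$ slack in the loop count absorbing the constant in the depth).

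The crux is the first containment. By the characterization of \citet{li2024chain}, $\CoT[\log^k{n},\poly(n),\log{n}]=\SIZE^{\TC^0}[\log^k n + 1]$, so it suffices to show $\SIZE^{\TC^0}[\log^k n+1]\subseteq\TC^{k-1}$. The feature I exploit is that an element of $\SIZE^{\TC^0}[s(n)]$ is a ``$\TC^0$ straight-line program'' of length $s(n)$: a sequence of $s(n)$ instructions, each emitting $O(1)$ output bits (one decoded token) via a polynomial-size, constant-depth threshold circuit applied to the original input and all previously emitted bits; hence the whole program emits only $O(s(n))$ non-input bits in total. I claim by induction on $k\ge 1$ that every such program of length $O(\log^k n)$ is computed by a polynomial-size threshold circuit of depth $O(\log^{k-1} n)$, i.e.\ lies in $\TC^{k-1}$. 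Base case $k=1$: the program emits only $O(\log n)$ bits, so there are at most $2^{O(\log n)}=\poly(n)$ candidate assignments to all emitted bits; a polynomial-size, constant-depth threshold circuit checks every candidate for consistency in parallel (for each instruction, verify that its threshold circuit applied to the input and the candidate's earlier bits reproduces the candidate's bits) and then selects the unique consistent candidate by a single $\OR$ layer, placing the program in $\TC^0$. Inductive step: partition the $O(\log^k n)$ instructions into $O(\log n)$ consecutive blocks of $O(\log^{k-1} n)$ instructions each. As a function of the original input together with the (polynomially many) bits emitted by earlier blocks, each block is itself a $\TC^0$ straight-line program of length $O(\log^{k-1} n)$, hence by the induction hypothesis is computed by a polynomial-size threshold circuit of depth $O(\log^{k-2} n)$; composing the $O(\log n)$ blocks in series yields depth $O(\log n)\cdot O(\log^{k-2} n)=O(\log^{k-1} n)$ and polynomial size, i.e.\ a $\TC^{k-1}$ circuit.

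Putting the pieces together, $\CoT[\log^k{n},\poly(n),\log{n}]=\SIZE^{\TC^0}[\log^k n+1]\subseteq\TC^{k-1}\subsetneq\NC^k\subseteq\LOOP[\log^k{n},\poly(n),1]$, which is the asserted strict separation. I expect the crux lemma $\SIZE^{\TC^0}[\log^k n+1]\subseteq\TC^{k-1}$ to be the main obstacle: the recursion needs careful bookkeeping — each block reads the original input plus \emph{all} bits emitted by earlier blocks, so its effective input grows (but stays polynomial) while its program length shrinks by a factor $\log n$ — and the base case relies essentially on each CoT step emitting only $O(1)$ bits, which is exactly what keeps the candidate count polynomial; were a step allowed to emit, say, $n$ bits, both the argument and the theorem would collapse. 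A secondary check is that the ``$\TC^0$ super-gates'' in the $\SIZE^{\TC^0}$ characterization of \citet{li2024chain} are genuinely standard polynomial-size constant-depth threshold circuits, so that composing $O(\log^{k-1} n)$ of them in series stays within $\TC^{k-1}$, and that the $O(\cdot)$ slack built into the definitions of $\CoT$, $\LOOP$, $\NC^k$, and $\TC^k$ absorbs the additive ``$+1$'' and all multiplicative constants along the way.
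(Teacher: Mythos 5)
Your proof is correct and follows essentially the same route as the paper's. The crux is identical: both reduce to showing $\CoT[\log^k n,\poly(n),\log n]\subseteq\TC^{k-1}$, and both prove it by the same $\poly(n)$-traces trick — a window of $\log n$ decode steps emits only $O(\log n)$ bits, so all $\poly(n)$ candidate traces can be checked in parallel by a $\TC^0$ circuit and the unique consistent one selected. The only cosmetic difference is that you package the stacking as an induction on $k$ (splitting $\log^k n$ steps into $\log n$ blocks of $\log^{k-1} n$ and recursing), whereas the paper flattens it directly into $\log^{k-1} n$ blocks of $\log n$ steps each simulated in $\TC^0$; both decompositions yield depth $O(\log^{k-1} n)$ and are interchangeable. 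You also spell out the remaining sandwich $\CoT\subseteq\TC^{k-1}\subsetneq\NC^k\subseteq\LOOP$ and the choice of separating language, which the paper elides behind ``it suffices to show,'' and you correctly invoke \Cref{thm:loop} (the precision-$1$ Boolean-circuit direction) for $\NC^k\subseteq\LOOP[\log^k n,\poly(n),1]$. Your closing remark that the argument hinges on each CoT step emitting $O(1)$ bits is the right place to flag the load-bearing assumption.
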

\begin{theorem}\label{thm:nonunisep}
\(\TC^{k-1}\subsetneq\TC^
k\Rightarrow\CoT[\log^k{n},\poly(n),\log{n}] \subsetneq \LOOP[\log^k{n},\poly(n),\log{n}].\)
\end{theorem}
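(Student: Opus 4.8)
The plan is to pass to circuit complexity on both sides and then argue that the resulting circuit inclusion is strict under the hypothesis. By \Cref{thm:loop}, $\LOOP[\log^k n,\poly(n),\log n]$ is exactly the class of languages decided by (nonuniform) polynomial-size threshold circuits of depth $O(\log^k n)$, i.e.\ nonuniform $\TC^k$; and by the theorem of \citet{li2024chain}, $\CoT[\log^k n,\poly(n),\log n]=\SIZE^{\TC^0}[\log^k n+1]$, the languages decided by circuits built from $\log^k n+1$ many $\TC^0$-oracle gates. All circuit classes here are taken nonuniformly, matching the nonuniformity built into $\CoT[\cdot]$ and $\LOOP[\cdot]$; I would remark that in this setting $\TC^{k-1}\subsetneq\TC^k$ is the natural separation hypothesis.

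The inclusion $\CoT[\log^k n,\poly(n),\log n]\subseteq\LOOP[\log^k n,\poly(n),\log n]$ is the easy half. Given a circuit with $\log^k n+1$ $\TC^0$-oracle gates, replace each oracle gate by the polynomial-size, constant-depth threshold subcircuit computing it. Since the oracle-gate DAG has at most $\log^k n+1$ nodes, every path through it has length $O(\log^k n)$, so the expanded circuit has polynomial size and depth $O(\log^k n)$, witnessing membership in $\TC^k$.

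The heart of the proof is strictness, which I would establish in contrapositive form: assuming $\CoT[\log^k n,\poly(n),\log n]=\LOOP[\log^k n,\poly(n),\log n]$, i.e.\ $\SIZE^{\TC^0}[\log^k n+1]=\TC^k$, I want to derive $\TC^{k-1}=\TC^k$. Starting from $L\in\TC^k$, the assumed equality gives circuits computing $L$ with only $\log^k n$ $\TC^0$-oracle gates; ordering these gates topologically and grouping them into $\log^{k-1}n$ consecutive blocks of $\log n$ gates each, each block computes a function in $\SIZE^{\TC^0}[\log n]$ of the input together with the outputs of the earlier blocks. If each block could in turn be realized by a single $\TC^0$-oracle super-gate, then $L$ would lie in $\SIZE^{\TC^0}[\log^{k-1}n+1]\subseteq\TC^{k-1}$ by the same gate-expansion argument, giving $\TC^k\subseteq\TC^{k-1}$ and hence the contradiction. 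Collapsing a $\SIZE^{\TC^0}[\log n]$ block down to $\TC^0$ is precisely where the hypothesis must be reinvested, either by invoking $\SIZE^{\TC^0}[\log^k m+1]=\TC^k$ at a rescaled input length $m$, or through a translation argument internal to the $\SIZE^{\TC^0}$ hierarchy.

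This last step is the main obstacle. The naive rescaling that would make $\log n$ at length $n$ line up with $\log^k m$ at length $m$ either shrinks the input below its actual length (for $m<n$) or, going the other way, blows the $\TC^0$-oracle gates up to superpolynomial size (the exponential padding normally used to move within the threshold-depth hierarchy does exactly that). So one needs a more careful argument — a quasi-polynomial translation combined with a padding argument that stays inside $\SIZE^{\TC^0}$, or an induction on $k$ that never leaves polynomial size. Granting the block-collapse, the remaining steps (inlining constants for padded coordinates to keep every oracle gate polynomial-size, chaining the $\log^{k-1}n$ super-gates, and expanding) are routine, and combining $\CoT\subseteq\LOOP$ with $\CoT\neq\LOOP$ yields the stated strict inclusion.
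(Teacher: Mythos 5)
Your reduction to circuit classes, the easy inclusion $\CoT\subseteq\LOOP$, and the idea of grouping the $\log^k n$ steps into $\log^{k-1}n$ blocks of $\log n$ steps are all on target and match the paper's decomposition. But the step you flag as the main obstacle — collapsing a block of $\log n$ CoT steps (equivalently a chain of $\log n$ $\TC^0$-oracle gates) to a single $\TC^0$ stage — is where the proposal stalls, and the remedies you suggest (rescaling to input length $m$ with $\log^k m=\log n$, translation or padding inside the $\SIZE^{\TC^0}$ hierarchy, ``reinvesting the hypothesis'') are on the wrong track: padding in the threshold-depth hierarchy would indeed blow up size, and the hypothesis $\TC^{k-1}\subsetneq\TC^k$ is not what makes the collapse go through.

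The missing idea is elementary and \emph{unconditional}: a block of $\log n$ CoT steps with a constant-size vocabulary has at most $2^{O(\log n)}=\poly(n)$ possible execution traces (the sequence of decoded tokens, or equivalently the bits emitted by the $\log n$ oracle gates). One can enumerate all $\poly(n)$ candidate traces in parallel; for each candidate, all $\log n$ one-step forward passes can be evaluated independently and simultaneously in $\TC^0$ (since the preceding tokens are guessed, not computed), and a single $\TC^0$ consistency check selects the unique candidate realized by the deterministic model. This shows $\SIZE^{\TC^0}[\log n]\subseteq\TC^0$ with polynomial size, with no appeal to any separation hypothesis. Composing $\log^{k-1}n$ such collapsed blocks gives $\CoT[\log^k n,\poly(n),\log n]\subseteq\TC^{k-1}$ directly, so there is no need for the contrapositive detour: the paper simply proves this inclusion, then uses $\LOOP[\log^k n,\poly(n),\log n]=\TC^k$ (Theorem~\ref{thm:loop}) together with the hypothesis $\TC^{k-1}\subsetneq\TC^k$ to conclude strictness in one line. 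In short, your proposal has the right skeleton but leaves unfilled exactly the enumeration-of-traces observation that does all the work.
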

\begin{proof}
It suffices to show that \( \CoT[\log^k n, \poly(n), \log n] \subseteq \TC^{k-1} \) for any \(k\).
For any input, a CoT with \(\log n\) steps yields only \(2^{\log n} = \poly(n)\) possible execution traces, each of which can be simulated in \(\TC^0\).  
Thus, we divide the total \(\log^k n\) steps into \( \log^{k-1} n \) blocks, each consisting of \(\log n\) steps.  
We then simulate each block in parallel using \(\TC^0\), and iterate this process over \(\log^{k-1} n\) layers.  
This layered simulation can be performed within \(\TC^{k-1}\), completing the proof.
\end{proof}
\textbf{Remark.} These results show that, under the same polylogarithmic number of steps or loops, Looped TF can simulate parallel computations more efficiently than CoT.  
This highlights the inherent parallelism of looped models, in contrast to the sequential nature of CoT.
\paragraph{Instance Examples} There are several concrete problems in (nonuniform) \(\NC^2\): solving linear equations, testing for perfect matching in bipartite graphs, and membership testing for fixed context-free grammars, denoted by \( \prob{LIN} \), \( \prob{MATCH} \), and \( \prob{FCFL} \), respectively (see \Cref{app:nc2} for details). Then we have:
\begin{proposition}\label{thm:log_loop}
$\prob{LIN}, \prob{MATCH}, \prob{FCFL} \in\LOOP[\log^2{n},\poly(n), 1] \setminus \CoT[\log^2{n},\poly(n), \log{n}].$
\end{proposition}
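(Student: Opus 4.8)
The plan is to combine \Cref{thm:loop} (the Looped-vs-circuits characterization) with \Cref{thm:nonunisep}'s key sub-claim, namely that $\CoT[\log^k n, \poly(n), \log n] \subseteq \TC^{k-1}$, specialized to $k = 2$. Concretely, I would argue the two inclusions separately. For membership in $\LOOP[\log^2 n, \poly(n), 1]$: each of $\prob{LIN}$, $\prob{MATCH}$, $\prob{FCFL}$ lies in nonuniform $\NC^2$, meaning it is decided by a family of polynomial-size Boolean circuits of depth $O(\log^2 n)$. By the second half of \Cref{thm:loop}, any language decided by nonuniform polynomial-size Boolean circuits of depth $T(n)$ lies in $\LOOP[T(n), \poly(n), 1]$; taking $T(n) = \log^2 n$ gives the desired containment. (I would cite \Cref{app:nc2} for the $\NC^2$ membership of the three problems and simply quote it.)

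For non-membership in $\CoT[\log^2 n, \poly(n), \log n]$: by the sub-claim proved inside \Cref{thm:nonunisep}, $\CoT[\log^2 n, \poly(n), \log n] \subseteq \TC^{1}$. So it suffices to exhibit, for each of the three problems, a standard complexity-theoretic reason that it is \emph{not} in nonuniform $\TC^1$. Here I would invoke known hardness results: $\prob{FCFL}$ (membership for a fixed context-free grammar) is $\NC^1$-hard under the appropriate reductions, and more importantly all three are hard for complexity classes believed to strictly contain $\TC^1$ — e.g. $\prob{LIN}$ (solving linear systems over a field) is complete for a class at the $\NC^2$ level, and $\prob{MATCH}$ is hard for classes not contained in $\TC^1$. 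Actually, the cleanest route is to appeal to the conditional nature matching the paper's style: under the widely-believed separation $\TC^1 \subsetneq \NC^2$, a problem that is $\NC^2$-hard (or hard for $\mathsf{DET}$, or for a class sandwiched between $\TC^1$ and $\NC^2$) cannot lie in $\TC^1$, hence cannot lie in $\CoT[\log^2 n, \poly(n), \log n]$.

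The main obstacle is making the non-membership \emph{unconditional} rather than conditional on $\TC^1 \subsetneq \NC^2$: the proposition as stated has no hypothesis, unlike \Cref{thm:nonunisep}. I would resolve this by checking whether any of the three problems is known to be \emph{provably} outside $\TC^1$ — and here I believe $\prob{FCFL}$ does the job, since by Barrington–Corbett-style lower bounds certain context-free languages require $\mathsf{NC}^1$-depth and, more to the point, word problems for appropriate groups give languages outside $\TC^0$; but separating from $\TC^1$ unconditionally is not known. So I suspect the intended reading is that the proposition is implicitly conditional, or that "$\setminus$" here is shorthand given the standard assumptions the section already invokes ("under standard complexity assumptions"). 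My proof plan is therefore: (i) quote $\NC^2$-membership from \Cref{app:nc2}; (ii) apply \Cref{thm:loop} for the $\LOOP$ upper bound; (iii) apply the $\CoT \subseteq \TC^{k-1}$ bound with $k=2$; (iv) conclude via the assumed $\TC^1 \subsetneq \NC^2$ separation together with $\NC^2$-hardness (or $\mathsf{DET}$-hardness) of each of the three problems, again citing \Cref{app:nc2}. I would flag step (iv) as the place where a genuine hardness citation is needed and where any unconditional claim would have to be scrutinized.
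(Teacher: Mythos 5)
Your proposal matches the paper's intended route. The paper's \Cref{app:nc2} establishes only that $\prob{LIN}$, $\prob{MATCH}$, $\prob{FCFL}$ lie in nonuniform $\NC^2$, from which the $\LOOP$ upper bound follows via \Cref{thm:loop} exactly as you describe in steps (i)--(ii), and the $\CoT[\log^k n,\poly(n),\log n]\subseteq \TC^{k-1}$ bound used in step (iii) is precisely the sub-claim extracted from the proof of \Cref{thm:nonunisep}.

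Your concern at step (iv) is well-founded and in fact points at a gap the paper itself does not close. The appendix announces that establishing $\NC^2$ membership "thereby prov[es] the statement," but that supplies only the $\LOOP$ containment. To conclude non-membership in $\CoT[\log^2 n,\poly(n),\log n]$, one must argue that each problem lies outside $\TC^1$ (via $\CoT[\log^2 n,\poly(n),\log n]\subseteq \TC^1$), and no lower bound --- conditional or unconditional --- is supplied for any of the three problems. Even granting $\TC^1\subsetneq\NC^2$, one still needs each problem to be hard for some class not contained in $\TC^1$, and \Cref{app:nc2} proves only membership, not hardness. For $\prob{LIN}$ this is particularly delicate: linear-system solving over $\mathbb{Q}$ reduces to determinant computation, which reduces to iterated integer matrix product, commonly placed inside $\TC^1$, so it is far from clear that $\prob{LIN}\notin\TC^1$ even conditionally. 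Your instinct that the non-membership half requires either an explicit hardness citation (e.g.\ completeness for $\mathsf{DET}$ plus a separation hypothesis, or $\NC^2$-hardness) or an explicit conditional hypothesis mirroring \Cref{thm:nonunisep} is the correct diagnosis of what is missing, both from your sketch and from the paper's own argument.
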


%\subsection{CoT Enables Unbounded Memoization for Sequential decision-making}

%%%%%%%%%%%%%%%%%%%%%%%%%%%%%%%%%%%%%%%%%%%%%%%%%%%%%%%%%%%%
\section{Chain-of-Thought Enables Probabilistic Reasoning}\label{sec:prob} 
In this section, we investigate the expressiveness of CoT with stochastic decoding for probabilistic tasks.
\Cref{sec:p:pro} introduces the task formulation and defines the model.
\Cref{sec:p:sep} analyzes the expressive power of CoT in an idealized setting and establishes a formal separation from Looped TF.
Finally, \Cref{sec:p:self} shows that even \emph{weak} CoT, when combined with \emph{self-consistency}, exhibits sufficient expressive power.

\subsection{Problem Setting}\label{sec:p:pro}

\paragraph{Structured Reasoning as Self-Reducibility} 
Complex reasoning tasks, such as solving puzzles, proving theorems, or planning actions, often require producing a globally consistent output through a sequence of interdependent local decisions, each addressing a subproblem.
To formalize this compositional structure, we adopt the notion of \emph{self-reducibility}~\citep{Schnorr1976}.
\begin{definition}[Informal: Self-reducibility~\citep{Schnorr1976}]
A relation \( R \subseteq \Sigma^* \times \Sigma^* \) is \emph{self-reducible} if any solution \( y = y_1 \dots y_n \) to input \( x \) can be constructed incrementally, such that a prefix \( y_{1:\sigma(x)} \) determines a subproblem \( \psi(x, y_{1:\sigma(x)}) \) whose solution completes \( y \). See \Cref{def:self-red} for a formal definition.
\end{definition}
For example, consider the Boolean satisfiability problem (SAT). A satisfying assignment can be constructed incrementally, where each partial assignment defines a simpler subproblem over the remaining variables. This process illustrates the canonical form of self-reducibility.

\paragraph{Task Formulation}  
We formalize the computational task of approximately sampling a structured solution.
\begin{definition}[\( \varepsilon \)-Approximate Sampling (FPAUS)]  
Let \( R \subseteq \Sigma^* \times \Sigma^* \) be a self-reducible relation, and let \( p(\cdot \mid x) \) denote the uniform distribution over the solution set \( R(x) \coloneq \{ y \mid (x, y) \in R \} \).  
The task of \emph{\( \varepsilon \)-approximate sampling}, also known as \emph{almost uniform generation}, is to, for any given error parameter \( \varepsilon > 0 \), output a distribution \( q(\cdot \mid x) \) such that for all \( x \in \Sigma^* \),
\begin{equation}
\| q(\cdot \mid x) - p(\cdot \mid x) \|_{\mathrm{TV}} \leq \varepsilon,
\end{equation}
where \( \| \cdot \|_{\mathrm{TV}} \) denotes the total variation distance.
\end{definition}

\paragraph{Probabilistic CoT Model} 
We consider a CoT equipped with sampling-based decoding, where stochastic branching at intermediate steps allows for diverse reasoning paths and autoregressively defines conditional distributions over output tokens. This formulation aligns with the behavior of LLMs, which explore a space of possible reasoning trajectories and produce varied outputs across different sampling runs.
\begin{definition}[Probabilistic CoT]\label{def:prob-cot}
Let \( V \) be a finite vocabulary and \( V^* \) its Kleene closure.  
A \emph{probabilistic CoT} stochastically generates, given an input \( x \in V^* \), a sequence of output blocks of the form
\[
\langle r_1 \rangle \langle e \rangle \langle y_1 \rangle \langle e' \rangle \;
\langle r_2 \rangle \langle e \rangle \langle y_2 \rangle \langle e' \rangle \;
\cdots \;
\langle r_m \rangle \langle e \rangle \langle y_m \rangle \langle e' \rangle,
\]
where each \( r_i \in V^* \) is a reasoning path, \( e, e' \in V \) are special delimiter tokens, yielding the final output string \( y_1 \dots y_m \).
%with \( |r_i| \leq L(|x|) \) for some \( L(n) \in \poly(n) \)
%
The generation proceeds \emph{autoregressively} via \emph{next-token prediction}: for each \( i \in [m] \), the model generates a reasoning step \( r_i \) followed by an output token \( y_i \), conditioned on the input \( x \), previous outputs \( y_{<i} \), and prior reasoning steps \( r_{<i} \). We denote by \( \pi \) the next-token conditional distribution defined by the model.
\end{definition}

\paragraph{Approximate Counting}
Approximate sampling is closely related to \emph{approximate counting}~\citep{jerrum1986random}.  
For a relation \( R \), define \( f(x) \coloneq \lvert R(x) \rvert \), the number of solutions \( y \) such that \( (x, y) \in R \). A \emph{Fully Polynomial-Time Approximation Scheme (FPTAS)} is a deterministic algorithm that, for any input \( x \) and \( \varepsilon > 0 \), outputs \( \hat{f}(x) \) such that \((1 - \varepsilon) f(x) \le \hat{f}(x) \le (1 + \varepsilon) f(x),\)
in time polynomial in \( |x| \) and \( 1/\varepsilon \).
A \emph{Fully Polynomial Randomized Approximation Scheme (FPRAS)} is a randomized version that additionally takes \( \delta > 0 \), and satisfies the same guarantee with probability at least \( 1 - \delta \). See \Cref{app:approx} for details.

\subsection{CoT is More Expressive than Looped TF in Approximate Inference}
\label{sec:p:sep}
Whereas the previous section showed the superiority of Looped TF over deterministic CoT, we now present a separation in the opposite direction.
Since deterministic models cannot perform sampling, we consider approximate counting; for self-reducible relations, it is equivalent to approximate sampling~\citep{jerrum1986random}, and thus serves as a proxy for evaluating probabilistic inference capabilities.
\begin{theorem}[\citealp{jerrum1986random}]
For self-reducible relations, approximate counting and approximate sampling are polynomial-time equivalent.
\end{theorem}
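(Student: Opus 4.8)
This is a two-way polynomial-time equivalence, so I would prove each reduction separately; both exploit the tree structure induced by self-reducibility. From an input $x$, fixing a prefix $y_{1:\sigma(x)}$ yields the subproblem $\psi(x, y_{1:\sigma(x)})$, and iterating this reduction from $x$ down to trivial base cases (whose solution sets are small and explicitly enumerable) produces a tree of depth $k = \poly(|x|)$ and bounded branching factor $b = \poly(|x|)$ whose leaves are exactly the elements of $R(x)$.

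\emph{Approximate counting $\Rightarrow$ approximate sampling.} Given an FPRAS $\widehat{N}$ for $|R(x)|$, I would sample top-down through this tree: having committed to a prefix $w$ (current subproblem $\psi(x,w)$), call $\widehat{N}$ on each child $\psi(x, w\sigma)$ with accuracy $\varepsilon'$ and failure probability $\delta'$ to get $\widehat{N}_\sigma$, then pick the next block $\sigma$ with probability $\widehat{N}_\sigma / \sum_{\sigma'}\widehat{N}_{\sigma'}$, and recurse until reaching a leaf $y$. An induction on depth shows that, conditioned on all of the at most $kb$ calls to $\widehat{N}$ succeeding, the law of $y$ is within $O(k\varepsilon')$ of uniform on $R(x)$ in total variation. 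Setting $\varepsilon' = \Theta(\varepsilon/k)$, $\delta' = \Theta(\delta/(kb))$, and applying a union bound gives the sampler in time $\poly(|x|, 1/\varepsilon, \log(1/\delta))$.

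\emph{Approximate sampling $\Rightarrow$ approximate counting.} I would fix a self-reduction chain $x = x_0, x_1, \dots, x_k$ with $|R(x_k)|$ known exactly, where $x_{i+1}$ restricts $x_i$ by fixing its next block to a value $\sigma_i^\star$ chosen \emph{adaptively}: draw many samples from the almost-uniform generator on $R(x_i)$ and let $\sigma_i^\star$ be the most frequent next block, so that its empirical frequency $\widehat{\rho}_i \ge 1/b - \varepsilon'' = \Omega(1/\poly(|x|))$, and $\widehat{\rho}_i$ estimates $\rho_i := |R(x_{i+1})|/|R(x_i)|$. Then $|R(x)| = |R(x_k)| \cdot \prod_{i=0}^{k-1} \rho_i^{-1}$. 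Since each true ratio is at least $1/b$, a Chernoff bound shows $\poly(|x|,1/\varepsilon,\log(1/\delta))$ samples per step suffice to estimate $\rho_i$ to within a factor $1\pm\varepsilon'$ except with probability $\delta'$; multiplying the $k$ estimates yields a factor $(1\pm\varepsilon')^k$, which lies in $(1\pm\varepsilon)$ once $\varepsilon' = \Theta(\varepsilon/k)$, and a union bound over the chain handles $\delta'$.

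\emph{Main obstacle.} The delicate point is the second direction: the generator is only \emph{almost} uniform, so $\widehat{\rho}_i$ carries an additive bias of up to the generator's total-variation error $\varepsilon''$, which — after dividing by $\rho_i \ge 1/b$ — inflates to a relative error of order $\varepsilon'' b$. One must therefore invoke the generator with accuracy $\varepsilon'' = \Theta(\varepsilon/(kb))$, confirm that its running time stays polynomial at that setting, and verify that these additive biases compose with the multiplicative sampling errors so as to still yield a $(1\pm\varepsilon)$-approximation after telescoping over the length-$k$ chain. Keeping this two-source error accounting consistent while holding the total sample count — hence the running time — polynomial in $|x|$, $1/\varepsilon$, and $\log(1/\delta)$ is the technical core of the argument.
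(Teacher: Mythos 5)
The paper does not prove this statement; it is imported by citation to Jerrum, Valiant, and Vazirani (1986) and merely restated in \Cref{app:approx}. Your sketch faithfully reconstructs both directions of the standard JVV argument: the top-down tree-of-derivations sampler driven by approximate sub-counts for counting $\to$ sampling, and the adaptive telescoping ratio estimator for sampling $\to$ counting, together with the correct diagnosis that the generator's total-variation bias inflates to a relative error of order $\varepsilon'' b$ once divided by $\rho_i \ge 1/b$, which forces $\varepsilon'' = \Theta(\varepsilon/(kb))$. One refinement to keep in mind: as written, your first direction produces a generator running in time $\poly(|x|, 1/\varepsilon)$, since you dial the counter's accuracy to $\varepsilon' = \Theta(\varepsilon/k)$ and the FPRAS runs in $\poly(1/\varepsilon')$; the full JVV result (and the convention implicit in the paper's own \Cref{thm:approx_sampler}) delivers the sharper $\poly(|x|, \log(1/\varepsilon))$ running time for the almost-uniform generator, and closing that gap requires either a rejection step that compares the realized path probability against a global count estimate, or a bootstrapping pass via the Sinclair--Jerrum backtracking Markov chain. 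For ``polynomial-time equivalent'' in the plain sense of the statement, your account is complete and correct.
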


In computational theory, approximate sampling or counting algorithms are formalized as PTMs that run in polynomial time~\citep{jerrum1986random}. Prior work has shown that probabilistic CoT can simulate PTM.
\begin{lemma}[Informal: CoT as PTM~\citep{nowak2024}]\label{thm:psep}
Any polynomial-time probabilistic Turing machine can be simulated by a probabilistic CoT, such that the model reproduces the same output distribution and halts within a polynomial number of decoding steps.
\end{lemma}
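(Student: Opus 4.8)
The plan is to reduce to the well-known construction that a decoder-only Transformer with chain-of-thought can simulate a Turing machine step by step, and then to augment that construction so that the machine's coin flips are realized by the stochastic decoding of the probabilistic CoT. Fix a polynomial-time PTM \( M \) with running-time bound \( p(n) \). On any input of length \( n \), every configuration of \( M \) — finite control state, head position, and tape contents — occupies \( O(p(n)) \) cells and can be encoded as a block of \( O(p(n)) \) tokens over a fixed vocabulary. The reasoning path of the CoT will emit the sequence of configurations \( c_0, c_1, \dots, c_T \) with \( T \le p(n) \), one block after another, with the input \( x \) copied into \( c_0 \) at the start.

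The core step is to realize a single transition \( c_t \mapsto c_{t+1} \). Here I would (i) have the model decode one fresh \emph{randomness token} \( b_t \in \{0,1\} \), arranging the parameters so that at exactly these positions the two output logits are equal, hence \( b_t \) is uniform on \( \{0,1\} \) — this is where the stochastic decoding of \Cref{def:prob-cot} does its work; and (ii) use attention to locate the tape symbol under the head in block \( c_t \), apply the finite transition function of \( M \) (a fixed lookup table, hence a constant-size feedforward computation) to the triple (current state, scanned symbol, \( b_t \)), and write block \( c_{t+1} \) by copying \( c_t \) everywhere except the head cell and shifting the head marker. Apart from the single extra input \( b_t \), this per-step gadget is exactly the deterministic Transformer-simulates-TM construction, so correctness of the step-by-step simulation reduces to the known deterministic case.

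It then remains to emit the output in the block format of \Cref{def:prob-cot}: the bulk of the simulation lives inside the first reasoning path \( r_1 \); once a halting configuration \( c_T \) has been written, the model reads the output region of the tape and emits its first symbol as \( y_1 \), and each subsequent block \( r_i \) merely locates and copies out the \( i \)-th output symbol as \( y_i \). Since \( M \) halts within \( p(n) \) steps, the whole process terminates after \( O(p(n)) \) configuration blocks of \( O(p(n)) \) tokens each — a polynomial number of decoding steps — and this bound may be hard-coded so the model always halts on time. Because each \( b_t \) is an i.i.d.\ fair coin, the induced distribution over computation paths, and hence over outputs \( y_1 \dots y_m \), coincides exactly with that of \( M \).

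The main obstacle is injecting the randomness \emph{exactly}: the decoding distribution is a softmax over logits, so obtaining a precisely uniform \( b_t \) requires forcing the relevant logit coordinates to be equal and independent of context, and one must verify this is compatible with the deterministic per-step gadget (and with whatever precision model is assumed). A secondary bookkeeping issue is interleaving the randomness tokens, the configuration blocks, and the \( \langle r_i\rangle\langle e\rangle\langle y_i\rangle\langle e'\rangle \) delimiters so the transition gadget always knows whether the token it is about to consume is fresh randomness or part of the current configuration; a fixed positional/tagging scheme inside each block resolves this, but it is the part that needs the most care.
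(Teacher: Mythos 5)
The paper does not prove this lemma; it is stated explicitly as a citation to \citet{nowak2024}, so there is no ``paper's own proof'' to compare against. Your reconstruction is the standard argument used in that line of work: encode configurations as token blocks, replay the deterministic Transformer-simulates-TM gadget, source the machine's coin flips from the stochastic decoder by flattening the relevant two logits, and package the whole trace into the $\langle r_i\rangle\langle e\rangle\langle y_i\rangle\langle e'\rangle$ block format with the simulation living in $r_1$ and the remaining blocks doing output readout. This matches the substance of what Nowak et al.\ establish.

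Two small remarks on the point you flag as the main obstacle. First, under the paper's formal stochastic decoding rule (\Cref{app:tf}), $\mathrm{Dec}(z) \sim \mathrm{Multinomial}(z_i/\sum_j z_j)$ with $z_i > 0$ for all $i$, so you cannot literally zero out the logits of all tokens other than $\{0,1\}$; you either have to argue the leakage can be made negligible and absorbed into the approximation guarantee, or (as is more common) design the gadget so that any off-support token that is sampled is detectable and recoverable, or restrict the vocabulary at randomness positions via the output projection. Second, a PTM's coin is fair by convention, so equal logits suffice, but if one wanted biased coins the same scheme works with the appropriate ratio of logits; worth noting since the downstream use in \Cref{thm:p:sep} only needs to implement an FPRAS, which is robust to small perturbations in the coin bias anyway. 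Neither of these affects the correctness of the high-level plan.
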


%\paragraph{Separation between Probabilistic CoT and Looped TF}
%We now compare CoT with Looped TF.
Let \( \mathsf{FPRAS} \) and \( \mathsf{FPTAS} \) denote the function classes admitting an FPRAS and an FPTAS, respectively.  
Some problems admit an FPRAS, but no FPTAS is currently known for them~\citep{jerrum2004polynomial}.
Intuitively, allowing failure (but can be small for arbitrarily high confidence) makes such problems tractable via randomized methods. %, such as Markov chain Monte Carlo (MCMC).
Assuming this gap, we identify the first task for which CoT is more powerful than Looped Transformer:
\begin{theorem}\label{thm:p:sep}
Assuming that \( \mathsf{FPTAS} \ne \mathsf{FPRAS} \), there exists a relation \( R \) such that a probabilistic CoT can approximate the count \( |R(x)| \) within any desired relative error \( \varepsilon > 0 \) with high probability in polynomial time, whereas no (deterministic) Looped TF can achieve the same approximation within a polynomial number of loop iterations.
\end{theorem}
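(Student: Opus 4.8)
The plan is to establish the separation by exhibiting a concrete self-reducible relation $R$ whose counting function $f(x) = |R(x)|$ lies in $\mathsf{FPRAS} \setminus \mathsf{FPTAS}$, then show that (i) a probabilistic CoT can implement the FPRAS, and (ii) any deterministic Looped TF running in polynomially many loops would yield an FPTAS, contradicting the assumed separation. The natural candidate for $R$ is the relation underlying a canonical $\#\mathsf{P}$-complete-but-FPRAS-able problem, e.g., counting the number of satisfying assignments of a DNF formula, or — more cleanly, since we need self-reducibility — the permanent of a 0/1 matrix / perfect matchings in a bipartite graph, for which the Jerrum–Sinclair–Vigoda FPRAS exists and which is self-reducible in the standard way (fix whether a given edge is in the matching, recurse). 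I would state up front which relation is used and cite the existence of its FPRAS.

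\textbf{Step 1 (CoT side).} Invoke \Cref{thm:psep}: the FPRAS for $R$ is a polynomial-time PTM, and by the CoT-as-PTM simulation it can be realized by a probabilistic CoT that reproduces the PTM's output distribution within a polynomial number of decoding steps. Since the FPRAS outputs, with probability $\ge 1-\delta$, an estimate $\hat f(x)$ with $(1-\varepsilon) f(x) \le \hat f(x) \le (1+\varepsilon) f(x)$, the simulating CoT does the same. (One should note $\varepsilon, \delta$ are passed as part of the input in unary or as $1/\varepsilon, 1/\delta$ so that "polynomial time" is meaningful; this matches the FPRAS convention in \Cref{sec:p:pro}.) This gives the positive half of the theorem essentially for free.

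\textbf{Step 2 (Looped TF side — the main obstacle).} Suppose for contradiction that some deterministic Looped TF $M$, using $\mathrm{poly}(n)$ loop iterations (and, say, $\log$-precision with $\mathrm{poly}$ embedding width), solves the same approximate-counting task for $R$ for every error parameter $\varepsilon$. The key point is that a Looped TF is a deterministic, polynomial-time-computable function of its input: $T(n) = \mathrm{poly}(n)$ loops of a fixed-size block on a length-$n$ sequence with $\log$-precision arithmetic can be evaluated in deterministic polynomial time (this is the routine direction of \Cref{thm:loop}-style arguments, or can be argued directly from the definition of the block). Feeding $(x, \varepsilon)$ appropriately encoded, $M$ then deterministically outputs an $(1\pm\varepsilon)$-approximation to $f(x)$ in time polynomial in $|x|$ and $1/\varepsilon$ — i.e., $M$ is an FPTAS for $f$. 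Hence $f \in \mathsf{FPTAS}$, contradicting $\mathsf{FPTAS} \ne \mathsf{FPRAS}$ (once we have chosen $f \in \mathsf{FPRAS} \setminus \mathsf{FPTAS}$, which is exactly what the hypothesis $\mathsf{FPTAS}\ne\mathsf{FPRAS}$ guarantees exists).

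\textbf{The hard part} is making Step 2 airtight: one must be careful that the Looped TF's nonuniformity (a different $M_n$ per input length $n$) does not give it extra power over an FPTAS. This is handled by noting that "FPTAS" in the relevant literature, and the gap $\mathsf{FPTAS} \ne \mathsf{FPRAS}$, can be taken in the nonuniform sense as well (or, alternatively, that the separation $\mathsf{FPTAS}\neq\mathsf{FPRAS}$ is believed to hold even against nonuniform/advice-taking deterministic schemes, e.g.\ because a nonuniform FPTAS for a $\#\mathsf{P}$-complete problem would collapse the counting hierarchy under standard assumptions); I would state this caveat explicitly and either restrict to the uniform setting or invoke the nonuniform gap hypothesis. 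A secondary subtlety is that approximate counting, not approximate sampling, is the right object here — deterministic Looped TF cannot sample at all, so phrasing the separation via counting (justified by the Jerrum–Valiant–Vazirani equivalence for self-reducible $R$ cited just above) is essential, and I would flag that the equivalence is what lets us interpret a hypothetical deterministic solver as an FPTAS rather than an FPAUS. The remaining details — encoding $\varepsilon$, bounding the per-loop evaluation cost, and checking the output format — are routine.
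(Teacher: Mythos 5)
Your proposal follows essentially the same route as the paper's own proof: invoke \Cref{thm:psep} to realize an FPRAS as a probabilistic CoT, observe that a deterministic Looped TF running in polynomially many loops is a deterministic polynomial-time procedure and would therefore give an FPTAS, and conclude from $\mathsf{FPTAS}\neq\mathsf{FPRAS}$. The paper does not bother to name a concrete $R$ (it just uses the existence guaranteed by the hypothesis), whereas you suggest DNF counting or permanent/perfect matchings — a harmless elaboration.

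One thing worth flagging: the nonuniformity caveat you raise in Step~2 is a genuine gap, and the paper's own one-line claim that Looped TF with polynomial loops ``correspond to deterministic polynomial-time Turing machines'' glosses over it. Looped TF as defined in the paper is a nonuniform family (a distinct $M_n$ per input length), so the correct conclusion is that it yields a nonuniform (advice-taking) deterministic approximation scheme, not literally an FPTAS. Your fix — either restrict to a uniform family or interpret the hypothesis $\mathsf{FPTAS}\neq\mathsf{FPRAS}$ in the nonuniform sense — is exactly what is needed to make the argument airtight, and you are right that this should be stated explicitly. You are also right that phrasing the separation via approximate counting rather than sampling is essential, since a deterministic Looped TF cannot produce a nondegenerate output distribution at all; the paper makes this move implicitly by choosing counting as the task, and your observation that the Jerrum--Valiant--Vazirani equivalence is what justifies this choice is the correct justification.
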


\subsection{Expressiveness of Weak CoT}\label{sec:p:self} 
While \Cref{thm:psep} analyzes an idealized setting where a CoT, viewed as a PTM, is assumed to receive explicit error parameters, such assumptions diverge from the behavior of practical LLMs.  
In practice, models are trained within bounded error, but do not offer explicit control over the accuracy of their outputs.  %
In this section, we examine the expressiveness of \emph{weak} CoT models, incorporating inference-time strategies such as majority voting. The central question is: \emph{Can weak CoT be approximate samplers?}

\subsubsection{Formal Setting for Weak CoT}
% 先に幾つかのassumptionを与える
We consider models trained to minimize the next-token prediction loss as follows.
\begin{definition}[Token-Wise Cross-Entropy Loss under Teacher Forcing]
For each input \( x \in \Sigma^* \), let \( p(y_i \mid x, y_{<i}) \) denote the target conditional distribution over the next token \( y_i \in \Sigma \) at position \( i \),  
and let \( \pi(y_i \mid x, y_{<i}, r_i) \) denote the model's prediction conditioned on a reasoning trajectory \( r_i \).
Then, the token-wise cross-entropy loss under teacher forcing is defined as
\begin{equation}
\mathcal{L}_{\mathrm{CE}}(x, y_{<i}, r_i) 
\coloneqq 
\mathbb{E}_{y_i \sim p(\cdot \mid x, y_{<i})} \left[
- \log \pi(y_i \mid x, y_{<i}, r_i)
\right],
\end{equation}
where \( r_i \) denotes a reasoning trajectory sampled from \( \pi(r_i \mid x, y_{<i}) \).
\end{definition}
We assume that each reasoning path \( r_i \) depends only on the preceding output tokens \( y_{<i} \), and is conditionally independent of the previous reasoning steps \( r_{<i} \).
\begin{assumption}[Conditional Independence of Reasoning Paths]
For each position \( i \in [m] \), the reasoning path \( r_i \) is conditionally independent of prior reasoning steps given the input and preceding outputs: \(\pi(r_i \mid x, r_{<i}, y_{<i}) = \pi(r_i \mid x, y_{<i}).\)
\end{assumption}
We further assume access to repeated samples of reasoning paths:
\begin{assumption}[Sampling Independence]\label{assumption:sampling}
We assume access to an unbounded number of independent samples of reasoning paths, i.e., for each \( i \in [m] \), one can sample \( r_i^{(1)}, r_i^{(2)}, \dots \sim \pi(\cdot \mid x, y_{<i}) \) independently.
\end{assumption}
Then, we define the convergence of pertaining. 
\begin{definition}[\((\alpha, \gamma)\)-Weak Token-Wise Learning]\label{assumption:token-ce}
Let \( \alpha: \mathbb{N}\to\mathbb{R}_{>0} \) be any positive function, and let \( \gamma \in (0, \tfrac{1}{2}) \).
We assume that for each token position \( i \in [m] \), and for any input \( x \in \Sigma^* \) and prefix \( y_{<i} \), the model satisfies the following  
\emph{token-wise conditional cross-entropy loss bound under teacher forcing}:  
\begin{equation}
\Pr_{r_{i} \sim \pi(\cdot \mid x, y_{<i})} \left[
\mathcal{L}_{\mathrm{CE}}(x, y_{<i}, r_i) \le \alpha(|x|) %\frac{1}{|R(x)|}
\right] \ge \frac{1}{2} + \gamma.
\end{equation}
\end{definition}
\textbf{Remark.}
This definition formalizes a weak yet meaningful correctness criterion for token-wise predictions in reasoning models.
The task is assumed to be sufficiently difficult that directly predicting \( y_i \) from \( x \) and \( y_{<i} \) alone does not yield low cross-entropy.  
Introducing an intermediate reasoning step \( r_i \) is thus necessary to compensate for this gap.
It guarantees the model achieves sufficiently low cross-entropy loss with probability at least \(\tfrac{1}{2} + \gamma\).
This raises the following questions:
\begin{itemize}[leftmargin=*]
\item \emph{Can a weak CoT support inference with arbitrarily high confidence?}
\item \emph{Can a weak CoT be transformed into an approximate sampler for any desired accuracy?}
\end{itemize}
These questions are addressed in the subsequent sections, with formal proofs deferred to \Cref{app:prob} and key ideas shown in Fig.\,\ref{fig:weak}.
\begin{figure}[t]
  \centering
  \includegraphics[width=\linewidth]{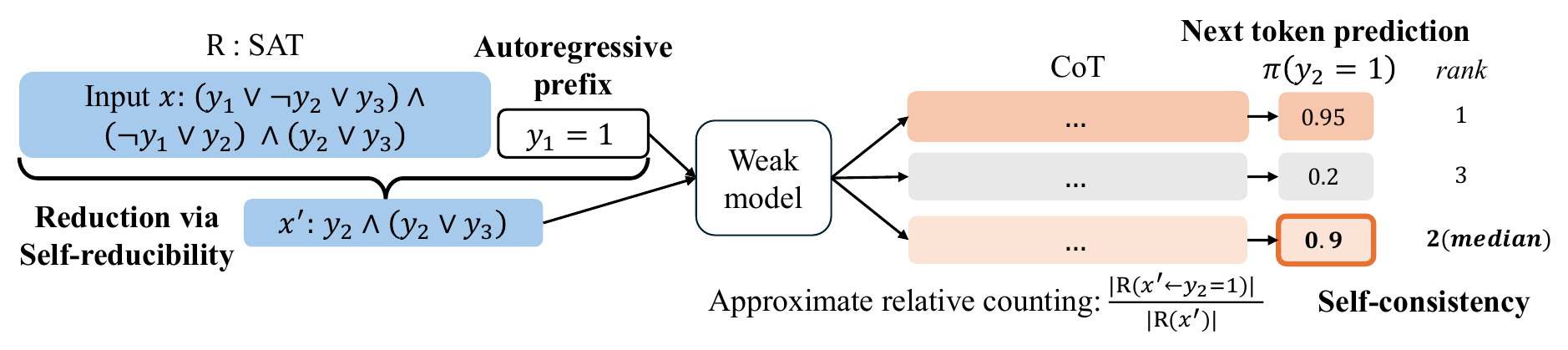}
  \caption{
    Illustration of next-token prediction in CoT as approximate relative counting over a self-reducible relation. Given a SAT instance \( x \), self-reducibility enables reduction to a subproblem \( x' \) based on the autoregressive prefix (e.g., \( y_1 = 1 \)). The probability \( \pi(y_2 = 1) \) is approximated via relative counting, with self-consistency aggregating predictions from multiple reasoning paths.}
  \label{fig:weak}
\end{figure}

\subsubsection{Confidence Amplification via Self-Consistency}\label{sec:cons} 
We show that \emph{self-consistency}\citep{wang2023selfconsistency} can enhance the confidence. While prior work\citep{wu2025inference} focused on classification tasks, we consider settings in which the output is a probability distribution.
\begin{lemma}[Coordinate-wise Self-Consistency]\label{lemma:self}
Suppose that a CoT model \( \pi \) achieves \((\alpha, \gamma)\)-weak token-wise learning with respect to the target distribution \( p(y_i \mid x, y_{<i}) \), for some function \( \alpha : \mathbb{N} \to \mathbb{R}_{>0} \) and constant \( \gamma \in (0, \tfrac{1}{2}) \).  
Let \( \delta \in (0,1) \).  
Fix any token position \( i \in [m] \). Let \( r_i^{(1)}, \dots, r_i^{(T)} \sim \pi( \cdot \mid x, y_{<i}) \) be \(T\) independent reasoning paths,  
and let \( \pi^{(t)} = \pi(Y_i \mid x, y_{<i}, r_i^{(t)}) \in \Delta^{|\mathcal{V}|} \) denote the predicted distribution at position \( i \) for each path.
Define the consensus prediction \( \pi^\star \in \mathbb{R}^{|\mathcal{V}|} \) by taking the coordinate-wise median:
\begin{equation}
\pi^\star(y) = \operatorname{median} \left( \pi^{(1)}(y), \dots, \pi^{(T)}(y) \right) \quad \text{for each } y \in \mathcal{V}.
\end{equation}
Then, for \(T \ge \frac{2}{\gamma^2} \log\left( \frac{|\mathcal{V}|}{\delta} \right)\), with probability at least \(1 - \delta\), the \textit{self-consistent} prediction satisfies
\begin{equation}
\left| p(y_i \mid x, y_{<i}) - \pi^\star(y_i) \right| \leq O\left( \sqrt{ \alpha(|x|) } \right) \quad \text{for all } y_i \in \mathcal{V}.
\end{equation}
That is, self-consistency improves the reliability of predictions by aggregating multiple reasoning paths.
\end{lemma}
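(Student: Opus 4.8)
The plan is to reduce the statement to a standard median-amplification argument, once the weak cross-entropy guarantee has been turned — for a \emph{single} reasoning path — into a pointwise (per-symbol) closeness bound between the model's prediction and the target distribution. Fix $x$ and the prefix $y_{<i}$, and abbreviate $p \coloneq p(\cdot \mid x, y_{<i})$ and $\pi_r \coloneq \pi(\cdot \mid x, y_{<i}, r)$ for a reasoning path $r$. Using the decomposition $\mathcal{L}_{\mathrm{CE}}(x, y_{<i}, r) = H(p) + \KL(p \,\|\, \pi_r)$ and the nonnegativity of the Shannon entropy $H(p)$, the low-loss event $\mathcal{L}_{\mathrm{CE}}(x, y_{<i}, r) \le \alpha(|x|)$ forces $\KL(p \,\|\, \pi_r) \le \alpha(|x|)$; Pinsker's inequality then gives $\|p - \pi_r\|_{\mathrm{TV}} \le \sqrt{\alpha(|x|)/2}$, and since total variation distance dominates the discrepancy on any single atom, $|p(y) - \pi_r(y)| \le \sqrt{\alpha(|x|)/2}$ for \emph{every} $y \in \mathcal{V}$. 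This is exactly where the $O(\sqrt{\alpha})$ rate originates.

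Next I would invoke $(\alpha,\gamma)$-weak token-wise learning (\Cref{assumption:token-ce}): a path $r \sim \pi(\cdot \mid x, y_{<i})$ lies in the low-loss event with probability at least $\tfrac12 + \gamma$, and on that event all coordinates of $\pi_r$ are within $\sqrt{\alpha(|x|)/2}$ of $p$. Consequently, for each fixed symbol $y$, the indicators $B_t \coloneq \mathbf{1}\{\,|\pi^{(t)}(y) - p(y)| \le \sqrt{\alpha(|x|)/2}\,\}$, $t = 1,\dots,T$, are i.i.d.\ (by \Cref{assumption:sampling}) with mean at least $\tfrac12 + \gamma$. Hoeffding's inequality gives $\Pr[\sum_t B_t \le T/2] \le \exp(-2\gamma^2 T)$, which is at most $\delta/|\mathcal{V}|$ once $T \ge \tfrac{2}{\gamma^2}\log(|\mathcal{V}|/\delta)$. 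Whenever a strict majority of $\pi^{(1)}(y),\dots,\pi^{(T)}(y)$ lies in $[\,p(y) - \sqrt{\alpha(|x|)/2},\, p(y) + \sqrt{\alpha(|x|)/2}\,]$, fewer than $T/2$ of these values fall strictly below the left endpoint and fewer than $T/2$ strictly above the right endpoint, so the coordinate-wise median $\pi^\star(y)$ is caught inside that interval as well; hence $|\pi^\star(y) - p(y)| \le \sqrt{\alpha(|x|)/2} = O(\sqrt{\alpha(|x|)})$. A union bound over the $|\mathcal{V}|$ symbols then delivers the claim simultaneously for all $y$ with probability at least $1 - \delta$. (One could in fact avoid the union bound by applying the concentration step to the shared low-loss event, which controls all coordinates at once, so the stated choice of $T$ has room to spare.)

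I expect the only genuinely non-routine step to be the first: extracting a pointwise $\ell_\infty$ bound from an \emph{expected}-cross-entropy condition on a single reasoning trajectory. The two moves that make it go through — discarding the nonnegative entropy term to pass from cross-entropy to KL, and Pinsker to pass from an additive KL bound to a $\sqrt{\alpha}$-scale total-variation (hence per-coordinate) bound — fix both the form and the rate of the final estimate, after which everything is a textbook Hoeffding-plus-median computation. A minor edge case to dispatch is when $\alpha(|x|)$ is large enough that $\sqrt{\alpha(|x|)/2} \ge 1$: there the conclusion is vacuous and nothing needs proving.
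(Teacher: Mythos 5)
Your proof is correct and follows essentially the same route as the paper's: cross-entropy $\ge$ KL divergence, Pinsker to get a $\sqrt{\alpha}$-scale TV bound, TV dominating per-coordinate discrepancy, then concentration on the count of good paths plus the standard median-in-interval argument and a union bound over $\mathcal{V}$. Your two small refinements — invoking Hoeffding directly to get $\exp(-2\gamma^2 T)$ rather than the paper's slightly awkward multiplicative Chernoff manipulation (which has a minor constant-factor slip in its last inequality), and noting the union bound is dispensable because the single low-loss event controls all coordinates at once — are both sound and make the argument a bit cleaner, but they do not change the underlying structure of the proof.
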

% \mathbb{E}_{y_i \sim p(y_i \mid x, y_{<i})} \left[- \log \pi^\star(y_i)\right]

\subsubsection{Approximate Sampling via Self-Reducibility}
We then show that the error of weak CoT can be corrected, yielding an approximate sampler. We begin by noting that next-token predictions become well-defined tasks when the underlying relation is self-reducible, thus justifying the realizability assumption underlying weak token-wise learning, as shown in Fig.\,\ref{fig:weak}.
\begin{proposition}[Reducing Next-Token Prediction to Relative Counting via Self-Reducibility]\label{prop:selfreduce}
Let \( R \subseteq \Sigma^* \times \Sigma^* \) be a self-reducible relation as defined in \Cref{def:self-red}, and suppose further that \( \sigma(x) = 1 \) for all \( x \in \Sigma^* \).
Let \( p(y \mid x) \) denote the uniform distribution over the solution set \( R(x) \coloneqq \{ y \mid (x, y) \in R \} \).  
Then, for CoT, the next-token prediction at each position \( i \) corresponds to estimating the relative size of the subproblem
\begin{equation}
p(y_i \mid y_{<i} = \bar{y}_{<i})
\propto \left| R(x') \right|,
%\left| R(x, \bar{y}_{<i} \cdot y_i) \right| = \left| R(x') \right|.
\end{equation}
where \( x' = \psi(x, \bar{y}_{<i} \cdot y_i) \), i.e., reduces to an instance of the same relation \( R \).
%Moreover, by self-reducibility, there exists a polynomial-time computable transformation \( x' = \psi(x, \bar{y}_{<i} \cdot y_i) \) such that
%\(\left| R(x, \bar{y}_{<i} \cdot y_i) \right| = \left| R(x') \right|,\)
\end{proposition}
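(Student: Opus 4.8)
The plan is to unfold the self-reducibility structure one output symbol at a time and to show that, under the uniform measure on $R(x)$, the autoregressive conditional $p(y_i \mid x, y_{<i})$ is exactly a ratio of subproblem solution counts whose numerator is $|R(x')|$; since the denominator does not depend on $y_i$, the stated proportionality follows immediately.

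First I would fix the object being counted. For an input $x$ and a string $z \in \Sigma^*$, define the prefix-restricted solution set $R_z(x) \coloneqq \{ y \in R(x) : z \text{ is a prefix of } y \}$ and write $N(x,z) \coloneqq |R_z(x)|$. Writing $\psi(x,z)$ for the subproblem obtained by iterating the one-symbol self-reduction of \Cref{def:self-red} along $z$ — well-defined precisely because $\sigma(x') = 1$ for every $x'$, so each reduction step consumes exactly one output symbol — the heart of the argument is the counting identity
\[
N(x, z) = \bigl|R\bigl(\psi(x, z)\bigr)\bigr|,
\]
valid whenever $z$ is a prefix of some element of $R(x)$ (and, with both sides zero, in the remaining cases). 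I would prove it by induction on $|z|$: the base case $|z| = 0$ is immediate since then $\psi(x,z) = x$; for the step, write $z = z_1 w$ with $z_1$ the first symbol, apply the single-step property of \Cref{def:self-red} to obtain a bijection between $R_{z_1}(x)$ and $R(\psi(x, z_1))$ that maps a solution beginning with $z_1$ to its one-symbol-shorter suffix and carries the residual prefix $w$ to $w$, then invoke the inductive hypothesis on $(\psi(x,z_1), w)$ and compose. A byproduct is that $x' = \psi(x, \bar{y}_{<i}\cdot y_i)$ is itself a legitimate input of $R$, which is the ``reduces to an instance of the same relation'' claim.

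Next I would compute the conditional. Since $p(\cdot \mid x)$ is uniform on $R(x)$, the chain rule gives, for any prefix $\bar{y}_{<i}$ with $N(x,\bar{y}_{<i}) > 0$,
\[
p\bigl(y_i \mid x,\, y_{<i} = \bar{y}_{<i}\bigr) \;=\; \frac{N\bigl(x,\, \bar{y}_{<i}\cdot y_i\bigr)}{N\bigl(x,\, \bar{y}_{<i}\bigr)} \;=\; \frac{\bigl|R(x')\bigr|}{\sum_{c \in \Sigma} \bigl|R\bigl(\psi(x, \bar{y}_{<i}\cdot c)\bigr)\bigr|},
\]
where the second equality uses the counting identity and $x' = \psi(x, \bar{y}_{<i}\cdot y_i)$. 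The denominator is independent of $y_i$, so $p(y_i \mid x, \bar{y}_{<i}) \propto |R(x')|$, which is the assertion. I would also note that the degenerate cases are consistent: if the chosen $y_i$ admits no completion then $R(x') = \emptyset$ and the conditional is $0$, while if $\bar{y}_{<i}$ is itself dead then it has probability zero under $p(\cdot\mid x)$ and the conditional is vacuous.

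The main obstacle is the counting identity, and within it the verification that the single-step self-reduction of \Cref{def:self-red} really induces a \emph{bijection} between $R_{z_1}(x)$ and $R(\psi(x,z_1))$ that commutes with passing to longer prefixes: this needs the forward direction (prepending $z_1$ to any solution of the subproblem yields a solution of $x$), the polynomial length/size bounds that keep the iteration within well-formed instances, and crucially the hypothesis $\sigma \equiv 1$, which is exactly what aligns the token-level factorization of $p$ with the reduction steps — for $\sigma > 1$ one would instead get a ratio over blocks of $\sigma(x)$ symbols rather than a per-token quantity. Everything after the identity is routine manipulation of conditional probabilities.
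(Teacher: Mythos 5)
Your proof is correct and takes the approach the paper clearly intends. The paper does not in fact include an explicit proof of this proposition---it is treated as an immediate unfolding of \Cref{def:self-red}---and your argument supplies exactly the missing details: the inductive counting identity $N(x,z)=\lvert R(\psi(x,z))\rvert$ via the one-step bijection that drops the leading symbol, followed by the chain-rule computation of the conditional and the observation that the denominator is independent of $y_i$. One point you handle well and that deserves to stay explicit is the reading of $\psi(x,\bar y_{<i}\cdot y_i)$: \Cref{def:self-red} only supplies the biconditional for second arguments of length $\sigma(x)=1$, so the expression must be understood as the $i$-fold iterated one-symbol reduction, and your induction on $\lvert z\rvert$ is precisely what makes that notational shorthand rigorous. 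Your closing remark on why $\sigma\equiv 1$ is needed---it aligns the per-token autoregressive factorization with the per-step self-reduction, whereas $\sigma>1$ would only give a block-level identity---also correctly identifies the role of that hypothesis in the statement.
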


We now present the result on the ability of a weak CoT to serve as an FPAUS.
\begin{theorem}[CoT as Approximate Sampler]\label{thm:approx_sampler}
Let \( R \) be a self-reducible relation, and let \( p(\cdot \mid x) \) denote the uniform distribution over the solution set \( R(x) \coloneqq \{ y \in \Sigma^* \mid (x, y) \in R \} \).
There exists a function \( \alpha : \mathbb{N} \to \mathbb{R}_{>0} \) such that if a CoT model \( \pi \), with vocabulary \( \mathcal{V} = \Sigma \), satisfies \((\alpha, \gamma)\)-weak token-wise learning with respect to \( p \) for some constant \( \gamma \in (0, \tfrac{1}{2}) \),  
then there exists a randomized algorithm \( \mathcal{A} \) with oracle access to \( \pi \) such that, for a given error parameter \( \varepsilon > 0 \), \( \mathcal{A} \) outputs a distribution \( q^{\pi}_\varepsilon(\cdot \mid x) \) satisfying
\begin{equation}
\bigl\|\,q^{\pi}_\varepsilon(\cdot\mid x)\;-\;
      p(\cdot\mid x)\bigr\|_{\mathrm{TV}}
\;\;\le\;\;\varepsilon.
\end{equation}
Moreover, the total running time of \( \mathcal{A} \) is bounded by a polynomial in \( (|x|, \log(1/\varepsilon)) \).
\end{theorem}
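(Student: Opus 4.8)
The plan is to turn the weak CoT into a near-exact conditional oracle by combining the self-reducible structure (\Cref{prop:selfreduce}) with the confidence amplification of self-consistency (\Cref{lemma:self}), and then feed this oracle into a plain autoregressive sampler equipped with a rejection safety net; the whole accuracy analysis is carried out via a coupling (equivalently, chain-rule) bound for total variation. Fix the target to be the uniform distribution over \(R(x)\). By \Cref{prop:selfreduce}, for every extendable prefix \(\bar y_{<i}\) the target conditional \(p(y_i\mid x,\bar y_{<i})\) is exactly the normalized relative count of the subproblem sizes \(\lvert R(\psi(x,\bar y_{<i}\,b))\rvert\), \(b\in\Sigma\); this is itself a well-posed instance of the same computational task, which is precisely what makes \((\alpha,\gamma)\)-weak token-wise learning a meaningful hypothesis rather than a vacuous one.

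Step 1 (the oracle). Given such a \(\pi\), define \(\widehat\pi(\cdot\mid x,\bar y_{<i})\) by drawing \(T\) i.i.d.\ reasoning paths (using \Cref{assumption:sampling}; by the conditional-independence assumption this only requires knowing \((x,\bar y_{<i})\)), taking the coordinate-wise median as in \Cref{lemma:self}, and then clipping to \([0,1]\) and renormalizing. \Cref{lemma:self} with \(T=O(\gamma^{-2}\log(\lvert\mathcal V\rvert/\delta))\) gives, with probability \(\ge 1-\delta\), \(\lVert \widehat\pi(\cdot\mid x,\bar y_{<i})-p(\cdot\mid x,\bar y_{<i})\rVert_{\mathrm{TV}}\le O(\sqrt{\alpha(\lvert x\rvert)})\); clipping and renormalizing only shrink this distance. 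The point to stress is that the per-query accuracy \(O(\sqrt{\alpha})\) is pinned down by the model hypothesis and is \emph{not} improved by taking more samples — \(T\) buys only confidence, and hence costs only \(\log(1/\delta)\).

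Step 2 (the sampler and its analysis). The algorithm \(\mathcal A\) builds \(y_1,\dots,y_m\) autoregressively, at step \(i\) querying the Step 1 oracle with failure budget \(\delta=\varepsilon/(2m)\) and sampling \(y_i\sim\widehat\pi(\cdot\mid x,y_{<i})\); it then tests \(y_{1:m}\in R(x)\) (polynomial time, since \(R\) is, as standard for self-reducible relations, polynomially balanced and polynomial-time decidable, so also \(m\le\poly(\lvert x\rvert)\)) and restarts from scratch on failure, outputting the first accepted string as \(q^{\pi}_{\varepsilon}(\cdot\mid x)\). For correctness, condition on all \(m\) oracle calls succeeding (probability \(\ge 1-\varepsilon/2\)) and couple \(Y\sim\mathcal A\) with \(Z\sim p\) by the maximal coupling of each conditional: since every prefix of \(Z\) is extendable to a solution, \(p(\cdot\mid x,Z_{<i})\) is well-defined along the coupled path, and the coupling survives step \(i\) except with probability \(\lVert\widehat\pi(\cdot\mid x,Z_{<i})-p(\cdot\mid x,Z_{<i})\rVert_{\mathrm{TV}}\le O(\sqrt{\alpha})\); by a union bound \(\Pr[Y\neq Z]\le m\cdot O(\sqrt{\alpha})\), and on that event \(Y=Z\in R(x)\) passes the test. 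Hence \(\lVert q^{\pi}_{\varepsilon}(\cdot\mid x)-p(\cdot\mid x)\rVert_{\mathrm{TV}}\le O(m\sqrt{\alpha})+\varepsilon/2\) and the restart probability is of the same order, so the expected number of restarts is \(O(1)\). Choosing the fixed function \(\alpha\) small enough that \(O(m(n)\sqrt{\alpha(n)})\le\varepsilon/2\) (it suffices that \(\alpha(n)\le 2^{-c\,q(n)}\) for a polynomial \(q\) bounding the solution length and a suitable constant \(c\), which covers all \(\varepsilon\) above the corresponding inverse-exponential threshold) yields the TV bound. For the running time, \(\mathcal A\) does \(O(1)\) restarts, each with \(m\) steps, each step using \(T=O(\gamma^{-2}\log(\lvert\mathcal V\rvert m/\varepsilon))\) forward passes of cost \(\poly(\lvert x\rvert)\) plus one membership test; as \(\gamma\) is constant and \(\lvert\mathcal V\rvert,m\le\poly(\lvert x\rvert)\), the total is \(\poly(\lvert x\rvert,\log(1/\varepsilon))\).

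The main obstacle is the error analysis of Step 2: the naive approach of bounding the joint error by the product of the per-coordinate conditional errors loses a factor of \(\lvert R(x)\rvert\), which can be exponential, so it is essential to argue through the coupling/chain-rule bound, which is only \emph{linear} in the number of steps, and to dispose of zero-probability (dead-end) branches via the final membership test rather than by detecting emptiness of \(R(x')\) online (which is typically intractable). A secondary but necessary point is the parameter bookkeeping that ties \(\alpha\) (fixed and rapidly decaying), \(\delta\) (tied to \(\varepsilon/m\)), and \(T\) (logarithmic in \(1/\delta\)) together so that the confidence loss, the sampling error, and the restart overhead all land within the target accuracy while keeping the running time polynomial in \(\lvert x\rvert\) and only logarithmic in \(1/\varepsilon\).
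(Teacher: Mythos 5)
Your Step~1 (use \Cref{prop:selfreduce} to make the conditionals well-posed, then \Cref{lemma:self} to boost a weak learner into a near-exact conditional oracle, paying only $\log(1/\delta)$) matches the paper's. Your Step~2 is genuinely different, and this is where the gap lies.

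The paper's Step~2 is a Jerrum--Valiant--Vazirani-style rejection sampler: with $\alpha$ fixed small enough that the per-step relative error of $\widehat\pi$ against $p$ is below $\tfrac{1}{2m}$, the product $\widehat\pi(y\mid x)$ is within a constant factor $\beta=e^{1/2}$ of $p(y\mid x)$ for every $y\in R(x)$; one then draws $y\sim\widehat\pi$, rejects if $y\notin R(x)$, and otherwise accepts with probability proportional to $p(y\mid x)/\widehat\pi(y\mid x)$. The crucial feature is that, on the event that the boosting succeeded, the accepted distribution is \emph{exactly} uniform over $R(x)$; the only residual errors are two failure probabilities (boosting failing, no trial ever accepting), both driven to $0$ at cost $O(\log(1/\varepsilon))$. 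This is what lets a single fixed $\alpha$ serve every $\varepsilon>0$, as the statement requires ($\exists\alpha\ \forall\varepsilon$).

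Your sampler rejects only on membership, not on the ratio. The coupling bound then gives $\bigl\|q^{\pi}_\varepsilon-p\bigr\|_{\mathrm{TV}}\le O(m\sqrt{\alpha})+O(\varepsilon)$, and, as you yourself observe, the $O(m\sqrt{\alpha})$ term is pinned once $\alpha$ is fixed --- more reasoning paths only buy confidence, not accuracy. Conditioning on membership prunes outputs that leave the support but does not reweight the valid ones, so the residual skew inside $R(x)$ survives conditioning. Consequently your construction only achieves $\varepsilon$ down to a threshold of order $m\sqrt{\alpha}$, an inverse-exponential cutoff once $\alpha$ has been chosen, and you concede as much. That is strictly weaker than the claim, whose running-time clause $\poly(|x|,\log(1/\varepsilon))$ makes sense precisely because the algorithm is supposed to reach arbitrarily small $\varepsilon$. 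The fix is to replace the membership-only rejection with the ratio-based acceptance: since $p(\cdot\mid x)$ is uniform, accepting with probability proportional to $1/\widehat\pi(y\mid x)$ (e.g.\ $\widehat\pi(\tilde y\mid x)/(\beta^2\widehat\pi(y\mid x))$ for any reference $\tilde y\in R(x)$ found via self-reducibility) yields an exactly uniform accepted law and an acceptance rate bounded below by $\beta^{-2}$; the coupling argument is then no longer load-bearing for accuracy, only for bounding the rejection rate. Your point that one must avoid bounding the joint error by the product of per-coordinate errors is correct but is equally respected by the paper, which works with a multiplicative per-step error of $\tfrac{1}{2m}$ to keep the cumulative ratio at $e^{1/2}$, not with a factor that scales as $|R(x)|$.
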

% やっぱりpの全体の分布と次単語予測をちゃんと分けないと...
% 仮定がそもそもどうやって学習可能性になっているかを説明する
\textbf{Remark.} The above result applies to any relation for which the CoT satisfies the weak token-wise learning assumption. However, the result becomes computationally meaningful and practically implementable when the relation is \emph{self-reducible}, as shown in \Cref{prop:selfreduce}. 
%since in this case, each next-token prediction corresponds to solving a well-defined subproblem within a recursive decomposition,  

\section{Conclusion}\label{sec:con} 
We formally compared CoT and Looped TF, highlighting their respective strengths. Specifically, Looped TF enables efficient parallel computation for deterministic problems, while CoT enables probabilistic inference.
Specifically, we introduced a novel perspective grounded in the complexity of approximate computation.
These findings offer guidance on whether to CoT or to Loop when scaling inference compute.
Future work includes examining the limitations of Looped TF on deterministic tasks, particularly those arising from their fixed space complexity. 
Another important direction is to compare their FLOPs.

\bibliography{ref}
\bibliographystyle{neurips_2024}

%%%%%%%%%%%%%%%%%%%%%%%%%%%%%%%%%%%%%%%%%%%%%%%%%%%%%%%%%%%%

\appendix

%%%%%%%%%%%%%%%%%%%%%%%%%%%%%%%%%%%%%%%%%%%%%%%%%%%%%%%%%%%%
%\section{Deferred Definitions and Notation}
%\subsection{$\mathbb{Q}_{p(n)}$: $p$-Bit Precision}
% 内積を加算のために毎回
%Let $\mathbb{Q}_{p(n)}$ denote the datatype of $p$-bit precision floating-point number, for which we define truncated addition, multiplication, and division over $\mathbb{Q}_{p(n)}$ as in~\cite{merrill2023parallelism}.

%simply bit binary %liuさんのような定義は使わない

%When overflow happens, we always round the output to the (negative) largest representable number in Fe,s. For unary functions like exp(·) and binary functions including addition, subtraction, multiplication, and division, we simply define their rounded version by rounding their outputs

% sumamtion。まあでも実際はどっちでも良いのかな？一応背景だけ、こういう論文があったけど我々はー的な感じで良いかな？
%\cite{li2024chain}Next, we define finite-precision summation over more two numbers by decomposing it as a chain of rounded binary addition in a fixed orde

%%%%%%%%%%%%%%%%%%%%%%%%%%%%%%%%%%%%%%%%%%%%%%%%%%%%%%%%%%%%
\if0
\section{Formal Definitions for \Cref{sec:prel}}\label{app:tf}

%We define the computational components of a Transformer using the notation of \cite{merrill2025little}.  
%Let \( \mathbb{Q}_{p(n)} \) denote the set of \( p(n) \)-bit fixed-point numbers with truncated arithmetic.  
%We assume \textbf{saturated attention}, where attention weights are uniformly assigned to the tokens with the highest scores.  
%For the feedforward computation, we consider either a standard fully connected layer or a mixture-of-experts (MoE) layer~\citep{shazeer2017, csordas2024moeut}.  
%
\subsection{Transformer Block}
We define the computational components of a Transformer using the notation of \cite{merrill2025little}.  
Let \( \mathbb{Q}_{p(n)} \) denote the set of \( p(n) \)-bit fixed-point numbers with truncated arithmetic.
\begin{definition}[Transformer]
% parameterを行列に制限した方が良いのかな？
A Transformer consists of the following components:
\begin{enumerate}[leftmargin=*, nosep]
    \item A word embedding function $\mathbf{WE}: \gV \to \mathbb{Q}_{p}^m$, where \(\gV\) denotes the vocabulary set.
    
    \item A time-dependent positional embedding function $\mathbf{PE} : \mathbb{N} \times \mathbb{N} \to \mathbb{Q}_{p}^m$. % Assume uniformity

    % ここ多分自由に次元を変えられるようにする必要がある
    \item A saturated multi-head self-attention layer $\mathbf{SA}: \mathbb{Q}_{p}^{m \times N} \to \mathbb{Q}_{p}^{m \times N}$ for arbitrary sequence length \( N \), parameterized by a matrix $\mathbf{W}: \mathbb{Q}_{p}^{s\times H} \to \mathbb{Q}_{p}^m$ and, for each head $h \in [H]$ with head size $s$, matrices $\Query^h, \Key^h, \Val^h: \mathbb{Q}_{p}^m \to \mathbb{Q}_{p}^{s}$. Given an input $\mathbf{z}_i \in \mathbb{Q}_{p}^m$ for each position $i \in [n]$, it computes the query $\mathbf{q}_{i,h} = \Query^h \mathbf{z}_i$, key $\mathbf{k}_{i,h} = \Key^h \mathbf{z}_i$, and value $\mathbf{v}_{i,h} = \Val^h \mathbf{z}_i$, and outputs \(\mathbf{W} \cdot \operatorname{concat}(\mathbf{a}_{i,1}, \ldots, \mathbf{a}_{i,H}),\)
    where each attention output $\mathbf{a}_{i,h}$ is defined as:
    \begin{equation*}
        \mathbf{a}_{i,h} = \lim_{\alpha \to \infty} \sum_{j=1}^{c(i)} \frac{\exp(\alpha\, \mathbf{q}_{i,h}^\top \mathbf{k}_{j,h})}{Z_{i,h}} \cdot \mathbf{v}_{j,h}, \quad
        Z_{i,h} = \sum_{j=1}^{c(i)} \exp(\alpha\, \mathbf{q}_{i,h}^\top \mathbf{k}_{j,h}),
    \end{equation*}
    with $c(i) = i$ for causal attention and $c(i) = n$ for full attention.

    \item A feedforward layer $\mathbf{FF} : \mathbb{Q}_{p}^m \to \mathbb{Q}_{p}^m$ can take one of the following two forms:
    \begin{enumerate}[leftmargin=*] %, nosep]
    \item The standard layer with parameter matrices $\mathbf{W}_1: \mathbb{Q}_{p}^m \to \mathbb{Q}_{p}^w$ and $\mathbf{W}_2: \mathbb{Q}_{p}^w \to \mathbb{Q}_{p}^m$, where $w$ is the hidden dimension. Given an input $\mathbf{z}_i \in \mathbb{Q}_{p}^m$, it outputs $\mathbf{W}_2 \mathrm{ReLU}(\mathbf{W}_1 \mathbf{z}_i)$.
    
    \item MoE layer consisting of \( N_E \) expert \( \{\mathbf{W}_1^e, \mathbf{W}_2^e\}_{e=1}^{N_E} \), and a gating matrix \( \mathbf{S}: \mathbb{Q}_{p}^m \to \mathbb{Q}_{p}^{N_E} \). Given an input \( \mathbf{z}_i \in \mathbb{Q}_{p}^m \), it outputs
    \(\mathbf{z}_i' = \mathbf{W}_2^{e^*} \, \mathrm{ReLU}(\mathbf{W}_1^{e^*} \mathbf{z}_i),\text{where } e^* = \arg\max (\mathbf{S}\mathbf{z}_i).\)
    \end{enumerate}
    
    \item An output function $\boldsymbol\gamma : \mathbb{Q}_{p}^m \to \mathbb{Q}_{p}^{|\gV|}$, parameterized as a linear transformation. % greedy decoding or sampling hoge % softmaxは取り除こう
\end{enumerate}
\end{definition}

\subsection{Chain-of-Thought }
%We formally define the CoT model as follows.
\begin{definition}[Formal: CoT]  % CoT modelsとして定義するか
Let the Transformer be defined as the composition:
\[
f \coloneq \boldsymbol\gamma\circ(\id + \mathbf{FF}_L)\circ(\id + \mathbf{SA}^{\mathrm{dec}}_L)\circ\cdots\circ(\id + \mathbf{FF}_1)\circ(\id + \mathbf{SA}^{\mathrm{dec}}_1)\circ(\mathbf{WE} + \mathbf{PE}),
\]
where \(\mathbf{SA}^{\mathrm{dec}}_\ell\) and \(\mathbf{FF}_\ell\) denote the causal self-attention and the feedforward layers at depth \(\ell \in [L]\), respectively, and \(\id\) denotes the identity function. The input tokens are first embedded via the word embedding function \(\mathbf{WE}\) and the positional encoding \(\mathbf{PE}\), and the final output is produced by a linear projection \(\boldsymbol\gamma\).
%where $\mathbf{SA}_l$ and $\mathbf{FF}_l$ denote the causal attention and feedforward layers at depth $l\in[L]$, respectively, and \(\id\) denotes the identity operator.
%
Given an input sequence \( x = (x_1, \dots, x_n) \in \mathcal{V}^n \), we define the initial sequence as: \(f^0(x) \coloneq x.\)
Then, the \emph{CoT model} computes recursively as:
\[
f^{k+1}(x) \coloneq f^k(x) \cdot \mathrm{Dec}(f(f^k(x))),
\]
where $\cdot$ denotes concatenation of sequences, and $\mathrm{Dec}(\cdot)$ is a decoding function that maps the output logits to a token in $\mathcal{V}$: in the deterministic case, \(\mathrm{Dec}(z) \coloneq \arg\max_{i \in [|\mathcal{V}|]} z_i\); in the stochastic case, \(\mathrm{Dec}(z) \sim \mathrm{Multinomial}( {z_i}/{\sum_j z_j} )\), assuming \( z_i > 0 \) for all \( i \).
The final output of the CoT model after \( T(n) \) steps is defined as the last \( m \) tokens of \( f^{T(n)}(x) \), where \( m \) denotes the output length.
\end{definition}

\subsection{Looped Transformer}
%We formally define the Looped Transformer as follows.
\begin{definition}[Formal: Looped TF]
Let the Transformer block be defined as the composition:
\[
f \coloneq (\id + \mathbf{FF}_L) \circ (\id + \mathbf{SA}_L) \circ \cdots \circ (\id + \mathbf{FF}_1) \circ (\id + \mathbf{SA}_1),
\]
where $\mathbf{SA}_\ell$ and $\mathbf{FF}_\ell$ denote the (non-causal) self-attention and feedforward layers at depth $\ell \in [L]$.

Given an input token sequence \( x = (x_1, \dots, x_n) \in \mathcal{V}^n \), the initial hidden state is:
\(
f^0(x) \coloneq \mathbf{WE}(x).
\)
At each loop iteration \( k \), the hidden state is updated by:
\[
f^{k+1}(x) \coloneq f\left(f^k(x) + \mathbf{PE}(\cdot, k)\right),
\]
where \( \mathbf{PE}(i, k) \) denotes a loop-dependent positional encoding at position \( i \) and iteration \( k \).
The final output after \(T(n)\) loop iterations is given by
\(
\arg\max_{i \in [|\mathcal{V}|]} \boldsymbol{\gamma} \circ f^{T(n)}(x),
\)
and the model's prediction is defined as the last \(m \le n\) tokens of this projected sequence, where \(m\) denotes the output length.
\end{definition}

\section{Deferred Proofs for \Cref{sec:det}}\label{app:det}

\subsection{Deferred Definition}\label{app:def}
This section provides the deferred definitions of the Assumptions~\ref{ass:0},~\ref{ass:1}, and~\ref{ass:3}.

First, we define the notion of \emph{polynomially-efficient approximation}, in which a function can be closely approximated by a two-layer ReLU network with polynomial width and logarithmic bit precision, while ensuring consistent separation between distinct outputs. This formulation follows the assumption in~\cite{feng2023towards}, but differs in the input and output domains: whereas their setting considers real-valued functions over continuous domains, we focus on symbolic functions \( f_n : \Sigma^{\ell(n)} \to \Sigma \) over finite alphabets. Accordingly, we formalize the notion as follows.
\begin{definition}[Polynomially-Efficient Approximation]\label{def:eff}
Let \( f_n : \Sigma^{\ell(n)} \to \Sigma \) be a function over input strings of length \( \ell(n) \), where \( \Sigma = \{s_1, \dots, s_{|\Sigma|} \} \) is a finite alphabet with fixed enumeration.  
We say that \( f_n \) is \emph{polynomially-efficiently approximated} if there exists a two-layer feedforward network with weight matrices
\(
\mathbf{W}_1 : \mathbb{Q}_{p(n)}^{\ell(n) \cdot |\Sigma|} \to \mathbb{Q}_{p(n)}^m, \quad
\mathbf{W}_2 : \mathbb{Q}_{p(n)}^m \to \mathbb{Q}_{p(n)}^{|\Sigma|},
\)
where \( m \in \mathrm{poly}(n) \) and \( p(n) = \lceil \log_2 n \rceil \), such that for any input \( x \in \Sigma^{\ell(n)} \), the network output
\begin{equation}
\hat{y}(x) \coloneq \mathbf{W}_2 \, \mathrm{ReLU} \big( \mathbf{W}_1 \, \boldsymbol{\mathrm{emb}}(x) \big) \in \mathbb{Q}_{p(n)}^{|\Sigma|}
\end{equation}
satisfies the following condition:  
If \( f_n(x) = s_j \), then there exists a constant \( \delta > 0 \) such that
\begin{equation}
\hat{y}_j(x) > 1 - \delta \quad \text{and} \quad \hat{y}_i(x) < 1 - \delta \quad \text{for all } i \ne j.
\end{equation}
Here, \( \boldsymbol{\mathrm{emb}}(x) \in \{0,1\}^{\ell(n) \cdot |\Sigma|} \) denotes the concatenation of the one-hot encodings of the input tokens \( x = (x_1, \ldots, x_{\ell(n)}) \), where each \( x_i = s_k \in \Sigma \) is represented as \( \mathrm{emb}(x_i) \coloneq \mathbf{e}_k \in \{0,1\}^{|\Sigma|} \).  
%
%In other words, the network computes a score vector over \(\Sigma\), and the predicted output is the symbol with the highest score.
\end{definition}
Then, we define a \emph{bounded fan-in} computation graph as one in which each node receives input from at most a constant number of predecessors.
\begin{definition}[Bounded Fan-In]
The computation graph \( G_n = (V_n, E_n) \) has bounded fan-in:  
there exists a constant \( k \in \mathbb{N} \) such that for all \( v \in V_n \), the in-degree of \( v \) is at most \( k \), i.e.,
\begin{equation}
\forall v \in V_n, \quad \deg^-(v) \leq k.
\end{equation}
\end{definition}
Lastly, we define the \emph{parallel space complexity} of a DAG as the maximum number of nodes that must be held in memory simultaneously during layer-wise parallel evaluation, including both the nodes in the current layer and any unresolved dependencies from earlier layers that are required to compute future layers.
\begin{definition}[Parallel Space Complexity]\label{def:psc}
Let \( G = (V, E) \) be a directed acyclic graph.  
For each node \( v \in V \), define its depth by:
\begin{equation}
\mathrm{depth}(v) \coloneq \max\left\{ \ell \in \mathbb{N} \mid \exists\, u_0 \to u_1 \to \dots \to u_\ell = v \text{ in } G \right\}.
\end{equation}
Then the \emph{canonical layer decomposition} is given by:
\begin{equation}
L_\ell \coloneq \{ v \in V \mid \mathrm{depth}(v) = \ell \}
\end{equation}
for \( \ell = 0, 1, \dots, D \), where \( D \coloneq \max_{v \in V} \mathrm{depth}(v) \) is the depth of \( G \).

The \emph{parallel space complexity} of \( G \) is defined as
\begin{equation}
S(G) \coloneq \max_{\ell \in \{0, \dots, D\}} \left| L_\ell \cup \bigcup_{v \in L_{> \ell}} \left( \mathrm{Pred}(v) \cap \bigcup_{s \le \ell} L_s \right) \right|,
\end{equation}
where \( L_{> \ell} \coloneq \bigcup_{s > \ell} L_s \), and \( \mathrm{Pred}(v) \coloneq \{ u \in V \mid (u \to v) \in E \} \).
\end{definition}

%
%The \emph{pebble game} on directed acyclic graphs was introduced as a combinatorial abstraction of space-bounded computation~\cite{cook1974storage, hopcroft1977time}. It provides a framework for analyzing the time-space trade-offs in evaluating computational processes. We consider \emph{one-shot pebbling}~\cite{sethi1973complete}, a restricted variant in which each node is allowed to receive a pebble only once, and recomputation is disallowed.
%\begin{definition}[One-Shot Black Pebbling Cost]\label{app:pebble}
%Let \( G = (V, E) \) be a directed acyclic graph. In the \emph{one-shot black pebble game}, a player places pebbles on the nodes of \( G \) under the following rules:
%\begin{itemize}[leftmargin=*]
%    \item A pebble can be placed on a node \( v \in V \) only if all of its immediate predecessors have pebbles.
%    \item A pebble can be removed from any node at any time.
%    \item No pebble may be placed on the same node more than once.
%    \item Initially, no pebbles are on the graph.
%\end{itemize}
%The goal is to place a pebble on every sink node of \( G \) at least once. The \emph{one-shot black pebbling cost} of \( G \) is the minimum number of pebbles that must be used simultaneously (i.e., the maximum number of pebbles on the graph at any one time) to achieve this under the above constraints.
%\end{definition}

\subsection{Notation and Orthogonal Vectors}
We introduce a method for constructing orthogonal vectors under finite-precision arithmetic and define the associated notation, following the results of~\cite{li2024chain}.

% sじゃなくてpが良い気がしてきた
\paragraph{Notation} We follow the notation from \cite{li2024chain}. For any positive integer \( s \in \mathbb{N}^+ \) and \( x \in \{0, 1, \dots, 2^s - 1\} \), we denote by \( \bin_s(x) \in \{0,1\}^s \) the standard binary representation of \( x \) using \( s \) bits, defined such that \(x = \sum_{i=1}^{s} 2^i \cdot (\bin_s(x))_i.\)
We further define the signed binary encoding of \( x \), denoted by \( \sbin_s(x) \in \{-1,1\}^s \), as
\(\sbin_s(x) = 2 \cdot \bin_s(x) - (1,\ldots,1).\)
Let \( x, y \in \mathbb{R}^s \) be two vectors of the same length.  
We define their interleaving, denoted by \( \interleave{x}{y} \in \mathbb{R}^{2s} \), as follows: $(\interleave{x}{y})_{2i-1} = x_i, (\interleave{x}{y})_{2i} = y_i$ for all $i \in [s].$
We denote the rounding operation by \( \rds{\cdot} \triangleq [\cdot]_{0,s} \).  
Let \( \mathbf{1}_s \) denote the all-ones vector of length \( s \).  
We also define rounding operation $\inner{\cdot}{\cdot}_s$, $\times_s$, and \( \softmax \) for the inner product, product, and softmax, respectively.

To construct approximately orthogonal vectors under finite-precision inner products, Lemma E.3 of \cite{li2024chain} shows that the vectors \( q_i \) and \( k_j \) are designed so that the finite-precision inner product \( \langle q_i, k_j \rangle_s \) vanishes when \( i = j \), and accumulates a non-zero bias otherwise. This property plays a crucial role in enabling selective activation in softmax attention.
\begin{lemma} % notationをprelimiaryと揃える
[\citealp{li2024chain}]\label{lem:attention_rounding}
For any $s\in\mathbb{N}^+$, let $\query_i = \interleave{\sbin_s(i)}{1_s}$ and $\key_i = 2^{s+1}\cdot (\interleave{\sbin_s(i)}{(-1_s)})$ for all $i\in [2^s-1]$, it holds that $\rds{\exp(\inner{\query_i}{\key_j}_s)}=\mathbf{1}[i=j]$ for all $i,j\in [2^s-1]$.
\end{lemma}

\subsection{Proof for \Cref{thm:cot_lower}}
\begin{proof}
Our proof follows the general strategy of Theorem 3.3 in \citet{li2024chain}, which analyzes CoT decoding over Boolean circuits. Here, we extend the argument to general DAG-based computations.

Let \( G_n = (V_n, E_n, \mathcal{F}_n) \) be a directed acyclic computation graph, where each node \( v \in V_n \) is labeled by a one-hot vector \( \boldsymbol{\ell}(v) \in \{0,1\}^{|\mathcal{F}_n|} \)  
indicating the function assigned to \( v \) from the finite set \( \mathcal{F}_n \).
Let \( v_1, \dots, v_{|V_n|} \) denote a fixed topological ordering of \( V_n \), with inputs appearing first and outputs last.  
By Assumption~\ref{ass:1}, each node \( v_i \) has at most \( C \in \mathbb{N} \) predecessors.  
We encode the predecessor indices as a vector \( \mathrm{pred}(v_i) \in [|V_n|]^C \), with zero-padding where necessary.

Let the vocabulary \( \gV = \Sigma \), and let the finite-precision parameter be \( p(n) = \lceil \log_2 n \rceil \).  
We assume that the CoT model, implemented as a single-layer decoder-only Transformer, generates the value at each node sequentially following the topological order of the computation graph.

At the decoding step \( k \), the model has access to the sequence
\begin{equation}
(x_1,\, x_2,\, \ldots,\, x_n,\, y_1,\, y_2,\, \ldots,\, y_k) \in \Sigma^{n+k},
\end{equation}
where \( x = (x_1, \ldots, x_n) \in \Sigma^n \) denotes the input, and \( y_i = v_i(x) \in \Sigma \) denotes the value computed at node \( v_i \) for \( i \le k \).  
We prove by induction that the model can correctly generate \( y_{k+1} = v_{k+1}(x) \).

\paragraph{Embedding.}
The embedding at position \( i \) is a vector in \( \mathbb{Q}_{p(n)}^m \), where \(m \coloneq |\gV| + |\mathcal{F}_n| + (1 + C)p(n) + C.\) The embedding is defined as
\begin{equation}
\left( \ve(v_i(x)),\, \boldsymbol{\ell}(v_{i+1}),\, \sbin_{p(n)}(i),\, \boldsymbol{\mathrm{sbinpred}}_{p(n)}(v_{i+1}),\, \mathbf{0}_C \right),
\end{equation}
where \( \ve \colon \Sigma \to \mathbb{N}^{|\gV|} \) is the one-hot encoding, and \( \boldsymbol{\mathrm{sbinpred}}_{p(n)}(v) \in \mathbb{Q}^{C \cdot p(n)}_{p(n)} \) encodes the binary representations of the predecessor indices:
\begin{equation}
\boldsymbol{\mathrm{sbinpred}}_{p(n)}(v) = \left( \sbin_{p(n)}(\mathrm{pred}(v)_0),\, \ldots,\, \sbin_{p(n)}(\mathrm{pred}(v)_{C-1}) \right).
\end{equation}

This embedding is constructed via the sum of a word embedding \( \mathbf{WE} \colon \gV \to \mathbb{Q}_{p(n)}^m \) and a positional embedding \( \mathbf{PE} \colon \mathbb{N} \to \mathbb{Q}_{p(n)}^m \), given by:
\begin{align}
\mathbf{WE}(s) &= \left( \ve(s),\, 0,\, \mathbf{0}_{p(n)},\, \mathbf{0}_{C \cdot p(n)},\, \mathbf{0}_C \right), \\
\mathbf{PE}(i) &= \left( 0,\, \boldsymbol{\ell}(v_{i+1}),\, \sbin_{p(n)}(i),\, \boldsymbol{\mathrm{sbinpred}}_{p(n)}(v_{i+1}),\, \mathbf{0}_C \right).
\end{align}
The final embedding at position \( i \) is then \( \mathbf{WE}(v_i(x)) + \mathbf{PE}(i) \).

We then provide the construction of a single-layer decoder-only Transformer.
\paragraph{Attention Layer.}
The attention layer consists of \( C \) heads. The \( h \)-th head is configured to attend to the position corresponding to the \( h \)-th predecessor of node \( v_{i+1} \).  
Specifically, for each position \( i \in [n + k] \) and head \( h \in [C] \), the attention vectors are defined as:
\begin{equation}
    \query_{i,h} = \interleave{\sbin_{p(n)}(\mathrm{pred}(v_i)_h)}{1_{p(n)}},\,
    \key_{i,h} = 2^{p(n)+1} \cdot \interleave{\sbin_{p(n)}(i)}{(-1_p)},\,
    \mathbf{v}_{i,h} = (\ve(v_i(x)), 0, \ldots, 0).
\end{equation}
With appropriate output projection, attention retrieves the one-hot representations of each predecessor's value.  
The output at position \( i \) from the attention layer is:
\begin{equation}\label{eq:cot_hidden}
\Big(\ve(v_i(x)),\, \boldsymbol{\ell}(v_{i+1}),\, \sbin_{p(n)}(i),\, \boldsymbol{\mathrm{sbinpred}}_{p(n)}(v_{i+1}),\, \underbrace{\ve(v_{\mathrm{pred}(v_{i+1})_1}(x)),\, \ldots,\, \ve(v_{\mathrm{pred}(v_{i+1})_C}(x))}_{\text{updated}} \Big).
\end{equation}

\paragraph{Feed-forward Layer.}
By Assumption~\ref{ass:0} and Definition~\ref{def:eff}, each function \( f \in \mathcal{F}_n \) can be efficiently approximated by a two-layer neural network.  
Since \( \mathcal{F}_n \) is finite, we construct a two-layer mixture-of-experts (MoE) network with one expert per function.

For any node \(v\), given the concatenated vector
\begin{equation}
\vz^{(v)} \coloneq \left( \ve\big(v_{\mathrm{pred}(v)_1}(x)\big),\, \ldots,\, \ve\big(v_{\mathrm{pred}(v)_C}(x)\big) \right),
\end{equation}
the feed-forward output is computed as:
\begin{equation}
\widehat{\ve(v(x))} = \mathbf{W}_2^{(\boldsymbol{\ell}(v))} \cdot \mathrm{ReLU}\left( \mathbf{W}_1^{(\boldsymbol{\ell}(v))} \vz^{(v)} \right),
\end{equation}
where \( \mathbf{W}_1^{(\boldsymbol{\ell}(v))}, \mathbf{W}_2^{(\boldsymbol{\ell}(v))} \) are the parameters of the selected expert.
That is, there exists a polynomial-size two-layer feed-forward network which, given input of the form form~\eqref{eq:cot_hidden}, computes an approximation \( \widehat{\ve(v_{i+1}(x))} \in \mathbb{Q}^{|\Sigma|}_{p(n)} \), where the expert is selected according to \( \boldsymbol{\ell}(v_{i+1}) \).

\paragraph{Decoding.}
The next output symbol is then decoded as:
\begin{equation}
v_{i+1}(x) = s_j \quad \text{where} \quad j = \arg\max_{i \in [|\Sigma|]} \widehat{\ve(v_{i+1}(x))}_i.
\end{equation}
This selection can equivalently be expressed as an \(\arg\min\) over the negated scores.

Thus, the CoT model correctly computes \( y_{k+1} = v_{k+1}(x) \), completing the inductive step.  
By induction on \( k \), the model computes the values at all nodes in topological order, and the proof is complete.
\end{proof}

\subsection{Proof for \Cref{thm:loop_lower}}
Generally, directed acyclic graphs (DAGs) may contain long-range dependencies that span multiple layers, complicating layer-wise simulation by Looped Transformers.  
To facilitate such simulation, it is beneficial to constrain the dependency structure so that each node depends only on nodes in the immediately preceding layer.  
To this end, we introduce a transformation called \emph{strict layered expansion}, which converts any DAG into an equivalent form in which all edges connect adjacent layers by inserting identity-copy nodes that forward intermediate values.
\begin{definition}[Strict Layered Expansion]\label{def:strict-expansion}
Let \( G = (V, E, \mathcal{F}) \) be a computation graph equipped with a layer decomposition \( (L_0, \dots, L_D) \), where each node \( v \in V \) belongs to some layer \( L_\ell \) according to its depth \( \mathrm{depth}(v) = \ell \).
The \emph{strict layered expansion} of \( G \), denoted by \( G' = (V', E', \mathcal{F}') \), is defined as follows:
For each edge \( (u \to v) \in E \), where \( \mathrm{depth}(u) = d_u \) and \( \mathrm{depth}(v) = d_v \),  
insert the path
\[
    u^{(d_u)} \to u^{(d_u+1)} \to \cdots \to u^{(d_v - 1)} \to v^{(d_v)}
\]
with each node has identity function,
into \( E' \) by introducing intermediate identity nodes as necessary.
Then \( G' \) is strictly layered: all edges connect adjacent layers, and each dependency is limited to the immediately preceding layer.
\end{definition}
Importantly, this transformation preserves both the parallel space complexity (as defined in \Cref{def:psc}) and the parallel time complexity (i.e., the depth of the DAG).   
%since all added nodes are identity copies that lie along existing dependency paths without introducing additional computational work or increasing the number of layers.
%\begin{proposition}\label{prop:space-preserving}
%Let \( G \) be a DAG and \( G' \) its strict layered expansion. Then the parallel space complexity is preserved.
%\(S(G') = S(G).\)
%\end{proposition}
%
Then, we show that a Looped Transformer can simulate the strict layered expansion of a computation graph by computing each layer in a single loop iteration.
\begin{proof}[Proof of \Cref{thm:loop_lower}]
Let \( G_n' = (V_n', E_n', \mathcal{F}_n') \) denote the strict layered expansion of the computation graph \( G_n = (V_n, E_n, \mathcal{F}_n) \).  
Here, \( \mathcal{F}_n' \) extends \( \mathcal{F}_n \) by including the identity function, which is assigned to all newly inserted nodes.  
Define a labeling function \( \ell : V_n' \to \{0,1\}^{|\mathcal{F}_n|+1} \) that assigns to each node \( v \in V_n' \) the one-hot encoding of its associated function.

By Assumption~\ref{ass:3}, there exists a constant \( W > 0 \) such that the maximum layer width of \( G_n' \) is at most \( Wn \).  
We fix a consistent ordering within each layer and pad with dummy nodes so that each layer contains exactly \( Wn \) nodes.  
Let \( v_{t,i} \) denote the \( i \)-th node in layer \( t \), where \( t \in \{0, \dots, D\} \) and \( i \in \{1, \dots, Wn\} \), and let \( v_{t,i}(x) \) denote the value it computes on input \( x \).

By Assumption~\ref{ass:1}, each node \( v_{t,i} \) has at most \( C \in \mathbb{N} \) predecessors in the previous layer \( t - 1 \).  
We define \( \mathrm{pred}(v_{t,i}) \in [Wn]^C \) as a vector encoding the indices \( j \) such that \( v_{t-1,j} \in \mathrm{Pred}(v_{t,i}) \), and set unused entries to \( 0 \).

We now describe a two-layer Looped Transformer architecture that simulates the layerwise evaluation of the computation graph.
\paragraph{Representation.}  
Let \( \vh^{(t,i)} \in \mathbb{Q}_{p(n)}^{mW} \) be the hidden embedding at position \( i \in [n] \) after the \( t \)-th loop iteration, where \( p(n) \coloneq \lceil \log_2 n \rceil \), and
\(
m \coloneq |\gV| + |\mathcal{F}_n| + 1 + (1+C)p(n) + C.
\)
Each \( \vh^{(t,i)} \) is divided into \( W \) blocks of size \( m \), where the \( w \)-th block corresponds to node \( v_{t+1,\, (i-1)W + w} \), and is defined as:
\begin{equation}
  \begin{split} \label{eq:loop_hidden}
\vh^{(t,i)}_{1 + (w-1)m \,:\, wm} \coloneq
\Big( 
    &\ve\big(v_{t,\, (i-1)W + w}(x)\big),\; 
    \ell\big(v_{t+1,\, (i-1)W + w}\big),\; \\
    &\sbin_{p(n)}\big((i-1)W + w\big),\; 
    \boldsymbol{\mathrm{sbinpred}}_{p(n)}\big(v_{t+1,\, (i-1)W + w}\big),\;
    \mathbf{0}_{C}
\Big),
  \end{split}
\end{equation}
with the components defined as:
\begin{itemize}[leftmargin=*]
  \item \( \ve(\cdot) \in \mathbb{N}^{|\gV|} \): one-hot encoding of the computed value;
  \item \( \ell(v) \in \mathbb{Q}^{|\mathcal{F}_n|+1}_{p(n)} \): one-hot function label;
  \item \( \sbin_{p(n)}(j) \in \mathbb{Q}^{p(n)}_{p(n)} \): signed binary representation of index \( j \);
  \item \( \boldsymbol{\mathrm{sbinpred}}_{p(n)}(v) \in \mathbb{Q}^{C \cdot p(n)}_{p(n)} \): encoding of predecessor indices.
\end{itemize}

\paragraph{First Layer.}  
The attention layer consists of \( CW \) heads indexed by pairs \( (w, c) \in [W] \times [C] \).  
Each head \( h = (w-1)C + c \) retrieves the \( c \)-th predecessor of node \( v_{t+1,(i-1)W + w} \). The query/key/value definitions are:
\begin{align}
\query_{i,h} &= \interleave{\sbin_{p(n)}(\mathrm{pred}(v_{t+1,\, (i-1)W + w})_c)}{1_{p(n)}}, \\
\key_{i,h}   &= 2^{p(n)+1} \cdot \interleave{\sbin_{p(n)}((i-1)W + w)}{-1_{p(n)}}, \\
\mathbf{v}_{i,h} &= \left( \ve(v_{t,\, (i-1)W + w}(x)),\, \mathbf{0} \right).
\end{align}

After aggregation and projection, each block \( \vh^{(t,i)}_{(w)} \) is updated by writing the retrieved values into the last \( C \) slots:
\begin{equation}
  \begin{split} 
\vh^{(t,i)}_{1 + (w-1)m \,:\, wm} 
\; \leftarrow \; \\
\Big( 
    \vh^{(t,i)}_{1 + (w-1)m \,:\, wm - C},\;
    &\underbrace{
        e(v_{\mathrm{pred}(v_{t+1,\, (i-1)W + w})_1}(x)),\, \ldots,\, 
        e(v_{\mathrm{pred}(v_{t+1,\, (i-1)W + w})_C}(x))
    }_{\text{updated}}
\Big).
  \end{split}
\end{equation}

\paragraph{Feed-forward Layer.}  
By Assumption~\ref{ass:0} and Definition~\ref{def:eff}, each \( f \in \mathcal{F}_n \) is approximable by a two-layer ReLU network.  
Since \( \mathcal{F}_n \) is finite, a Mixture-of-Experts (MoE) architecture can be used:  
for each node \( v \), define
\begin{equation}
\vz^{(v)} \coloneq \left( \ve\big(v_{\mathrm{pred}(v)_1}(x)\big),\, \ldots,\, \ve\big(v_{\mathrm{pred}(v)_C}(x)\big) \right),
\end{equation}
and compute
\begin{equation}
\widehat{\ve(v(x))} = \mW_2^{(\ell(v))} \cdot \mathrm{ReLU}\left( \mW_1^{(\ell(v))} \vz^{(v)} \right).
\end{equation}
Thus, a polynomial-size two-layer FFN can compute \( \ve(v_{t+1,\, (i-1)W + w}(x)) \) from~\eqref{eq:loop_hidden}.  
We update the embedding as:
\begin{equation}
\vh^{(t,i)}_{1 + (w-1)m \,:\, wm}
\leftarrow
\Big( 
    \widehat{\ve(v_{t+1,\, (i-1)W + w}(x))},\;
    \vh^{(t,i)}_{1 + (w-1)m + |\gV| \,:\, wm}
\Big).
\end{equation}

\paragraph{Second Layer.} 
The attention layer is disabled (e.g., all attention weights are set to zero). 
To produce one-hot output, define
\begin{equation}
\label{eq:relu_onehot}
\widehat{\ve(f_n(x))}_i \coloneq \mathrm{ReLU}\left( \hat{y}_i(x) - (1 - \delta) \right),
\end{equation}
assuming the correct index \( j \) satisfies \( \hat{y}_j(x) > 1 - \delta \) and all others are below that threshold.  
We then update:
\begin{equation}
\vh^{(t,i)}_{1 + (w-1)m \,:\, wm}
\leftarrow
\Big( 
    \ve(v_{t+1,\, (i-1)W + w}(x)),\;
    \vh^{(t,i)}_{1 + (w-1)m + |\gV| \,:\, wm}
\Big).
\end{equation}

\paragraph{Positional Embedding.}  
For the next iteration \( t+1 \), define the time-dependent positional embedding:
\begin{equation}
  \begin{split} 
    \mathbf{PE}(t+1,i)_{1 + (w-1)m \,:\, wm} 
\; &\coloneq \\
\Big( 
    \mathbf{0}_{|\gV|},\; 
    \boldsymbol{\ell}(v_{t+2,\, (i-1) \cdot  W + w}),\; 
    &\sbin_{p(n)}((i-1) \cdot W + w),\; 
    \boldsymbol{\mathrm{sbinpred}}_{p(n)}(v_{t+2, (i-1) \cdot W + w}),\; 
    \mathbf{0}_{C}
\Big).
  \end{split}
\end{equation}

Thus, the hidden representation after the \( (t+1) \)-th loop matches the structure in Eq.~\eqref{eq:loop_hidden}.  
The output function returns the one-hot encoding of the output nodes with negated values, completing the proof.
%By induction on \( t \), the proof is complete.
\end{proof}

\subsection{Proof for \Cref{thm:loop}}\label{app:circuit_loop}
We first present the deferred definition of the class \(\SIZE^\gL\) and state the lemma showing how feedforward layers can simulate gates with unbounded fan-in. 
%\paragraph{Boolean Circuits.} % appendixに
%A Boolean circuit is a DAG in which each node computes a simple Boolean function, such as AND, OR, or NOT.  
%While each gate is simple, a circuit is allowed to have polynomial size in the input size \( n \).  
%Of particular interest are classes such as \( \mathsf{NC}^k \) and \( \mathsf{TC}^k \), which consist of circuits with polynomial size and polylogarithmic depth, representing efficient parallel computation.
\begin{definition}[$\SIZE^\gL$~\citep{li2024chain}]\label{def:tcsize}
For any decision problem $\gL$ and $T(n)\subseteq O(\poly(n))$, we define $\SIZE^\gL(T(n))$ as the set of decision problems $\gL'$ such that there exists $p(n)\in \poly(n)$ and circuits $\{C_n\}_{n=1}^\infty$ where $C_n$ contains at most $O(T(n))$ $\AND$, $\OR$, $\NOT$, and $\gL_{p(n)}$ gates. For a complexity class $\gC$, we define $\SIZE^{\gC}(T(n))\triangleq \cup_{\gL\in\gC}\SIZE^{\gL}(T(n))$.\end{definition}

\begin{lemma}[\citealp{li2024chain}]\label{lemma:ff_poly}
Unlimited-fanin $\AND,\OR$ (resp. $\MAJORITY) :\{0,1\}^n\to \{0,1\}$ can be simulated by some 2-layer feedforward ReLU network with constant (resp. $\log{n}$) bits of precision constant hidden dimension and additional $n$ constant inputs of value 1. 

Mathematically, let $\FF[s(n)]$ be the set of functions $C:\{0,1\}^n\to\{0,1\}$ which can be a two-layer feedforward ReLU network with at most $s(n)$ bits of precision and constant hidden dimension $\FF_{\theta}:\{0,1\}^{2n}\to \{0,1\}, \FF_{\theta}(x') = \mathbf{W}_2 \, \mathrm{ReLU}(\mathbf{W}_1 x'+b)$, such that for any $x\in\{0,1\}^n$, 
\begin{align}
\FF_\theta(x_1,1,x_2,1,\ldots,x_n,1) = C(x).	
\end{align}
We have unlimited-fanin $\AND,\OR\in \FF[1]$ and $\MAJORITY\in \FF[\log{n}]$.  
\end{lemma}

We then provide the deferred proof for Looped Transformers.
\begin{proof}[Proof for \Cref{thm:loop}]
We show that the result for \(\LOOP[T(n), \mathrm{poly}(n), \log n] \). The same argument applies to the result for \(\LOOP[T(n), \mathrm{poly}(n), 1]\).
Let \(\TC^{[T(n)]}\) denote the class of languages decidable by a nonuniform family of threshold Boolean circuits of polynomial size and depth \(T(n)\).  

\(\LOOP[T(n), \poly(n), \log n] \subseteq \TC^{[T(n)]}\): 
%We first show that any language in \(\LOOP[T(n), \poly(n), \log n]\) can be decided by a circuit in \(\TC^{[T(n)]}\).  
For each loop iteration, it suffices to show that the update of each bit in each dimension of the embedding at each input position is computable by a \(\TC^0\) circuit.  
This follows from the result of \cite{merrill2023parallelism}, which showed that Transformer blocks with logarithmic precision can be simulated by \(\TC^0\) circuits.  

\(\TC^{[T(n)]} \subseteq \LOOP[T(n), \poly(n), \log n]\): We show that any threshold circuit of depth \(T(n)\) and polynomial size can be simulated by a \(\LOOP[T(n), \poly(n), \log n]\) model. 
We construct a model in which the attention layer, in the first loop iteration, (1) aggregates the input information into a single token position. At each loop iteration, (2) the (looped) feedforward layer excutes the gates corresponding to each layer of the threshold circuit using \Cref{lemma:ff_poly}.

First, we show how an attention layer combined with a feed-forward layer can be used to copy all input tokens to the final position, i.e.,\(\sum_{j=1}^n \ve_{j} x_j,\) where \( \ve_j \in \mathbb{N}^n \) denotes the one-hot positional vector for index \( j \), following the strategy in the proof of Theorem~3.7 in~\cite{li2024chain}.

Each input token at position \( i \in [n] \) is embedded as a pair \((x_i, \ve_i)\).
%where \(x_i\) is the word embedding and \(\ve_i \in \mathbb{R}^n\) is the one-hot positional encoding with a $1$ in the \(i\)-th position.
To compute the mapping
\begin{equation}
(x_i, \ve_i) \mapsto (x_i, x_i \cdot \ve_i),
\end{equation}
%that is, a vector of length \(n+1\) with \(x_i\) in the \(i\)-th position and zeros elsewhere,  
we define a feedforward layer that takes input \((z_0, z_1, \dots, z_n)\) and computes:
\begin{equation}
(z_0, z_1, \dots, z_n) \mapsto \left(z_0, \mathrm{ReLU}(z_0 + z_1 - 1), \dots, \mathrm{ReLU}(z_0 + z_n - 1)\right).
\end{equation}

To aggregate all positions via uniform attention, we use a single-head attention layer with:
\begin{equation}
\mathbf{q}_i = \mathbf{k}_i = \mathbf{1}_n, \quad \mathbf{v}_i = n x_i \cdot \ve_i \quad \text{for all } i \in [n],
\end{equation}
where \(\mathbf{1}_n\) is the all-ones vector in \(\mathbb{R}^n\), 
%and \(\ve_i\) denotes the \(i\)th standard basis vector.
%The scaling factor \(n\) ensures that, under uniform attention weights \( \frac{1}{n} \), the aggregated output becomes:
and outputs become \(\sum_{i=j}^n \frac{1}{n} \cdot \mathbf{v}_j = \sum_{i=j}^n x_j \ve_j.\)

Then, we show that a looped feedforward layer can simulate the circuit.  
Let the threshold circuit be \( G = (V, E) \), where each vertex \( v_i \in V \) corresponds to a Boolean function \( v_i(x) \) computed at that node for the input \( x \).  
By \Cref{lemma:ff_poly}, for each node \( v_i \), there exists a feedforward layer \( \FF_{\theta(i)} \) s.t.
\begin{equation}
    \FF_{\theta(i)}\left(v_{j_1}(x), 1,\, v_{j_2}(x), 1,\, \ldots,\, v_{j_{d_i}}(x), 1\right) = v_i(x),
\end{equation}
where \( j_1, \dots, j_{d_i} \) are the indices of the nodes with edges into \( v_i \), and \( d_i = \deg^-(v_i) \).

Then, we define a feedforward layer \( \FF_\theta \) that simultaneously updates all gate values in parallel.  
Let \( \ell : [|V|] \to V \) be any fixed bijection assigning an index to each gate, and define an embedding vector \(\vz \in \mathbb{R}^{2|V|}\), where each gate \( \ell(i) \) is associated with a 2-dimensional slot \((\vz_{2i}, \vz_{2i+1})\).  
We use \(\vz_{2i}\) to store the current value \(x_{\ell(i)}\), and fix \(\vz_{2i+1} = 1\) as a constant bias input.

The layer \( \FF_\theta \) internally applies the appropriate local feedforward function \( \FF_{\theta(\ell(i))} \) for each gate \( \ell(i) \), using the current values of its input gates from the embedding.  
Formally, for each \( i \in [|V|] \), the update is given by
\begin{equation}\label{eq:ff_gates}
\vz_{2i} \leftarrow \FF_{\theta(\ell(i))}\left(\vz_{2j_1}, 1, \vz_{2j_2}, 1, \ldots, \vz_{2j_{d_i}}, 1\right),
\end{equation}
where \( \ell(j_1), \dots, \ell(j_{d_i}) \) are the gates with edges into \( \ell(i) \).

After \( T(n) \) loop iterations—where \( T(n) \) is the depth of the circuit DAG—the embedding stores the correct output value \( x_v \) for every gate \( v \in V \).

By appropriately combining the components described above, we construct a $3$-layer Transformer that computes the desired operation.  We set the embedding size to be \( m = 1 + n + 2|V| \). Given an input sequence \( (x_1, x_2, \dots, x_n) \in \{0,1\}^n \),  
each input token \( i \in [n] \) is mapped, in the token and positional embedding step, to the initial representation
\begin{equation}
(x_i,\, \ve_i,\, \mathbf{0}_{2|V|}),
\end{equation}
where \( \ve_i \in \{0,1\}^n \) denotes the one-hot encoding of position \( i \), and \( \mathbf{0}_{2|V|} \in \mathbb{R}^{2|V|} \) is a zero vector. %that initializes the remaining dimensions for intermediate computations.
For each loop iteration \( k \), given the input from the previous iteration in the form  
\begin{equation}
(x_i,\, \ve_i,\, h_1^{(k)},\, h_2^{(k)},\, \dots,\, h_{n + 2|V|}^{(k)}),
\end{equation}
the model processes the input and applies computations across layers as follows: In the first two layers, the model extracts and aggregates the input values \(x_1, \dots, x_n \). Then, in the third layer, the feedforward layer computes the circuits.
We describe the computation for each layer as follows:
\begin{enumerate}[leftmargin=*]%, nosep]
    \item \textbf{First layer:}  
    The attention layer is disabled (e.g., all attention weights are set to zero), so the input is passed through unchanged.  
    The feed-forward layer computes \(x_i \ve_i\) for each position:
    \begin{equation}
        (x_i,\, \underbrace{x_i \ve_i}_{\text{updated}},\, \mathbf{0}_2n,\, h_{2n+1}^{(k)},\, \dots,\, h_{n+2|V|}^{(k)}),
    \end{equation}
    %where \( x_i \ve_i \in \mathbb{R}^n \) encodes the scalar \( x_i \) at position \( i \), and \( \mathbf{0}_n \in \mathbb{R}^n \) denotes the zero vector.
    
    \item \textbf{Second layer:}  
    The attention layer aggregates the \( x_i \ve_i \) values into the last token position using uniform attention, as described earlier, producing:
    \begin{equation}
        (x_i,\, x_i \ve_i,\,\underbrace{x_1,\, 1,\, \dots,\, x_n,\, 1}_{\text{updated}},\, h_{2n+1}^{(k)},\, \dots,\, h_{n+2|V|}^{(k)}).
    \end{equation}
    This ensures that the input vector \( x = (x_1, \dots, x_n) \) is correctly stored and made available at every subsequent iteration. (The feed-forward layer is disabled.)
    %by fixing the values of the input nodes throughout the loop. (The feed-forward layer is disabled.)
    
    \item \textbf{Third layer:}  
    The attention layer is disabled, and the feed-forward layer begins simulating the desired gate computations as defined in Eq.~\ref{eq:ff_gates}, where the input nodes correspond to the first \( n \) components of the embedding.
\end{enumerate}
Finally, the output function selects the value of the output gate.  
This completes the construction.
\end{proof}

\subsection{Definitions and Proofs for \Cref{thm:log_loop}}\label{app:nc2}
We introduce deferred definitions for each problem and provide proofs showing that each problem lies in nonuniform $\NC^2$, thereby proving the statement.

\begin{definition}[Solving Integer Linear Systems]
Given an \(n \times n\) integer matrix \(A \in \mathbb{Z}^{n \times n}\) and a vector \(b \in \mathbb{Z}^n\), the task is to determine whether there exists a solution \(x \in \mathbb{Q}^n\) to the system of linear equations \(A x = b\), and to compute such a solution if it exists.
\end{definition}

\begin{proposition}
The problem of solving integer linear systems over \(\mathbb{Q}\), where \(A \in \mathbb{Z}^{n \times n}\) is non-singular, lies in \(\NC^2\).
\end{proposition}
\begin{proof}[Proof Sketch]
Using Csanky's algorithm, the inverse of an \(n \times n\) matrix over the rationals (or over a field of characteristic zero) can be computed in \(\NC^2\)~\citep{csanky1975fast}. Once \(A^{-1}\) is computed, the solution is given by \(x = A^{-1} b\), which is a matrix-vector multiplication and hence also lies in \(\NC^1\). Therefore, the full solution procedure belongs to \(\NC^2\).
\end{proof}

\begin{definition}[Perfect Matching in Bipartite Graphs]
Given a bipartite graph \(G = (U \cup V, E)\), where \(|U| = |V| = n\), decide whether \(G\) contains a \emph{perfect matching}—that is, a subset of edges \(M \subseteq E\) such that every vertex in \(U\) and \(V\) is incident to exactly one edge in \(M\).
\end{definition}
\begin{proposition}
The bipartite perfect matching problem is in nonuniform \(\NC^2\).
\end{proposition}
\begin{proof}[Proof Sketch]
The problem can be reduced to checking whether a certain symbolic matrix has a non-zero determinant, and the determinant of an \(n\times n\) matrix can be computed in randomized uniform \(\NC^2\)~\citep{karp1985constructing}.
By Adleman's theorem~\citep{adleman1978two}, any \(\mathsf{BPP}\) algorithm can be simulated in \(\mathsf{P}/\mathsf{poly}\), meaning that for each input length \(n\), the randomness used by the algorithm can be fixed in advance and hard-wired as advice. This yields a nonuniform circuit family of depth \(O(\log^2 n)\) and polynomial size.
Thus, the problem lies in nonuniform \(\NC^2\).
\end{proof}

\begin{definition}[Fixed Context-Free Language Recognition]
Let \( G \) be a fixed context-free grammar. Given an input string \( w \in \Sigma^* \), decide whether \( w \in L(G) \), i.e., whether \( w \) is generated by \( G \).
\end{definition}
Note that this is the \emph{nonuniform} word problem, and is distinct from the \emph{uniform} CFG membership problem, where the grammar is part of the input and the problem is known to be \(\mathsf{P}\)-complete.
\begin{proposition}[\citealp{ruzzo1979tree}]
The word problem for a fixed context-free language is in \(\NC^2\). 
\end{proposition}

%%%%%%%%%%%%%%%%%%%%%%%%%%%%%%%%%%%%%%%%%%%%%%%%%%%%%%%%%%%%
\section{Background on Approximate Counting and Sampling}\label{app:approx}
This section provides an overview of key definitions, notation, and foundational results concerning approximate counting and almost uniform generation, with a particular focus on self-reducible relations.

\subsection{Definitions of Approximate Counting} %and Almost Uniform Generation}
Many computational problems can be formalized as counting the number of solutions satisfying a certain predicate. Formally, a \emph{counting problem} is defined for relation \( R \subseteq \Sigma^* \times \Sigma^* \), where \( \Sigma \) is a finite alphabet. For each input \( x \in \Sigma^* \), the goal is to compute the number of feasible solutions \( y \in \Sigma^* \) such that \( (x, y) \in R \); that is, counting function \( \#R : \Sigma^* \to \mathbb{N} \) is defined by \(
\#R(x) \coloneq \left|\left\{ y \in \Sigma^* \mid (x, y) \in R \right\}\right|.
\)

The complexity class $\#\mathsf{P}$, introduced by~\cite{valiant1979complexity}, captures the class of functions that count the number of accepting paths of a nondeterministic polynomial-time Turing machine.

\begin{definition}[$\#\mathsf{P}$]
A function \( f : \Sigma^* \to \mathbb{N} \) belongs to the class $\#\mathsf{P}$ if there exists a polynomial-time Turing machine \( M \) and a polynomial \( p \) such that, for every input \( x \in \Sigma^* \),
\begin{equation}
f(x) = \left|\left\{ y \in \{0,1\}^{p(|x|)} \mid M(x, y) = 1 \right\}\right|.
\end{equation}
\end{definition}

Intuitively, $\#\mathsf{P}$ encompasses problems such as counting the number of satisfying assignments of a Boolean formula ($\#\mathrm{SAT}$), computing the permanent of a 0,1 matrix, or counting the number of perfect matchings in a bipartite graph~\citep{valiant1979complexity}. All of these are known to be $\#\mathsf{P}$-complete.

\paragraph{Approximate Counting} Since exact computation of $\#\mathsf{P}$ functions is believed to be intractable, much attention has been devoted to the design of efficient approximation algorithms. This motivates the study of fully polynomial randomized approximation schemes (FPRAS) and their connections to approximate uniform generation, which we describe in the subsequent sections.

\begin{definition}[FPTAS]
Let \( R \) be a relation, and define \( f(x) \coloneq \lvert R(x) \rvert \).  
A \emph{fully polynomial-time approximation scheme (FPTAS)} is a deterministic algorithm \( A(x, \varepsilon) \) that, for any input \( x \in \{0,1\}^* \) and any error tolerance \( \varepsilon > 0 \), produces a value \( \hat{f}(x) \in \mathbb{Q} \) such that
\begin{equation}
(1 - \varepsilon) f(x) \le \hat{f}(x) \le (1 + \varepsilon) f(x),
\end{equation}
and the running time of \( A \) is bounded by a polynomial in both the input size \( |x| \) and \( 1/\varepsilon \); i.e., \( \mathrm{poly}(|x|, 1/\varepsilon) \). Let the class \( \mathsf{FPTAS} \) denote the set of all functions that admit such a scheme.
\end{definition}

\paragraph{Randomized Schemes for Approximation}
While an FPTAS provides a deterministic polynomial-time algorithm for approximating counting functions within any desired relative error, such schemes are known to exist only for a very restricted class of problems. 
The concept of randomized approximation schemes was introduced by \citet{karp1983monte}, who proposed using Monte Carlo algorithms to approximate the value of hard counting functions. Their work laid the foundation for what is now known as the fully polynomial randomized approximation scheme (FPRAS), which allows one to efficiently obtain high-probability multiplicative approximations for a broad class of $\#\mathsf{P}$ functions.

\begin{definition}[FPRAS]
Let \( R \) be a relation, and define \( f(x) \coloneq \lvert R(x) \rvert \). A \emph{randomized approximation scheme} is an algorithm that, given any error tolerance \( \varepsilon > 0 \) and failure probability \( \delta > 0 \), outputs an estimate \( \hat{f}(x) \) such that
\begin{equation}
\Pr\left[ (1 - \varepsilon) f(x) \le \hat{f}(x) \le (1 + \varepsilon) f(x) \right] \ge 1 - \delta.
\end{equation}
If the algorithm runs in time polynomial in \( |x| \), \( 1/\varepsilon \), and \( \log(1/\delta) \), it is called a \emph{Fully Polynomial-Time Randomized Approximation Scheme (FPRAS)} for \( f \).  
We denote by \( \mathsf{FPRAS} \) the class of functions that admit such an algorithm.
\end{definition}

%\paragraph{Almost Uniform Generation} 
%While FPRAS provides efficient randomized algorithms for estimating the number of solutions, in many applications it is equally important to generate solutions themselves. Since exact uniform generation is often as hard as exact counting, attention has shifted to approximate variants. In particular, we consider the task of \emph{almost uniform generation}, where the goal is to sample from a distribution that is close to the uniform distribution over $R(x)$.

\subsection{Complexity of Self-Reducible Problems}
The concept of \emph{self-reducibility}, introduced by~\citet{Schnorr1976}, plays a central role in bridging approximate counting and approximate sampling. Informally, a relation \( R(x) \) is self-reducible if any solution \( y \in R(x) \) can be obtained by recursively solving a polynomial number of smaller subproblems derived from \( x \). %This structural property enables reductions from approximate counting to almost uniform sampling, and vice versa.
\begin{definition}[Self-Reducibility~\citep{Schnorr1976}]\label{def:self-red}
A relation \( R \subseteq \Sigma^* \times \Sigma^* \) is \emph{self-reducible} if: % \emph{polynomial-time self-reducible}
\begin{enumerate}[leftmargin=*]
  \item There exists a polynomial-time computable function \( g \in \Sigma^* \to \mathbb{N} \) such that \(
    x R y \Rightarrow |y| = g(x);
  \)
  \item There exist polynomial-time computable functions \( \psi \in \Sigma^* \times \Sigma^* \to \Sigma^* \) and \( \sigma \in \Sigma^* \to \mathbb{N} \) s.t.
    \begin{align}
    \sigma(x) &= O(\log |x|), \\
    g(x) > 0 &\Rightarrow \sigma(x) > 0 \quad \forall x \in \Sigma^*, \\
    |\psi(x, w)| &\le |x| \quad \forall x, w \in \Sigma^*,
    \end{align}
  and such that, for all \( x \in \Sigma^* \), \( y = y_1 \dots y_n \in \Sigma^* \),
  \begin{equation}
    \langle x, y_1, \dots, y_n \rangle \in R \iff \langle \psi(x, y_1 \dots y_{\sigma(x)}), y_{\sigma(x)+1}, \dots, y_n \rangle \in R.
  \end{equation}
\end{enumerate}
\end{definition}
Self-reducibility relation bridges the gap between approximate counting and almost uniform generation. A fundamental result by \citet{jerrum1986random} shows that, under the self-reducibility condition, these two problems have the same time complexity: the ability to approximately count solutions within a polynomial factor implies the ability to sample from an almost uniform distribution over the solution set, and vice versa. 
\begin{theorem}[\citealp{jerrum1986random}]
Let \( R \subseteq \Sigma^* \times \Sigma^* \) be a self-reducible relation.  
There exists an FPRAS for approximating \( |R(x)| \) if and only if there exists an FPAUS for sampling uniformly from \( R(x) \).
\end{theorem}

\if0
Early approximate counting algorithms~\citep{knuth1975} rely on naive random sampling and yield unbiased estimates, but suffer from high variance when valid solutions are sparse, precluding efficient approximation.
Subset sampling~\citep{stockmeyer1983complexity} differs from uniform base sampling in that it queries structured subsets rather than individual elements, allowing for amplified signal detection and reduced variance even when the valid solution set is sparse.
\begin{theorem}[\citealp{stockmeyer1983complexity}]
Let \( R \subseteq \Sigma^* \times \Sigma^* \) be a self-reducible p-relation.  
Suppose there exists an oracle \( A \in \#\mathsf{P} \) such that, for any input \( x \in \Sigma^* \) and any rational \( \varepsilon > 0 \), \( A \) approximates \( |R(x)| \) within relative error \( (1 \pm \varepsilon) \) in probabilistic polynomial time.  %
Then, there exists a probabilistic polynomial-time algorithm (with oracle access to \( A \)) that outputs an element \( y \in R(x) \), and the distribution is within total variation distance \( \delta \) of the uniform distribution over \( R(x) \), for any specified \( \delta > 0 \).
\end{theorem}
in contrast
\begin{theorem}[\citealp{jerrum1986random}]
Let \( R \subseteq \Sigma^* \times \Sigma^* \) be a p-relation.  
Suppose there exists a probabilistic polynomial-time algorithm \( \mathcal{G} \) that, on input \( x \in \Sigma^* \), outputs a random element from \( R(x) \) such that the output distribution is within total variation distance \( \delta \) of the uniform distribution over \( R(x) \).  
Then, for any \( \varepsilon > 0 \), there exists a probabilistic polynomial-time algorithm that approximates \( |R(x)| \) within relative error \( (1 \pm \varepsilon) \) with high probability, using repeated calls to \( \mathcal{G} \).
\end{theorem}
\fi

\if0
\begin{proof}
random sampling is not succeed 

bias

so we need base sampler to approximate

low bias? % どうやって定式化されるのだろうか

hoge

hoge

hoge
\end{proof}

We first show that there exists a reduction from approximate counting to almost uniform generation, using a polynomial-factor approximate counter to construct a sampler.
\begin{theorem}[\citealp{jerrum1986random}]
Let \( R \) be a self-reducible relation.  
There exists an FPAUS for counting solutions to \( R \) if there exists an FPRAS for uniformly sampling solutions from \( R \).
\end{theorem}
\begin{proof}[Proof Sketch]
They construct the algorithm to traverse the \emph{tree of derivations} for \( R(x) \), where each internal node represents a partial solution \( w \), and each branch corresponds to extending \( w \).

At each step, the algorithm estimates the number of solutions that remain under each possible extension (i.e., how many completions of \( w0 \) and \( w1 \) belong to \( R(x) \)). Using these estimates, it chooses the next bit probabilistically, with the branching probability proportional to the number of remaining solutions along each path.

By repeating this process recursively until a full-length solution is constructed, and choosing each branch with probability proportional to the number of valid completions, the algorithm ensures that the final output is approximately uniformly distributed over \( R(x) \).
\end{proof}
\fi

%\paragraph{Bootstrapping} The above theorems highlight a fundamental equivalence between approximate counting and almost uniform sampling for self-reducible relations. 
%In particular, even a rough approximation---such as a counting algorithm within a polynomial ratio, or a sampler with tolerance \( \varepsilon = |x|^{-\alpha} \), which can be \emph{bootstrapped} to yield an arbitrarily accurate FPRAS or FPAUS. This is achieved by recursively leveraging the structure of the self-reduction tree to refine estimates or correct sampling bias. Thus, the existence of a weak counting or sampling procedure suffices to construct a fully polynomial-time approximation scheme for both tasks.

\if0
\subsection{Self-Correction via Backtracking}\label{app:sinclair}
We describe how even a coarse approximation to the counting problem can be leveraged to obtain an efficient approximate sampler. Specifically, we present a classical result by~\citet{sinclair1989approximate}, which shows that for self-reducible relations, any polynomial-factor approximate counter suffices to construct an FPAUS. This result forms the foundation for various bootstrapping techniques in approximate counting and sampling.

\subsubsection{Preliminaries on Markov Chains and Rapid Mixing}
We review some fundamental theorems on the convergence rates of Markov chains. %, following \citet{sinclair1989approximate}.

\begin{definition}[Relative Pointwise Distance]
Let \( P \) be the transition matrix of a Markov chain with stationary distribution \( \pi \). For a non-empty subset \( U \subseteq [N] \) and integer \( t \in \mathbb{N} \), the \emph{relative pointwise distance} after \( t \) steps is defined by
\begin{equation}
\Delta_U(t) = \max_{i, j \in U} \frac{|p_{ij}^{(t)} - \pi_j|}{\pi_j},
\end{equation}
where \( p_{ij}^{(t)} \) denotes the probability of transitioning from state \( i \) to state \( j \) in \( t \) steps.
\end{definition}

The convergence rate of a reversible Markov chain is governed by the second-largest eigenvalue of its transition matrix.
\begin{proposition}[Proposition 3.1 in~\cite{sinclair1989approximate}]
Let \( P \) be the transition matrix of an ergodic time-reversible Markov chain, \( \pi \) its stationary distribution and \( \{ \lambda_i : 0 \le i \le N-1 \} \) its (necessarily real) eigenvalues, with \( \lambda_0 = 1 \). Then for any non-empty subset \( U \subseteq [N] \) and all \( t \in \mathbb{N} \), the relative pointwise distance \( \Delta_U(t) \) satisfies
\begin{equation}
\Delta_U(t) \le \frac{\lambda_{\max}^t}{\min_{j \in U} \pi_j},
\end{equation}
where \( \lambda_{\max} = \max \{ |\lambda_i| : 1 \le i \le N-1 \} \).
\end{proposition}

This convergence behavior can be related to the graph structure.
\begin{definition}[Graph Conductance]
Let \( G \) denote the underlying graph of the chain. The conductance \( \Phi(G) \) is defined as
\begin{equation}
\Phi(G) \coloneq \min_{\substack{S \subset V \\ 0 < \pi(S) \leq 1/2}} \frac{Q(S, \overline{S})}{\pi(S)}, \quad \text{where} \quad Q(S, \overline{S}) \coloneq \sum_{i \in S,\, j \in \overline{S}} \pi(i) P(i,j).
\end{equation}
\end{definition}
A small value of \( \Phi(G) \) indicates the existence of a bottleneck, which impedes mixing. Conversely, if \( \Phi(G) \) is bounded below by an inverse polynomial, the chain mixes rapidly.

This relationship between spectral properties and graph structure is formalized by Cheeger's inequality:
\begin{theorem}[Cheeger's Inequality {\cite{sinclair1989approximate}}]
For an ergodic, time-reversible Markov chain with underlying graph \( G \), the second-largest eigenvalue \( \lambda_1 \) of its transition matrix satisfies
\begin{equation}
\lambda_1 \le 1 - \frac{\Phi(G)^2}{2}.
\end{equation}
\end{theorem}
Combining the above results yields a convergence bound purely in terms of the conductance:
\begin{theorem}[Theorem 3.4 in~\cite{sinclair1989approximate}]\label{thm:rapid}
Let \( G \) be the underlying graph of an ergodic, time-reversible Markov chain, possibly modified as in Proposition~3.2 to ensure \( \min_j P(j,j) \ge \frac{1}{2} \), and let \( \pi \) be its stationary distribution. Then for any non-empty subset \( U \subseteq [N] \) and all \( t \in \mathbb{N} \), the relative pointwise distance satisfies
\begin{equation}
\Delta_U(t) \le \frac{(1 - \Phi(G)^2/2)^t}{\min_{i \in U} \pi_i}.
\end{equation}
\end{theorem}

\subsubsection{Backtracking Algorithm of \cite{sinclair1989approximate}}
%{Self-Reducibility and Complexity of Approximate Counting and Sampling}\label{app:sinclair}
% もっとしっかり説明する必要あるかな？
The concept of \emph{self-reducibility}, introduced by~\citet{Schnorr1976}, plays a central role in bridging approximate counting and approximate sampling. Informally, a relation \( R(x) \) is self-reducible if any solution \( y \in R(x) \) can be obtained by recursively solving a polynomial number of smaller subproblems derived from \( x \). This structural property enables reductions from approximate counting to almost uniform sampling, and vice versa.
\begin{definition}[Self-Reducibility~\citep{Schnorr1976}]\label{def:self-red}
A relation \( R \subseteq \Sigma^* \times \Sigma^* \) is \emph{self-reducible} if: % \emph{polynomial-time self-reducible}
\begin{enumerate}[leftmargin=*]
  \item There exists a polynomial-time computable function \( g \in \Sigma^* \to \mathbb{N} \) such that \(
    x R y \Rightarrow |y| = g(x);
  \)

  %\item For all \( x \in \Sigma^* \) with \( I_R(x) = 0 \), the predicate \( \Lambda \in R(x) \) can be tested in polynomial time.

  \item There exist polynomial-time computable functions \( \psi \in \Sigma^* \times \Sigma^* \to \Sigma^* \) and \( \sigma \in \Sigma^* \to \mathbb{N} \) s.t.
    \begin{align}
    \sigma(x) &= O(\log |x|), \\
    g(x) > 0 &\Rightarrow \sigma(x) > 0 \quad \forall x \in \Sigma^*, \\
    |\psi(x, w)| &\le |x| \quad \forall x, w \in \Sigma^*,
    \end{align}
  and such that, for all \( x \in \Sigma^* \), \( y = y_1 \dots y_n \in \Sigma^* \),
  \begin{equation}
    \langle x, y_1, \dots, y_n \rangle \in R \iff \langle \psi(x, y_1 \dots y_{\sigma(x)}), y_{\sigma(x)+1}, \dots, y_n \rangle \in R.
  \end{equation}
\end{enumerate}
\end{definition}

To formalize the recursive structure exploited in such reductions, we introduce the notion of a derivation tree, 
which captures how solutions to a self-reducible relation can be inductively constructed. 
This representation also provides a natural framework for describing the behavior of Markov chains.
\begin{definition}[Tree of Derivations~\citep{sinclair1989approximate}]
For each $x \in \Sigma^*$ with $R(x) \ne \emptyset$, the \emph{tree of derivations} $T_R(x)$ is a rooted tree in which each vertex $v$ bears both a \emph{problem instance label} $\mathrm{inst}(v) \in \Sigma^*$, and a \emph{partial solution label} $\mathrm{sol}(v) \in \Sigma^*$, defined inductively as follows:
\begin{itemize}[leftmargin=*]
  \item The root $u$ has labels $\mathrm{inst}(u) = x$ and $\mathrm{sol}(u) = \Lambda$.
  
  \item For any vertex $v$ in $T_R(x)$, let $z = \mathrm{inst}(v)$. If $z$ is an atom (i.e., an indivisible base case), then $v$ is a leaf. Otherwise, define
  \begin{equation}
    W(v) = \{ w \in \Sigma^{\sigma(z)} : R(\psi(z, w)) \ne \emptyset \}.
  \end{equation}
\end{itemize}
  Then $v$ has a child $v_w$ for each $w \in W(v)$, with labels:
  \(
    \mathrm{inst}(v_w) = \psi(z, w), \quad \mathrm{sol}(v_w) = \mathrm{sol}(v) \cdot w.
  \)
\end{definition}

%We introduce hoge
%\begin{theorem}[rapid mixing]
%hoge hoge 
%\end{theorem}

We now introduce the main result on bootstrapping, along with a sketch of its proof. 
As our main theorem concerns sampling from a weighted (non-uniform) distribution, we briefly outline the classical uniform case to clarify how it naturally extends to our setting. 
For full details, see~\citet{sinclair1989approximate}.

\begin{theorem}[\citealp{sinclair1989approximate}]\label{thm:sinclair}
Let \( R \) be a self-reducible relation. If there exists a polynomial-time randomized approximate counter for \( R \) within ratio \( 1 + O(n^{\alpha}) \) for some \( \alpha \in \mathbb{R} \), then there exists an FPAUS for \( R \).
\end{theorem}

\begin{proof}[Proof sketch]
The key idea is to define a Markov chain whose stationary distribution induces an almost-uniform distribution over the solution set \( R(x) \). 
The chain is constructed over the vertices of a self-reducibility tree \( G = (V, E) \), with transitions between parent and child nodes.

Let \( \mathcal{C} \) be a polynomial-time approximate counter for \( R \) with approximation ratio \( \rho(n) = 1 + O(n^{\alpha}) \). 
We first define edge weights:
\begin{equation}
f(u, v) \coloneq
\begin{cases}
\mathcal{C}(\mathrm{inst}(u)) & \text{if } v \text{ is the predecessor of } u, \\
\mathcal{C}(\mathrm{inst}(v)) & \text{otherwise}.
\end{cases}
\label{eq:edge-weight}
\end{equation}
The degree of each vertex \( v \in V \) is given by
\begin{equation}
d(v) \coloneq \sum_{u : (u, v) \in E} f(u, v).
\label{eq:degree}
\end{equation}
The transition probability from \( v \) to \( u \) is then defined as
\begin{equation}
p_{vu} \coloneq
\begin{cases}
\displaystyle \frac{f(u, v)}{2d(v)} & \text{if } (u, v) \in E, \\
\displaystyle \frac{1}{2}           & \text{if } u = v, \\
0                                   & \text{otherwise}.
\end{cases}
\label{eq:transition}
\end{equation}
This defines a lazy, irreducible, and aperiodic Markov chain with stationary distribution
\begin{equation}
\pi_v = \frac{d(v)}{D}, \qquad D = \sum_{v \in V} d(v).
\end{equation}
Notably, since the degree of every leaf \( u \in R(x) \) is normalized (i.e., \( d(u) = 1 \)), the stationary probability over leaves is uniform:
\begin{equation}
\Pr[v \in \text{leaves}] = \sum_{v \in R(x)} \pi_v = \frac{|R(x)|}{D} \ge \frac{1}{2rm},
\end{equation}
where \( r = \rho(n) \) and \( m \) is the depth of the tree. As both are polynomial in \( |x| \), this success probability is inverse-polynomial. 
Hence, repeating the sampling process a polynomial number of times reduces the failure probability below \( \frac{1}{2} \) via the Chernoff bound.

To guarantee rapid convergence, we bound the conductance of the chain. 
Let \( S \subset V \) be any subtree of the self-reducibility tree (excluding the root). Define the flow and total mass as:
\begin{align}
F_S &\ge \frac{L(S)}{2rD}, \\
C_S &\le \frac{2rm \cdot L(S)}{D},
\end{align}
where \( L(S) \) is the number of leaves in \( S \). Then, the conductance satisfies:
\begin{equation}
\Phi_S = \frac{F_S}{C_S} \ge \frac{1}{4r^2 m}.
\end{equation}
Since this holds for all such subtrees, we obtain:
\begin{equation}
\Phi(G) \ge \frac{1}{4r^2 m}.
\end{equation}
By standard results on mixing time (see Theorem~\ref{thm:rapid}), the Markov chain mixes in time polynomial in \( |x| \) and \( \log(1/\varepsilon) \), ensuring approximate sampling within total variation distance \( \varepsilon \).

Finally, this construction extends to randomized approximate counters that may return incorrect outputs with small failure probability \( \delta \), by simulating the process on a perturbed tree and analyzing the modified chain. 
See Theorem 4.5 of~\citet{sinclair1989approximate} for details.
\end{proof}
\fi

%%%%%%%%%%%%%%%%%%%%%%%%%%%%%%%%%%%%%%%%%%%%%%%%%%%%%%%%%%%%
\section{Deferred Proofs for \Cref{sec:prob}}\label{app:prob}

\subsection{Proof for \Cref{thm:p:sep}}
\begin{proof}
By \Cref{thm:psep}, a probabilistic CoT model can simulate a polynomial-time probabilistic Turing machine, and hence can implement any FPRAS algorithm within polynomial CoT steps.
In contrast, deterministic Looped Transformers with a polynomial number of loop iterations correspond to deterministic polynomial-time Turing machines, and hence can only compute functions in \( \mathsf{FPTAS} \), assuming that each loop iteration performs a fixed computation deterministically.

Given the assumption that \( \mathsf{FPTAS} \ne \mathsf{FPRAS} \), there exists a relation \( R \subseteq \Sigma^* \times \Sigma^* \) such that approximating \( |R(x)| \) to within a relative error \( \varepsilon > 0 \) lies in \( \mathsf{FPRAS} \setminus \mathsf{FPTAS} \).
Therefore, a probabilistic CoT model can approximate \( |R(x)| \) within arbitrary relative error with high probability in polynomial time, whereas no deterministic Looped Transformer with polynomial loop iterations can achieve the same approximation.
\end{proof}

\subsection{Proof for \Cref{lemma:self}}
\begin{proof}
Fix any token position \(i \in [m]\).  
By the definition of \((\alpha, \gamma)\)-weak token-wise learning, we have
\begin{equation}
\Pr_{r_i \sim \pi(r_i \mid x, y_{<i})}
\left[
\mathcal{L}_{\mathrm{CE}}(x, y_{<i}, r_i) \le \alpha(|x|)
\right] \ge \tfrac{1}{2} + \gamma.
\end{equation}
Since
\(
\mathrm{KL}(p \,\|\, \pi) \le \mathcal{L}_{\mathrm{CE}},
\)
it follows that
\begin{equation}
\Pr_{r_i}
\left[
\mathrm{KL}\left(p(\cdot \mid x, y_{<i}) \,\|\, \pi(\cdot \mid x, y_{<i}, r_i)\right) \le \alpha(|x|)
\right] \ge \tfrac{1}{2} + \gamma.
\end{equation}
By Pinsker's inequality, this implies
\begin{equation}
\Pr_{r_i} \left[
\|\pi^{(t)} - p\|_{\mathrm{TV}} \le \sqrt{ \tfrac{1}{2} \alpha(|x|) }
\right] \ge \tfrac{1}{2} + \gamma.
\end{equation}
From the definition of total variation distance,
\[
\|\pi^{(t)} - p\|_{\mathrm{TV}} = \max_{A \subseteq \mathcal{V}} \left| \pi^{(t)}(A) - p(A) \right| \ge \left| \pi^{(t)}(y) - p(y) \right|
\quad \text{for all } y \in \mathcal{V}.
\]
Hence,
\begin{equation}
\Pr_{r_i} \left[
\left| \pi^{(t)}(y) - p(y) \right| \le O\left( \sqrt{ \alpha(|x|) } \right)
\right] \ge \tfrac{1}{2} + \gamma.
\end{equation}

Let \( Z_t = \mathbf{1}\left[ \left| \pi^{(t)}(y) - p(y) \right| \le O\left( \sqrt{ \alpha(|x|) } \right) \right] \), and note that \(\mathbb{E}[Z_t] \ge \tfrac{1}{2} + \gamma\).  
Let \( S = \sum_{t=1}^T Z_t \) denote the number of good samples. Then the expectation satisfies
\(
\mathbb{E}[S] = \sum_{t=1}^T \mathbb{E}[Z_t] \ge T\left( \tfrac{1}{2} + \gamma \right).
\)

By a standard Chernoff bound:
\begin{equation}
\Pr\left[ S < (1 - \varepsilon)\mathbb{E}[S] \right] 
\le \exp\left( - \frac{\varepsilon^2 \mathbb{E}[S]}{2} \right).
\end{equation}
To bound the probability that fewer than half the samples are good, we set
\begin{equation}
(1 - \varepsilon)\mathbb{E}[S] = \tfrac{T}{2}
\quad \Rightarrow \quad 
\varepsilon = 1 - \frac{T/2}{\mathbb{E}[S]} \ge \frac{2\gamma}{1 + 2\gamma} \ge \gamma,
\end{equation}
where the last inequality holds for \(\gamma < 1/2\).

Substituting into the Chernoff bound gives
\begin{equation}
\Pr\left[ \sum_{t=1}^T Z_t < \tfrac{T}{2} \right] 
\le \exp\left( - \frac{\gamma^2 \cdot \mathbb{E}[S]}{2} \right) 
\le \exp(-2T\gamma^2),
\end{equation}
where the final inequality uses \( \mathbb{E}[S] \ge T(\tfrac{1}{2} + \gamma) \).

Therefore, if \(T \ge \frac{2}{\gamma^2} \log\left( \frac{|\mathcal{V}|}{\delta} \right)\), then with probability at least \(1 - \delta / |\mathcal{V}|\),  
more than half of the values \(\pi^{(t)}(y)\) lie within \( O\left( \sqrt{ \alpha(|x|) } \right) \) of \( p(y) \).  
Thus, the \emph{median} \( \pi^\star(y) \) also lies within this range.

Applying a union bound over all \( y \in \mathcal{V} \), we conclude that with probability at least \(1 - \delta\),
\begin{equation}
\left| \pi^\star(y) - p(y) \right| \le O\left( \sqrt{ \alpha(|x|) } \right) \quad \text{for all } y \in \mathcal{V}.
\end{equation}
\end{proof}

\subsection{Proof for \Cref{thm:approx_sampler}}
\begin{proof}
The proof follows the approach of rejection sampling based on approximate next-token predictions, inspired by~\cite{jerrum1986random}.

Let \( R(x) \coloneqq \{ y \in \Sigma^* \mid (x, y) \in R \} \) denote the solution set for input \( x \in \Sigma^* \), and let \( p(\cdot \mid x) \) be the uniform distribution over \( R(x) \).  
By the self-reducibility of \( R \), there exists a polynomially bounded function \( m : \mathbb{N} \to \mathbb{N} \) such that all \( y \in R(x) \) satisfy \( |y| \le m(|x|) \).

Assume the CoT model \( \pi \) satisfies \((\alpha, \gamma)\)-weak token-wise learning with respect to \( p \), for some constant \( \gamma \in (0, \tfrac{1}{2}) \) and function \( \alpha: \mathbb{N} \to \mathbb{R}_{> 0} \).  
Then for each token position \( i \in [m(|x|)] \), with probability at least \( 1/2 + \gamma \), the prediction \( \pi(y_i \mid x, y_{<i}) \) approximates the true conditional probability \( p(y_i \mid x, y_{<i}) \) within total variation distance \( \le 1/\alpha(|x|) \).  
By Pinsker’s inequality, this implies the pointwise deviation:
\begin{equation}\label{eq:cond}
\left| \pi(y_i \mid x, y_{<i}) - p(y_i \mid x, y_{<i}) \right| \le \sqrt{ \frac{1}{2\alpha(|x|)} }.
\end{equation}

Since \( p(y_i \mid x, y_{<i}) \ge 1 / |R(x)| \), we have the relative error bound:
\begin{equation}
\left| \frac{\pi(y_i \mid x, y_{<i})}{p(y_i \mid x, y_{<i})} - 1 \right| \le \sqrt{ \frac{1}{2\alpha(|x|)} } \cdot |R(x)|.
\end{equation}

Choose \( \alpha(|x|) \ge \frac{1}{4 \cdot m(|x|)^2 \cdot |R(x)|^2} \), so that:
\begin{equation}
\left| \frac{\pi(y_i \mid x, y_{<i})}{p(y_i \mid x, y_{<i})} - 1 \right| \le \frac{1}{2m(|x|)}.
\end{equation}

Then for all \( y \in R(x) \), we have:
\begin{equation}
\frac{\pi(y \mid x)}{p(y \mid x)} = \prod_{i=1}^{m(|x|)} \frac{\pi(y_i \mid x, y_{<i})}{p(y_i \mid x, y_{<i})}
\in \left[ \left(1 - \frac{1}{2m(|x|)} \right)^{m(|x|)}, \left(1 + \frac{1}{2m(|x|)} \right)^{m(|x|)} \right].
\end{equation}

Using the inequality \( \left(1 + \tfrac{1}{2m} \right)^m \le e^{1/2} \), we obtain:
\begin{equation}
\frac{1}{\beta} \le \frac{\pi(y \mid x)}{p(y \mid x)} \le \beta, \quad \text{where } \beta = e^{1/2}.
\end{equation}

Now define the rejection sampling algorithm:
\begin{itemize}
  \item Sample \( y \sim \pi(\cdot \mid x) \);
  \item If \( y \notin R(x) \), reject and repeat;
  \item If \( y \in R(x) \), accept with probability \( \frac{1}{\beta} \cdot \frac{p(y \mid x)}{\pi(y \mid x)} \le 1 \);
\end{itemize}

This produces exact samples from \( p(\cdot \mid x) \). For any \( y \in R(x) \), the probability of acceptance is:
\begin{equation}
\pi(y \mid x) \cdot \left( \frac{1}{\beta} \cdot \frac{p(y \mid x)}{\pi(y \mid x)} \right) = \frac{1}{\beta |R(x)|} ,
\end{equation}
which is independent of \( y \). Hence, the output is uniform over \( R(x) \).

\paragraph{Error Analysis.}
There are two sources of error:
\begin{enumerate}
  \item The condition in Eq.\,\ref{eq:cond} may fail to hold at some positions \( i \in [m(|x|)] \).
  \item The rejection sampling procedure may fail to produce any accepted sample within a bounded number of trials.
\end{enumerate}

To address (1), we use the \Cref{lemma:self} and draw 
\(T = O\left( \frac{1}{\gamma^2} \log \left( \frac{|\mathcal{V}| \cdot m(|x|)}{\varepsilon/2} \right) \right)\)
independent reasoning paths to estimate each conditional probability.  
By a union bound over all positions and tokens, the probability that any of them exceeds the deviation bound is at most \( \varepsilon/2 \).

To address (2), note that the acceptance probability is at least \( 1/\beta^2 \).  
Hence, by running \(N = O\left( \beta^2 \log \left( \frac{2}{\varepsilon} \right) \right)\) independent trials, the probability that all of them are rejected is at most
\begin{equation}
(1 - 1/\beta^2)^N \le e^{-N/\beta^2} \le \varepsilon/2.
\end{equation}

Combining both sources of error, with probability at least \( 1 - \varepsilon \), the algorithm outputs a sample from a distribution \( q^{\pi}_\varepsilon(\cdot \mid x) \) satisfying
\begin{equation}
\bigl\| q^{\pi}_\varepsilon(\cdot \mid x) - p(\cdot \mid x) \bigr\|_{\mathrm{TV}} \le \varepsilon.
\end{equation}

\paragraph{Running Time.}
Each token prediction and rejection trial takes polynomial time in \( |x| \), and the number of trials is bounded by \( O(\log(1/\varepsilon)) \) with high probability.  
Therefore, the total running time is polynomial in \( (|x|, \log(1/\varepsilon)) \).
\end{proof}

%\section{Experimental Details}

%%%%%%%%%%%%%%%%%%%%%%%%%%%%%%%%%%%%%%%%%%%%%%%%%%%%%%%%%%%%

\if0
\newpage
\section*{NeurIPS Paper Checklist}

\begin{enumerate}

\item {\bf Claims}
    \item[] Question: Do the main claims made in the abstract and introduction accurately reflect the paper's contributions and scope?
    \item[] Answer: \answerYes{} % Replace by \answerYes{}, \answerNo{}, or \answerNA{}.
    \item[] Justification: We clearly state the claims reflecting contributions and scope of the paper.
    \item[] Guidelines:
    \begin{itemize}
        \item The answer NA means that the abstract and introduction do not include the claims made in the paper.
        \item The abstract and/or introduction should clearly state the claims made, including the contributions made in the paper and important assumptions and limitations. A No or NA answer to this question will not be perceived well by the reviewers. 
        \item The claims made should match theoretical and experimental results, and reflect how much the results can be expected to generalize to other settings. 
        \item It is fine to include aspirational goals as motivation as long as it is clear that these goals are not attained by the paper. 
    \end{itemize}

\item {\bf Limitations}
    \item[] Question: Does the paper discuss the limitations of the work performed by the authors?
    \item[] Answer: \answerYes{} % Replace by \answerYes{}, \answerNo{}, or \answerNA{}.
    \item[] Justification: We discuss limitations as future work in \Cref{sec:con}.
    \item[] Guidelines:
    \begin{itemize}
        \item The answer NA means that the paper has no limitation while the answer No means that the paper has limitations, but those are not discussed in the paper. 
        \item The authors are encouraged to create a separate "Limitations" section in their paper.
        \item The paper should point out any strong assumptions and how robust the results are to violations of these assumptions (e.g., independence assumptions, noiseless settings, model well-specification, asymptotic approximations only holding locally). The authors should reflect on how these assumptions might be violated in practice and what the implications would be.
        \item The authors should reflect on the scope of the claims made, e.g., if the approach was only tested on a few datasets or with a few runs. In general, empirical results often depend on implicit assumptions, which should be articulated.
        \item The authors should reflect on the factors that influence the performance of the approach. For example, a facial recognition algorithm may perform poorly when image resolution is low or images are taken in low lighting. Or a speech-to-text system might not be used reliably to provide closed captions for online lectures because it fails to handle technical jargon.
        \item The authors should discuss the computational efficiency of the proposed algorithms and how they scale with dataset size.
        \item If applicable, the authors should discuss possible limitations of their approach to address problems of privacy and fairness.
        \item While the authors might fear that complete honesty about limitations might be used by reviewers as grounds for rejection, a worse outcome might be that reviewers discover limitations that aren't acknowledged in the paper. The authors should use their best judgment and recognize that individual actions in favor of transparency play an important role in developing norms that preserve the integrity of the community. Reviewers will be specifically instructed to not penalize honesty concerning limitations.
    \end{itemize}

\item {\bf Theory assumptions and proofs}
    \item[] Question: For each theoretical result, does the paper provide the full set of assumptions and a complete (and correct) proof?
    \item[] Answer: \answerYes{} % \answerYes{} % Replace by \answerYes{}, \answerNo{}, or \answerNA{}.
    \item[] Justification: All theoretical results are accompanied by clearly stated assumptions in the main text, and complete, formal proofs are provided in the appendix.
    %\justificationTODO{} %We provide full assumptions and complete proofs, with proof sketches in the main text and full details in the appendix.
    \item[] Guidelines:
    \begin{itemize}
        \item The answer NA means that the paper does not include theoretical results. 
        \item All the theorems, formulas, and proofs in the paper should be numbered and cross-referenced.
        \item All assumptions should be clearly stated or referenced in the statement of any theorems.
        \item The proofs can either appear in the main paper or the supplemental material, but if they appear in the supplemental material, the authors are encouraged to provide a short proof sketch to provide intuition. 
        \item Inversely, any informal proof provided in the core of the paper should be complemented by formal proofs provided in appendix or supplemental material.
        \item Theorems and Lemmas that the proof relies upon should be properly referenced. 
    \end{itemize}

    \item {\bf Experimental result reproducibility}
    \item[] Question: Does the paper fully disclose all the information needed to reproduce the main experimental results of the paper to the extent that it affects the main claims and/or conclusions of the paper (regardless of whether the code and data are provided or not)?
    \item[] Answer: \answerNA{} % Replace by \answerYes{}, \answerNo{}, or \answerNA{}.
    \item[] Justification: The paper does not include experiments.
    \item[] Guidelines:
    \begin{itemize}
        \item The answer NA means that the paper does not include experiments.
        \item If the paper includes experiments, a No answer to this question will not be perceived well by the reviewers: Making the paper reproducible is important, regardless of whether the code and data are provided or not.
        \item If the contribution is a dataset and/or model, the authors should describe the steps taken to make their results reproducible or verifiable. 
        \item Depending on the contribution, reproducibility can be accomplished in various ways. For example, if the contribution is a novel architecture, describing the architecture fully might suffice, or if the contribution is a specific model and empirical evaluation, it may be necessary to either make it possible for others to replicate the model with the same dataset, or provide access to the model. In general. releasing code and data is often one good way to accomplish this, but reproducibility can also be provided via detailed instructions for how to replicate the results, access to a hosted model (e.g., in the case of a large language model), releasing of a model checkpoint, or other means that are appropriate to the research performed.
        \item While NeurIPS does not require releasing code, the conference does require all submissions to provide some reasonable avenue for reproducibility, which may depend on the nature of the contribution. For example
        \begin{enumerate}
            \item If the contribution is primarily a new algorithm, the paper should make it clear how to reproduce that algorithm.
            \item If the contribution is primarily a new model architecture, the paper should describe the architecture clearly and fully.
            \item If the contribution is a new model (e.g., a large language model), then there should either be a way to access this model for reproducing the results or a way to reproduce the model (e.g., with an open-source dataset or instructions for how to construct the dataset).
            \item We recognize that reproducibility may be tricky in some cases, in which case authors are welcome to describe the particular way they provide for reproducibility. In the case of closed-source models, it may be that access to the model is limited in some way (e.g., to registered users), but it should be possible for other researchers to have some path to reproducing or verifying the results.
        \end{enumerate}
    \end{itemize}

\item {\bf Open access to data and code}
    \item[] Question: Does the paper provide open access to the data and code, with sufficient instructions to faithfully reproduce the main experimental results, as described in supplemental material?
    \item[] Answer: \answerNA{} % Replace by \answerYes{}, \answerNo{}, or \answerNA{}.
    \item[] Justification: The paper does not include experiments.
    \item[] Guidelines:
    \begin{itemize}
        \item The answer NA means that paper does not include experiments requiring code.
        \item Please see the NeurIPS code and data submission guidelines (\url{https://nips.cc/public/guides/CodeSubmissionPolicy}) for more details.
        \item While we encourage the release of code and data, we understand that this might not be possible, so “No” is an acceptable answer. Papers cannot be rejected simply for not including code, unless this is central to the contribution (e.g., for a new open-source benchmark).
        \item The instructions should contain the exact command and environment needed to run to reproduce the results. See the NeurIPS code and data submission guidelines (\url{https://nips.cc/public/guides/CodeSubmissionPolicy}) for more details.
        \item The authors should provide instructions on data access and preparation, including how to access the raw data, preprocessed data, intermediate data, and generated data, etc.
        \item The authors should provide scripts to reproduce all experimental results for the new proposed method and baselines. If only a subset of experiments are reproducible, they should state which ones are omitted from the script and why.
        \item At submission time, to preserve anonymity, the authors should release anonymized versions (if applicable).
        \item Providing as much information as possible in supplemental material (appended to the paper) is recommended, but including URLs to data and code is permitted.
    \end{itemize}

\item {\bf Experimental setting/details}
    \item[] Question: Does the paper specify all the training and test details (e.g., data splits, hyperparameters, how they were chosen, type of optimizer, etc.) necessary to understand the results?
    \item[] Answer: \answerNA{} % Replace by \answerYes{}, \answerNo{}, or \answerNA{}.
    \item[] Justification: The paper does not include experiments.
    \item[] Guidelines:
    \begin{itemize}
        \item The answer NA means that the paper does not include experiments.
        \item The experimental setting should be presented in the core of the paper to a level of detail that is necessary to appreciate the results and make sense of them.
        \item The full details can be provided either with the code, in appendix, or as supplemental material.
    \end{itemize}

\item {\bf Experiment statistical significance}
    \item[] Question: Does the paper report error bars suitably and correctly defined or other appropriate information about the statistical significance of the experiments?
    \item[] Answer: \answerNA{} % Replace by \answerYes{}, \answerNo{}, or \answerNA{}.
    \item[] Justification: The paper does not include experiments.
    \item[] Guidelines:
    \begin{itemize}
        \item The answer NA means that the paper does not include experiments.
        \item The authors should answer "Yes" if the results are accompanied by error bars, confidence intervals, or statistical significance tests, at least for the experiments that support the main claims of the paper.
        \item The factors of variability that the error bars are capturing should be clearly stated (for example, train/test split, initialization, random drawing of some parameter, or overall run with given experimental conditions).
        \item The method for calculating the error bars should be explained (closed form formula, call to a library function, bootstrap, etc.)
        \item The assumptions made should be given (e.g., Normally distributed errors).
        \item It should be clear whether the error bar is the standard deviation or the standard error of the mean.
        \item It is OK to report 1-sigma error bars, but one should state it. The authors should preferably report a 2-sigma error bar than state that they have a 96\% CI, if the hypothesis of Normality of errors is not verified.
        \item For asymmetric distributions, the authors should be careful not to show in tables or figures symmetric error bars that would yield results that are out of range (e.g. negative error rates).
        \item If error bars are reported in tables or plots, The authors should explain in the text how they were calculated and reference the corresponding figures or tables in the text.
    \end{itemize}

\item {\bf Experiments compute resources}
    \item[] Question: For each experiment, does the paper provide sufficient information on the computer resources (type of compute workers, memory, time of execution) needed to reproduce the experiments?
    \item[] Answer: \answerNA{} % Replace by \answerYes{}, \answerNo{}, or \answerNA{}.
    \item[] Justification: The paper does not include experiments.
    %\item[] Justification: This paper presents theoretical results and does not include experiments.
    \item[] Guidelines:
    \begin{itemize}
        \item The answer NA means that the paper does not include experiments.
        \item The paper should indicate the type of compute workers CPU or GPU, internal cluster, or cloud provider, including relevant memory and storage.
        \item The paper should provide the amount of compute required for each of the individual experimental runs as well as estimate the total compute. 
        \item The paper should disclose whether the full research project required more compute than the experiments reported in the paper (e.g., preliminary or failed experiments that didn't make it into the paper). 
    \end{itemize}
    
\item {\bf Code of ethics}
    \item[] Question: Does the research conducted in the paper conform, in every respect, with the NeurIPS Code of Ethics \url{https://neurips.cc/public/EthicsGuidelines}?
    \item[] Answer: \answerYes{} % Replace by \answerYes{}, \answerNo{}, or \answerNA{}.
    \item[] Justification: This research is purely theoretical and does not involve human subjects, personal data, or applications with foreseeable societal harm.
    \item[] Guidelines:
    \begin{itemize}
        \item The answer NA means that the authors have not reviewed the NeurIPS Code of Ethics.
        \item If the authors answer No, they should explain the special circumstances that require a deviation from the Code of Ethics.
        \item The authors should make sure to preserve anonymity (e.g., if there is a special consideration due to laws or regulations in their jurisdiction).
    \end{itemize}

\item {\bf Broader impacts}
    \item[] Question: Does the paper discuss both potential positive societal impacts and negative societal impacts of the work performed?
    \item[] Answer: \answerYes{} % Replace by \answerYes{}, \answerNo{}, or \answerNA{}.
    \item[] Justification: We discuss the potential positive impacts in the abstract and conclusion.
    \item[] Guidelines:
    \begin{itemize}
        \item The answer NA means that there is no societal impact of the work performed.
        \item If the authors answer NA or No, they should explain why their work has no societal impact or why the paper does not address societal impact.
        \item Examples of negative societal impacts include potential malicious or unintended uses (e.g., disinformation, generating fake profiles, surveillance), fairness considerations (e.g., deployment of technologies that could make decisions that unfairly impact specific groups), privacy considerations, and security considerations.
        \item The conference expects that many papers will be foundational research and not tied to particular applications, let alone deployments. However, if there is a direct path to any negative applications, the authors should point it out. For example, it is legitimate to point out that an improvement in the quality of generative models could be used to generate deepfakes for disinformation. On the other hand, it is not needed to point out that a generic algorithm for optimizing neural networks could enable people to train models that generate Deepfakes faster.
        \item The authors should consider possible harms that could arise when the technology is being used as intended and functioning correctly, harms that could arise when the technology is being used as intended but gives incorrect results, and harms following from (intentional or unintentional) misuse of the technology.
        \item If there are negative societal impacts, the authors could also discuss possible mitigation strategies (e.g., gated release of models, providing defenses in addition to attacks, mechanisms for monitoring misuse, mechanisms to monitor how a system learns from feedback over time, improving the efficiency and accessibility of ML).
    \end{itemize}
    
\item {\bf Safeguards}
    \item[] Question: Does the paper describe safeguards that have been put in place for responsible release of data or models that have a high risk for misuse (e.g., pretrained language models, image generators, or scraped datasets)?
    \item[] Answer: \answerNA{} % Replace by \answerYes{}, \answerNo{}, or \answerNA{}.
    \item[] Justification: This paper poses no such risks.
    \item[] Guidelines:
    \begin{itemize}
        \item The answer NA means that the paper poses no such risks.
        \item Released models that have a high risk for misuse or dual-use should be released with necessary safeguards to allow for controlled use of the model, for example by requiring that users adhere to usage guidelines or restrictions to access the model or implementing safety filters. 
        \item Datasets that have been scraped from the Internet could pose safety risks. The authors should describe how they avoided releasing unsafe images.
        \item We recognize that providing effective safeguards is challenging, and many papers do not require this, but we encourage authors to take this into account and make a best faith effort.
    \end{itemize}

\item {\bf Licenses for existing assets}
    \item[] Question: Are the creators or original owners of assets (e.g., code, data, models), used in the paper, properly credited and are the license and terms of use explicitly mentioned and properly respected?
    \item[] Answer: \answerNA{} % Replace by \answerYes{}, \answerNo{}, or \answerNA{}.
    \item[] Justification: The paper does not use existing assets.
    \item[] Guidelines:
    \begin{itemize}
        \item The answer NA means that the paper does not use existing assets.
        \item The authors should cite the original paper that produced the code package or dataset.
        \item The authors should state which version of the asset is used and, if possible, include a URL.
        \item The name of the license (e.g., CC-BY 4.0) should be included for each asset.
        \item For scraped data from a particular source (e.g., website), the copyright and terms of service of that source should be provided.
        \item If assets are released, the license, copyright information, and terms of use in the package should be provided. For popular datasets, \url{paperswithcode.com/datasets} has curated licenses for some datasets. Their licensing guide can help determine the license of a dataset.
        \item For existing datasets that are re-packaged, both the original license and the license of the derived asset (if it has changed) should be provided.
        \item If this information is not available online, the authors are encouraged to reach out to the asset's creators.
    \end{itemize}

\item {\bf New assets}
    \item[] Question: Are new assets introduced in the paper well documented and is the documentation provided alongside the assets?
    \item[] Answer: \answerNA{} % Replace by \answerYes{}, \answerNo{}, or \answerNA{}.
    \item[] Justification: The paper does not release new assets.
    \item[] Guidelines:
    \begin{itemize}
        \item The answer NA means that the paper does not release new assets.
        \item Researchers should communicate the details of the dataset/code/model as part of their submissions via structured templates. This includes details about training, license, limitations, etc. 
        \item The paper should discuss whether and how consent was obtained from people whose asset is used.
        \item At submission time, remember to anonymize your assets (if applicable). You can either create an anonymized URL or include an anonymized zip file.
    \end{itemize}

\item {\bf Crowdsourcing and research with human subjects}
    \item[] Question: For crowdsourcing experiments and research with human subjects, does the paper include the full text of instructions given to participants and screenshots, if applicable, as well as details about compensation (if any)? 
    \item[] Answer: \answerNA{} % Replace by \answerYes{}, \answerNo{}, or \answerNA{}.
    \item[] Justification: This paper does not involve crowdsourcing nor research with human subjects.
    \item[] Guidelines:
    \begin{itemize}
        \item The answer NA means that the paper does not involve crowdsourcing nor research with human subjects.
        \item Including this information in the supplemental material is fine, but if the main contribution of the paper involves human subjects, then as much detail as possible should be included in the main paper. 
        \item According to the NeurIPS Code of Ethics, workers involved in data collection, curation, or other labor should be paid at least the minimum wage in the country of the data collector. 
    \end{itemize}

\item {\bf Institutional review board (IRB) approvals or equivalent for research with human subjects}
    \item[] Question: Does the paper describe potential risks incurred by study participants, whether such risks were disclosed to the subjects, and whether Institutional Review Board (IRB) approvals (or an equivalent approval/review based on the requirements of your country or institution) were obtained?
    \item[] Answer: \answerNA{} % Replace by \answerYes{}, \answerNo{}, or \answerNA{}.
    \item[] Justification: This paper does not involve crowdsourcing nor research with human subjects.
    \item[] Guidelines:
    \begin{itemize}
        \item The answer NA means that the paper does not involve crowdsourcing nor research with human subjects.
        \item Depending on the country in which research is conducted, IRB approval (or equivalent) may be required for any human subjects research. If you obtained IRB approval, you should clearly state this in the paper. 
        \item We recognize that the procedures for this may vary significantly between institutions and locations, and we expect authors to adhere to the NeurIPS Code of Ethics and the guidelines for their institution. 
        \item For initial submissions, do not include any information that would break anonymity (if applicable), such as the institution conducting the review.
    \end{itemize}

\item {\bf Declaration of LLM usage}
    \item[] Question: Does the paper describe the usage of LLMs if it is an important, original, or non-standard component of the core methods in this research? Note that if the LLM is used only for writing, editing, or formatting purposes and does not impact the core methodology, scientific rigorousness, or originality of the research, declaration is not required.
    %this research? 
    \item[] Answer: \answerNA{} % Replace by \answerYes{}, \answerNo{}, or \answerNA{}.
    \item[] Justification: The core methodological development in this research does not involve LLMs in any important, original, or non-standard way.
    \item[] Guidelines:
    \begin{itemize}
        \item The answer NA means that the core method development in this research does not involve LLMs as any important, original, or non-standard components.
        \item Please refer to our LLM policy (\url{https://neurips.cc/Conferences/2025/LLM}) for what should or should not be described.
    \end{itemize}

\end{enumerate}
\fi

\fi

\end{document}